\newcommand{\bE}{\mathbb{E}}
\newcommand{\bP}{\mathbb{P}}
\newcommand{\bR}{\mathbb{R}}
\newcommand{\cN}{\mathcal{N}}
\newcommand{\KL}{\mathop{\mathrm{KL}}}
\newcommand{\TV}{\mathop{\mathrm{TV}}}
\newcommand{\argmin}{\mathop{\mathrm{argmin}}}
\newcommand{\argmax}{\mathop{\mathrm{argmax}}}
\newcommand{\simiid}{\mathop{\mathrm{\sim}}\limits^{\text{i.i.d.}}}
\newtheorem{theorem}{Theorem}
\newtheorem{lemma}{Lemma}
\newtheorem{prop}{Proposition}
\let\hat\widehat
\title{Minimax Theory for High-dimensional Gaussian Mixtures with
Sparse Mean Separation\footnote{This work is supported in part by NSF grant IIS-1116458 and NSF CAREER award IIS-1252412.}}
\date{}
\author{Martin Azizyan \qquad Aarti Singh \qquad Larry Wasserman\\ Carnegie-Mellon University}
\begin{document}

\maketitle

\begin{abstract}
While several papers have investigated computationally and statistically 
efficient methods for learning Gaussian mixtures, 
precise minimax bounds
for their statistical performance as well as fundamental limits in high-dimensional 
settings are not well-understood. In this paper, we provide precise information theoretic 
bounds on the clustering accuracy and sample complexity of learning a mixture of two isotropic Gaussians in high
dimensions under small mean separation. 
If there 
is a sparse subset of relevant dimensions that determine the mean separation, 
then the sample complexity only depends on the number of relevant dimensions and mean
separation, and can be achieved by a simple computationally efficient procedure. 
Our results provide the first step of a theoretical basis for recent
methods that combine feature selection and clustering.
\end{abstract}
\vspace{-0.1in}

\section{Introduction}
\vspace{-0.05in}
Gaussian mixture models provide a simple 
framework for
several 
machine learning problems including clustering, density estimation
and classification.
Mixtures are especially appealing in high dimensional problems.
Perhaps the most common use of Gaussian mixtures
is for clustering.
Of course,
the statistical (and computational) behavior
of these methods can degrade
in high dimensions.
Inspired by the success of variable selection methods in regression,
several authors have considered
variable selection
for clustering.
However, there appears to be no theoretical
results justifying the advantage of variable selection in high dimensional
setting. 

To see why some sort of variable selection
might be useful, consider
clustering $n$ subjects using a vector of
$d$ genes for each subject.
Typically $d$ is much larger than $n$
which suggests that statistical clustering methods
will perform poorly.
However, it may be the case that there are only
a small number of relevant genes
in which case we might expect better behavior
by focusing on this small set of relevant genes.

The purpose of this paper
is to provide precise bounds on clustering error
with mixtures of Gaussians.
We consider both the general case where all features
are relevant, and the special case where
only a subset of features are relevant.
Mathematically, we model an irrelevant feature by
requiring the mean of that feature to be the same
across clusters, so that the feature does not serve
 to differentiate the groups. 
Throughout this paper,
we use the probability of misclustering
an observation, relative to the optimal clustering
if we had known the true distribution,
as our loss function.
This is akin to using excess risk in classification.

This paper makes the following contributions:
\vspace{-0.05in}
\begin{itemize}
\item We provide information theoretic bounds on the sample complexity 
of learning a mixture of
two isotropic Gaussians in the small mean separation setting that precisely captures
the dimension dependence, and matches known sample complexity requirements for 
some existing algorithms. This also debunks the 
myth that there is a gap between statistical and computational complexity of learning 
mixture of two isotropic Gaussians for small mean separation. 
Our bounds require non-standard arguments since our loss function does
not satisfy the triangle inequality.
\item We consider the high-dimensional setting where only a subset of relevant dimensions 
determine the mean separation between mixture components and demonstrate that learning 
is substantially easier as the sample complexity only depends on the sparse set of relevant dimensions.
This provides some theoretical basis for feature selection approaches to clustering.
\item We show that a simple computationally feasible procedure nearly achieves the information theoretic sample complexity even in high-dimensional sparse mean separation settings. 
\end{itemize}

{\bf Related Work.}
There is a long and continuing history
of research on mixtures of Gaussians.
A complete review is not feasible but we mention
some highlights of the work most related to ours.

Perhaps the most popular method for estimating 
a mixture distribution is
maximum likelihood. 
Unfortunately, maximizing the likelihood is NP-Hard. 
This has led to a 
stream of work on alternative methods
for estimating mixtures.
These new algorithms use
pairwise distances, spectral methods or the method of moments.

Pairwise methods are
developed in
\cite{dasgupta1999learning}, \cite{schulman00} and
\cite{sanjeev2001learning}.
These methods require the mean separation to increase
with dimension. The first one requires the separation to be 
$\sqrt{d}$ while the latter two improve it to $d^{1/4}$.
To avoid this problem,
\cite{vempala2004spectral}
introduced the idea of using spectral methods
for estimating mixtures of spherical Gaussians which makes 
mean separation independent of dimension.
The assumption that the components
are spherical
was removed in
\cite{brubaker2008isotropic}. 
Their method only requires the 
components to be separated by a hyperplane 
and runs in polynomial time,  
but requires 
$n = \Omega(d^4 \log d)$
samples.
Other spectral methods include
\cite{kannan2005spectral}, \cite{achlioptas2005spectral} and
\cite{hsu2013learning}.
The latter uses clever spectral decompositions together
with the method of moments to derive
an effective algorithm.

\cite{kalai2012disentangling}
use the method of moments to get estimates
without requiring separation between components of the mixture components.
A similar approach is given in
\cite{belkin2010polynomial}.
\cite{chaudhuri2009learning}
give a modified $k$-means algorithm for
estimating a mixture of two Gaussians.
For the large
mean separation setting $\mu > 1$, \cite{chaudhuri2009learning} 
show that 
$n = \tilde \Omega(d/\mu^2)$ samples
are needed. They also provide an information theoretic 
bound on the necessary  
sample complexity of any algorithm which matches the sample complexity 
of their method 
(up to log factors) in $d$ and $\mu$. 
When the mean separation is small $\mu < 1$, 
they show that
$n = \tilde\Omega(d/\mu^4)$ samples
are sufficient for accurate estimation. Our results for the 
small mean separation setting provide a 
matching necessary condition.

Most of these papers
are concerned with computational efficiency
and do not give precise,
statistical minimax upper and lower bounds.
None of them deal with the case we are interested in,
namely, a high dimensional mixture with sparse
mean separation.

We should also point out that the results in
different papers are not necessarily
comparable since
different authors use different loss functions.
In this paper we use the probability
of misclassifying a future observation,
relative to how the correct distribution
clusters the observation, as our loss function.
This should not be confused with
the probability of attributing a new observation
to the wrong component of the mixture.
The latter loss does not to tend to zero
as the sample size increases.
Our loss is similar to the excess risk
used in classification where
we compare the misclassification rate of a classifier 
to the misclassification rate of the Bayes optimal classifier.

Finally,
we remind the reader that
our motivation for studying
sparsely separated mixtures
is that this provides a model for
variable selection in clustering problems.
There are some relevant recent
papers on this problem in the high-dimensional setting.
\citet{pan2007penalized}
use penalized mixture models
to do variable selection
and clustering simultaneously.
\cite{witten2010framework} develop
a penalized version of $k$-means clustering.
Related methods include
\citet{raftery2006variable,sun2012regularized}
and \citet{guo2010pairwise}.
The applied bioinformatics literature also
contains a huge number of heuristic methods
for this problem.
None of these papers provide
minimax bounds
for the clustering error or provide theoretical 
evidence of the benefit of using variable selection
in unsupervised problems such as clustering.

\section{Problem Setup}

In this paper, we consider the simple setting of learning a mixture of two isotropic 
Gaussians with equal mixing weights, given $n$ data points $X_1, \dots, X_n \in 
\bR^d$ drawn i.i.d. from a
$d$-dimensional mixture density function 
$$p_\theta(x)=\frac{1}{2} f(x;\mu_1,\sigma^2I)+\frac{1}{2} f(x;\mu_2,\sigma^2I),$$ 
where $f(\cdot;\mu,\Sigma)$ is the density of $\cN(\mu,\Sigma)$, $\sigma>0$ is a fixed
constant, and $\theta: = (\mu_1, \mu_2)\in\Theta$. 
We consider two classes $\Theta$ of parameters:
$$
\Theta_{\lambda}=\left\{(\mu_1,\mu_2): \|\mu_1-\mu_2\|\geq\lambda\right\}
$$
$$
\Theta_{\lambda,s}=
\left\{(\mu_1,\mu_2): \|\mu_1-\mu_2\|\geq\lambda,\;\|\mu_1-\mu_2\|_0\leq s\right\}\subseteq \Theta_{\lambda}.
$$
The first class defines mixtures where the components have a mean separation of at least 
$\lambda >0$. The second class defines mixtures with mean separation $\lambda > 0$ 
along a sparse set of $s \in \{1,\dots, d\}$ dimensions. 
Also, let $P_\theta$ denote the probability measure corresponding to $p_\theta$.
Throughout the paper, we will use $\phi$ and $\Phi$ to denote the standard normal 
density and distribution functions.

For a mixture with parameter $\theta$, the Bayes optimal classification, that is, assignment 
of a point $x \in \bR^d$ to the correct mixture component,
is given by the function 
$$
F_\theta(x)=\argmax\limits_{i\in\{1,2\}} f(x;\mu_i,\sigma^2I).
$$
Given any other candidate assignment function 
$F:\bR^d\rightarrow\{1,2\}$, we define the loss incurred by $F$ as
$$
L_\theta(F)=\min\limits_\pi P_\theta(\{x:F_\theta(x)\neq \pi(F(x))\})
$$ where 
the minimum is over all permutations $\pi:\{1,2\}\rightarrow\{1,2\}$. 
This is the probability of misclustering relative to
an oracle that uses the true distribution
to do optimal clustering.

We denote by $\widehat F_n$ any assignment function learned from
the data
$X_1, \dots, X_n$, also referred to as estimator.  The goal of this
paper is to quantify how the minimax expected loss (worst case
expected loss for the best estimator) 
$$
R_n \equiv \inf_{\widehat   F_n}\sup_{\theta\in\Theta} \bE_\theta L_\theta(\widehat F_n)
$$ 
scales
with number of samples $n$, the dimension of the feature space $d$, 
the number of
relevant dimensions $s$, and the signal-to-noise ratio defined as the
ratio of mean separation to standard deviation $\lambda/\sigma$.  We
will also demonstrate a specific estimator that achieves the minimax
scaling.


For the purposes of this paper,
we say that feature $j$ is irrelevant if
$\mu_1(j) = \mu_2(j)$.
Otherwise we say that feature $j$ is relevant.

\section{Minimax Bounds}

\subsection{Small mean separation setting without sparsity}

We begin without assuming any sparsity, that is,
all features are relevant. In this case, comparing the projections
of the data to the projection of the sample mean onto the first 
principal component suffices to achieve both minimax optimal sample
complexity and clustering loss. 

\begin{theorem}[Upper bound]
\label{thm:u5-ns-upper}
Define
$$
\widehat{F}_n(x) = \left\{
\begin{array}{rl}
1 & \mbox{if } x^Tv_1(\widehat{\Sigma}_n)\geq \widehat{\mu}_n^Tv_1(\widehat{\Sigma}_n) \\
2 & \mbox{otherwise.} 
\end{array}
\right.
$$
where $\widehat \mu_n = n^{-1}\sum^n_{i=1} X_i$ is the sample mean, 
$\widehat \Sigma_n = n^{-1} \sum^n_{i=1}(X_i - \widehat \mu_n)(X_i - \widehat \mu_n)^T$
is the sample covariance and $v_1(\widehat \Sigma_n)$ denotes the eigenvector 
corresponding to the largest eigenvalue of $\widehat \Sigma_n$.
If $n\geq \max(68,4d)$, then
\begin{align*}
\sup_{\theta\in\Theta_\lambda} \bE_\theta L_\theta(\widehat{F})
\leq 600 \max\left(\frac{4\sigma^2}{\lambda^2}, 1\right)   \sqrt{\frac{d\log(nd)}{n}}.
\end{align*}
Furthermore, if $\frac{\lambda}{\sigma} \geq 2\max(80,14\sqrt{5d})$, then
\begin{align*}
\sup_{\theta\in\Theta_\lambda} \bE_\theta L_\theta(\widehat{F})
\leq 17\exp\left(-\frac{n}{32}\right)
+ 9\exp\left(-\frac{\lambda^2}{80\sigma^2} \right).
\end{align*}
\end{theorem}

\begin{theorem}[Lower bound]
\label{thm:lb-nonsparse}
Assume that $d\geq 9$ and $\frac{\lambda}{\sigma}\leq 0.2$.
Then
\begin{align*}
\inf_{\widehat{F}_n}\sup_{\theta\in\Theta_\lambda} \bE_\theta L_\theta(\widehat{F}_n) 
&\geq \frac{1}{500}   
\min\left\{\frac{\sqrt{\log 2}}{3}\frac{\sigma^2}{\lambda^2}\sqrt{\frac{d-1}{n}}, \frac{1}{4}\right\}.
\end{align*}
\end{theorem}

We believe that some of the constants (including lower 
bound on $d$ and exact upper bound on $\lambda/\sigma$) 
can be tightened, but the results demonstrate matching scaling behavior
of clustering error with $d$, $n$ and $\lambda/\sigma$.
Thus, we see (ignoring constants and log terms) that
$$
R_n \approx 
\frac{\sigma^2}{\lambda^2} \sqrt{\frac{d}{n}},
\quad
\text{or equivalently}
\quad
n \approx \frac{d}{\lambda^4/\sigma^4} \ \text{for a constant target value of} \ R_n.
$$
The result is quite intuitive:
the dependence on dimension $d$ is as expected.
Also we see that the rate depends
in a precise way on the signal-to-noise ratio
$\lambda/\sigma$. 
In particular,
the results imply that we need $d\leq n$. 

In modern high-dimensional datasets, we often have $d> n$ i.e. 
large number of features and not enough samples. However, inference
is usually tractable since not all features are relevant to the learning task
at hand. This sparsity of relevant feature set has been successfully exploited
in supervised learning problems such as regression and classification. 
We show next that the same is true for clustering under the Gaussian mixture model.

\subsection{Sparse and small mean separation setting}

Now we consider the case where there are $s < d$ relevant features.
Let $S$ denote the set of relevant features.
We begin by constructing an estimator $\hat S_n$ of $S$ as follows.
Define
$$
\widehat{\tau}_n = \frac{1+\alpha}{1-\alpha} \min_{i\in \{1,\dots, d\}} \widehat{\Sigma}_n(i,i),
$$
where 
$$
\alpha = \sqrt{\frac{6\log (nd) }{n}}  + \frac{2\log(nd)}{n}.
$$
Now let
$$
\widehat{S}_n=\{i\in \{1,\dots, d\}: \widehat{\Sigma}_n(i,i) > \widehat{\tau}_n \}.
$$

Now we use the same method as before, but using only the features 
in $\widehat S_n$ identified as relevant.

\begin{theorem}[Upper bound]
\label{thm:upperbound-new-support-recovery-plus-estimation}
Define
$$
\widehat{F}_n(x) = \left\{
\begin{array}{rl}
1 & \mbox{if } x_{\widehat{S}_n}^Tv_1(\widehat{\Sigma}_{\widehat{S}_n})\geq 
\widehat{\mu}_{\widehat{S}_n}^Tv_1(\widehat{\Sigma}_{\widehat{S}_n}) \\
2 & \mbox{otherwise} 
\end{array}
\right.
$$
where $\widehat{S}_n$ is the estimated set of relevant dimensions, 
$x_{\widehat{S}_n}$ are the coordinates of $x$ restricted to the dimensions in 
$\widehat{S}_n$, and $\widehat{\mu}_{\widehat{S}_n}$ and $\widehat{\Sigma}_{\widehat{S}_n}$ are 
the sample mean and covariance of the data restricted to the dimensions in 
$\widehat{S}_n$. 
If $n\geq \max(68,4s)$, $d\geq2$ and $\alpha \leq \frac{1}{4}$, then
\begin{align*}
\sup_{\theta\in\Theta_{\lambda,s}}\bE_\theta L_\theta(\widehat{F})
\leq 603 \max\left(\frac{16\sigma^2}{\lambda^2}, 1\right)   \sqrt{\frac{s\log(ns)}{n}} + 220\frac{\sigma\sqrt{s}}{\lambda} \left(\frac{\log (nd) }{n}\right)^{\frac{1}{4}}.
\end{align*}
\end{theorem}

Next we find the lower bound.

\begin{theorem}[Lower bound]
\label{thm:lb-sparse}
Assume that $\frac{\lambda}{\sigma}\leq 0.2$, $d\geq 17$, and
that
$5 \leq s\leq \frac{d-1}{4}+1$.
Then
\begin{align*}
\inf_{\widehat{F}_n}\sup_{\theta\in\Theta_{\lambda,s}} \bE_\theta L_\theta(\widehat{F}_n) 
&\geq  \frac{1}{600} 
\min\left\{ \sqrt{\frac{8}{45}}  
\frac{\sigma^2}{\lambda^2} \sqrt{\frac{s-1}{n} \log \left(\frac{d-1}{s-1}\right) }, \frac{1}{2}  \right\} .
\end{align*}
\end{theorem}

We remark again that the constants in our bounds can be tightened, but 
the results suggest that 
$$
\hspace{-1.3cm}
\frac{\sigma}{\lambda} \left(\frac{s^2\log d}{n}\right)^{1/4} \succ
R_n \succ
\frac{\sigma^2}{\lambda^2} \sqrt{\frac{s\log d}{n}},
$$
$$
\text{or}
\quad
n = \Omega\left(\frac{s^2 \log d}{\lambda^4/\sigma^4}\right) 
\quad \text{for a constant target value of} \ R_n.
$$
In this case, we have a gap between the upper and lower bounds 
for the clustering loss. Also, the sample complexity can possibly be 
improved to scale as $s$ (instead of $s^2$) using a different method.
However, notice that the dimension 
only enters logarithmically.
If the number of relevant dimensions is small then we can expect
good rates. This provides some justification for feature selection.
We conjecture that the lower bound is tight and that the gap
could be closed by using a sparse
principal component method as in
\cite{vu2012minimax}
to find the relevant features.
However, that method is combinatorial
and so far there is no known 
computationally efficient method for implementing it with similar guarantees.

We note that the upper bound is achieved by a two-stage method
that first finds the relevant dimensions and then estimates
the clusters.
This is in contrast to the methods
described in the introduction which
do clustering and variable selection simultaneously.
This raises an interesting question:
is it always possible to achieve the minimax rate
with a two-stage procedure or are there cases
where a simultaneous method outperforms
a two-stage procedure?
Indeed, it is possible that
in the case of general covariance matrices
(non-spherical) two-stage methods might fail.
We hope to address this question in future work.

\section{Proofs of the Lower Bounds}

The lower bounds for estimation problems rely on a standard reduction from expected error to hypothesis testing that assumes the loss function is a semi-distance, which the clustering loss isn't.
However, a local triangle inequality-type bound can be shown (Proposition \ref{thm:clusterloss-triangle}).
This weaker condition can then be used to lower-bound the expected loss, as stated in Proposition \ref{thm:lowerbnd-gen} (which follows easily from Fano's inequality).

The proof techniques of the sparse and non-sparse lower bounds are almost identical.
The main difference is that in the non-sparse case, we use the Varshamov--Gilbert bound (Lemma \ref{thm:varshamov}) to construct a set of sufficiently dissimilar hypotheses, whereas in the sparse case we use an analogous result for sparse hypercubes (Lemma \ref{thm:varshamovsparse}).
See the appendix for complete proofs of all results.




\begin{lemma}[Varshamov--Gilbert bound]\label{thm:varshamov}
Let $\Omega=\{0,1\}^m$ for $m\geq8$.
There exists a subset $\{\omega_0,...,\omega_M\}\subseteq\Omega$ such that $\omega_0=(0,...,0)$, $\rho(\omega_i,\omega_j)\geq \frac{m}{8}$ for all $0\leq i<j\leq M$, and $M\geq 2^{m/8}$, where $\rho$ denotes the Hamming distance between two vectors (\cite{tsybakov2009}).
\end{lemma}

\begin{lemma}\label{thm:varshamovsparse}
Let $\Omega=\{\omega\in\{0,1\}^m:\|\omega\|_0=s\}$ for integers $m>s\geq1$ such that $s\leq m/4$.
There exist $\omega_0,...,\omega_M\in\Omega$ such that $\rho(\omega_i,\omega_j)>s/2$ for all $0\leq i<j\leq M$, and $\log(M+1)\geq \frac{s}{5}  \log\left(\frac{m}{s}\right)$ (\cite{massart2007}, Lemma 4.10).
\end{lemma}
%
\begin{prop}\label{thm:lowerbnd-gen}
Let $\theta_0,...,\theta_M\in\Theta_{\lambda}$ (or $\Theta_{\lambda,s}$), $M\geq 2$, $0<\alpha<1/8$, and $\gamma>0$.
If for all  $1\leq i\leq M$,
$
\KL(P_{\theta_i},P_{\theta_0})\leq \frac{\alpha \log M}{n},
$
and if 
$
L_{\theta_i}(\widehat{F})<\gamma
$
implies
$
L_{\theta_j}(\widehat{F}) \geq \gamma
$
for all $0\leq i\neq j \leq M$ and clusterings $\widehat{F}$,
then
$
\inf_{\widehat{F}_n}\max_{i\in[0..M]} \bE_{\theta_i} L_{\theta_i}(\widehat{F}_n) \geq 0.07\gamma.
$
\end{prop}

\begin{prop}\label{thm:clusterloss-triangle}
For any $\theta,\theta'\in\Theta_{\lambda}$, and any clustering $\widehat{F}$, let $\tau=L_\theta(\widehat{F})+\sqrt{\KL(P_\theta,P_{\theta'})/2}$.
If
$
L_\theta(F_{\theta'})+ \tau \leq 1/2,
$
then
$
L_\theta(F_{\theta'})-\tau
\leq L_{\theta'}(\widehat{F})
\leq L_\theta(F_{\theta'})+\tau.
$
\end{prop}

We will also need the following two results.
Let
$
\theta=(\mu_0-\mu/2,\mu_0+\mu/2)
$
and
$
\theta'=(\mu_0-\mu'/2,\mu_0+\mu'/2)
$
for $\mu_0,\mu,\mu'\in\bR^d$ such that $\|\mu\|=\|\mu'\|$, and let  $\cos\beta=\frac{|\mu^T\mu'|}{\|\mu\|^2}$ .

\begin{prop}\label{thm:clusterloss-bnd-new}
Let $g(x)=\phi(x)(\phi(x)-x\Phi(-x))$.
Then
$
2g\left(\frac{\|\mu\|}{2\sigma}\right) \sin\beta \cos\beta \leq
L_\theta(F_{\theta'})
\leq \frac{\tan\beta}{\pi}.
$
\end{prop}
\begin{prop}\label{thm:KL-symmetric}
Let  $\xi=\frac{\|\mu\|}{2\sigma}$.
Then
$
\KL(P_\theta,P_{\theta'}) \leq \xi^4 (1-\cos\beta)
$.
\end{prop}

{\em Proof of Theorem \ref{thm:lb-nonsparse}.}
Let $\xi=\frac{\lambda}{2\sigma}$, and define 
$
\epsilon=\min\left\{\frac{\sqrt{\log 2}}{3}\frac{\sigma^2}{\lambda}\frac{1}{\sqrt{n}}, \frac{\lambda}{4\sqrt{d-1}}\right\}.
$
Define $\lambda_0^2=\lambda^2-(d-1)\epsilon^2$.
Let $\Omega=\{0,1\}^{d-1}$.
For $\omega=(\omega(1),...,\omega(d-1))\in\Omega$, let $\mu_\omega=\lambda_0e_d+\sum_{i=1}^{d-1} (2\omega(i)-1)\epsilon e_i $ (where $\{e_i\}_{i=1}^d$ is the standard basis for $\bR^d$).
Let $\theta_\omega=\left(-\frac{\mu_\omega}{2},\frac{\mu_\omega}{2}\right)\in\Theta_\lambda$.

By Proposition \ref{thm:KL-symmetric},
$
\KL(P_{\theta_\omega},P_{\theta_\nu}) \leq \xi^4 (1-\cos\beta_{\omega,\nu})
$
where 
$
\cos\beta_{\omega,\nu}=1-\frac{2\rho(\omega,\nu)\epsilon^2}{\lambda^2}
$
and $\rho$ is the Hamming distance, so
$
\KL(P_{\theta_\omega},P_{\theta_\nu}) 
\leq \xi^4\frac{2(d-1)\epsilon^2}{\lambda^2}. 
$
By Proposition \ref{thm:clusterloss-bnd-new}, since $\cos\beta_{\omega,\nu}\geq \frac{1}{2}$,
\begin{align*}
L_{\theta_\omega}(F_{\theta_\nu})
\leq \frac{1}{\pi}\tan\beta_{\omega,\nu} 
\leq \frac{4}{\pi} \frac{\sqrt{d-1}\epsilon}{\lambda}, \mbox{ and }
\end{align*}
\begin{align*}
L_{\theta_\omega}(F_{\theta_\nu}) \geq 2g(\xi) \sin\beta_{\omega,\nu} \cos\beta_{\omega,\nu} 
\geq \sqrt{2}g(\xi) \frac{\sqrt{\rho(\omega,\nu)}\epsilon}{\lambda} 
\end{align*}
where $g(x)=\phi(x)(\phi(x)-x\Phi(-x))$.
By Lemma \ref{thm:varshamov}, there exist $\omega_0,...,\omega_M\in\Omega$ such that $M\geq 2^{(d-1)/8}$ and
$
\rho(\omega_i,\omega_j)\geq \frac{d-1}{8} 
$ 
for all $0\leq i<j\leq M$.
For simplicity of notation, let $\theta_i=\theta_{\omega_i}$ for all $i\in[0..M]$.
Then, for $i\neq j \in[0..M]$,
$$
\KL(P_{\theta_i},P_{\theta_j}) \leq \xi^4\frac{2(d-1)\epsilon^2}{\lambda^2}, \;\;
L_{\theta_i}(F_{\theta_j})
\leq \frac{4}{\pi} \frac{\sqrt{d-1}\epsilon}{\lambda} \;\; \mbox{ and } \;\;
L_{\theta_i}(F_{\theta_j}) 
\geq \frac{1}{2}g(\xi) \frac{\sqrt{d-1}\epsilon}{\lambda} .
$$
Define 
$
\gamma=\frac{1}{4}(g(\xi)-2\xi^2) \frac{\sqrt{d-1}\epsilon}{\lambda}.
$
Then for any $i\neq j\in[0..M]$, and any $\widehat{F}$ such that $L_{\theta_i}(\widehat{F})<\gamma$,
\begin{align*}
L_{\theta_i}(F_{\theta_j}) + L_{\theta_i}(\widehat{F}) + \sqrt{\frac{\KL(P_{\theta_i},P_{\theta_j})}{2}}
&<  \left(\frac{4}{\pi}  + \frac{1}{4}(g(\xi)-2\xi^2) + \xi^2 \right) \frac{\sqrt{d-1}\epsilon}{\lambda} \leq \frac{1}{2}
\end{align*}
because, for $\xi\leq 0.1$, by definition of $\epsilon$,
\begin{align*}
\left(\frac{4}{\pi}  + \frac{1}{4}(g(\xi)-2\xi^2) + \xi^2 \right) \frac{\sqrt{d-1}\epsilon}{\lambda} \leq 
2 \frac{\sqrt{d-1}\epsilon}{\lambda} \leq \frac{1}{2}.
\end{align*}
So, by Proposition \ref{thm:clusterloss-triangle},
$
L_{\theta_j}(\widehat{F})
\geq \gamma.
$
Also, 
$
\KL(P_{\theta_i},P_{\theta_0})
\leq (d-1) \xi^4\frac{2\epsilon^2}{\lambda^2} 
\leq \frac{ \log M}{9 n}
$
for all $1\leq i\leq M$,
because, by definition of $\epsilon$,
$
\xi^4\frac{2\epsilon^2}{\lambda^2} 
\leq \frac{ \log 2 }{72 n}.
$
So by Proposition \ref{thm:lowerbnd-gen} and the fact that $\xi\leq 0.1$,
\begin{align*}
\inf_{\widehat{F}_n}\max_{i\in[0..M]} \bE_{\theta_i} L_{\theta_i}(\widehat{F}_n) &\geq 0.07\gamma 
\geq \frac{1}{500}   \min\left\{\frac{\sqrt{\log 2}}{3}\frac{\sigma^2}{\lambda^2}\sqrt{\frac{d-1}{n}}, \frac{1}{4}\right\}
\end{align*}
and to complete the proof we use 
$
\sup_{\theta\in\Theta_\lambda} \bE_\theta L_\theta(\widehat{F}_n) \geq \max_{i\in[0..M]} \bE_{\theta_i} L_{\theta_i}(\widehat{F}_n)
$
for any $\widehat{F}_n$.
\hfill$\square$

{\em Proof of Theorem \ref{thm:lb-sparse}.}
For simplicity, we state this construction for $\Theta_{\lambda,s+1}$, assuming $4 \leq s\leq \frac{d-1}{4}$.
Let $\xi=\frac{\lambda}{2\sigma}$, and define 
$
\epsilon =\min\left\{ \sqrt{\frac{8}{45}} \frac{\sigma^2}{\lambda} \sqrt{\frac{1}{n} \log \left(\frac{d-1}{s}\right) }, \frac{1}{2} \frac{\lambda}{\sqrt{s}} \right\}.
$
Define $\lambda_0^2=\lambda^2-s\epsilon^2$.
Let $\Omega=\{\omega\in\{0,1\}^{d-1}:\; \|\omega\|_0=s\}$.
For $\omega=(\omega(1),...,\omega(d-1))\in\Omega$, let $\mu_\omega=\lambda_0e_d+\sum_{i=1}^{d-1} \omega(i)\epsilon e_i $ (where $\{e_i\}_{i=1}^d$ is the standard basis for $\bR^d$).
Let $\theta_\omega=\left(-\frac{\mu_\omega}{2},\frac{\mu_\omega}{2}\right)\in\Theta_{\lambda,s}$.
By Lemma \ref{thm:varshamovsparse}, there exist $\omega_0,...,\omega_M\in\Omega$ such that $\log(M+1)\geq \frac{s}{5}  \log\left(\frac{d-1}{s}\right)$ and
$
\rho(\omega_i,\omega_j)\geq \frac{s}{2}
$
for all $0\leq i<j\leq M$.
The remainder of the proof is analogous to that of Theorem \ref{thm:lb-nonsparse} with $\gamma=\frac{1}{4}(g(\xi)-\sqrt{2}\xi^2) \frac{\sqrt{s}\epsilon}{\lambda}$.
\hfill$\square$

\section{Proofs of the Upper Bounds}

Propositions \ref{thm:upper-new-meandiff} and \ref{thm:anglebound-new} below bound the error in estimating the mean and principal direction, and can be obtained using standard concentration bounds and a variant of the Davis--Kahan theorem.
Proposition \ref{thm:upper-clusterlossbnd-general} relates these errors to the clustering loss.
For the sparse case, Propositions \ref{thm:upperbound-new-variance-perdim} and \ref{thm:upperbound-new-support-recovery} bound the added error induced by the support estimation procedure.
See appendix for proof details.

\begin{prop}\label{thm:upper-new-meandiff}
Let $\theta=(\mu_0-\mu,\mu_0+\mu)$ for some $\mu_0,\mu\in\bR^d$ and $X_1,...,X_n\simiid P_\theta$.
For any $\delta>0$, we have
$
\|\mu_0-\widehat{\mu}_n\|\geq \sigma\sqrt{\frac{2\max(d,8\log\frac{1}{\delta})}{n}} + \|\mu\|\sqrt{\frac{2\log\frac{1}{\delta}}{n}} 
$
with probability at least $1-3\delta$.

\end{prop}

\begin{prop}\label{thm:anglebound-new}
Let $\theta=(\mu_0-\mu,\mu_0+\mu)$ for some $\mu_0,\mu\in\bR^d$ and $X_1,...,X_n\simiid P_\theta$ with $d>1$ and $n\geq 4d$.
Define $\cos\beta=|v_1(\sigma^2I+\mu\mu^T)^Tv_1(\widehat{\Sigma}_n)| $.
For any $0<\delta<\frac{d-1}{\sqrt{e}}$, if
$
\max\left(\frac{\sigma^2}{\|\mu\|^2},\frac{\sigma}{\|\mu\|}\right) \sqrt{\frac{\max(d,8\log\frac{1}{\delta})}{n}} \leq \frac{1}{160},
$
then with probability at least $1-12\delta-2\exp\left(-\frac{n}{20}\right)$,
$$
\sin\beta
\leq 14\max\left(\frac{\sigma^2}{\|\mu\|^2}, \frac{\sigma}{\|\mu\|}\right) \sqrt{d}  \sqrt{\frac{10}{n}\log\frac{d}{\delta}\max\left(1,\frac{10}{n}\log\frac{d}{\delta}\right)} .
$$
\end{prop}

\begin{prop}\label{thm:upper-clusterlossbnd-general}
Let $\theta=(\mu_0-\mu,\mu_0+\mu)$, and for some $x_0,v\in\bR^d$ with $\|v\|=1$, let $\widehat{F}(x) = 1$ if  $x^Tv\geq x_0^Tv$, and $2$ otherwise.
Define $\cos\beta=|v^T\mu|/\|\mu\|$.
If $|(x_0-\mu_0)^Tv|\leq \sigma\epsilon_1+\|\mu\|\epsilon_2$ for some $\epsilon_1\geq 0$ and $0\leq\epsilon_2\leq\frac{1}{4}$, and if $\sin\beta\leq\frac{1}{\sqrt{5}}$, then
\begin{align*}
L_\theta(\widehat{F})
\leq \exp\left\{-\frac{1}{2}\max\left(0,\frac{\|\mu\|}{2\sigma}-2\epsilon_1\right)^2 \right\}
 \left[ 2\epsilon_1+\epsilon_2\frac{\|\mu\|}{\sigma}  
 + 2\sin\beta\left(2\sin\beta\frac{\|\mu\|}{\sigma}+ 1\right) \right] .
\end{align*}
\end{prop}
\begin{proof}
Let $r= \left|\frac{(x_0-\mu_0)^Tv}{\cos\beta}\right|$.
Since the clustering loss is invariant to rotation and translation,
\begin{align*}
L_\theta(\widehat{F})
&\leq \frac{1}{2}\int_{-\infty}^{\infty} \frac{1}{\sigma}\phi\left(\frac{x}{\sigma}\right)
\left[\Phi\left(\frac{\|\mu\|+|x|\tan\beta + r}{\sigma}\right) -\Phi\left(\frac{\|\mu\|-|x|\tan\beta - r}{\sigma}\right) \right]dx \\
&\leq \int_{-\infty}^{\infty} \phi(x)
\left[\Phi\left(\frac{\|\mu\|}{\sigma}\right) -\Phi\left(\frac{\|\mu\|- r}{\sigma}-|x|\tan\beta\right) \right]dx .
\end{align*}
Since $\tan\beta\leq \frac{1}{2}$ and $\epsilon_2\leq\frac{1}{4}$, we have $r\leq 2\sigma\epsilon_1+2\|\mu\|\epsilon_2$, and 
$
\Phi\left(\frac{\|\mu\|}{\sigma}\right) -\Phi\left(\frac{\|\mu\| - r}{\sigma}\right) 
\leq  2\left(\epsilon_1+\epsilon_2\frac{\|\mu\|}{\sigma} \right)\phi\left(\max\left(0,\frac{\|\mu\|}{2\sigma} - 2\epsilon_1\right)\right).
$
Defining $A=\left|\frac{\|\mu\|- r}{\sigma}\right|$, 
\begin{align*}
&\int_{-\infty}^{\infty} \phi(x)
\left[\Phi\left(\frac{\|\mu\|-r}{\sigma}\right) -\Phi\left(\frac{\|\mu\|- r}{\sigma}-|x|\tan\beta\right) \right]dx  
\leq 
2\int_{0}^{\infty} \int_{A-x\tan\beta}^{A}  \phi(x) \phi(y) dydx \\
&=2\int_{-A\sin\beta}^{\infty} \int_{A\cos\beta}^{A\cos\beta+(u+A\sin\beta)\tan\beta}  \phi(u) \phi(v) dudv 
\leq 2\phi\left(A\right)\tan\beta \left(A\sin\beta + 1\right) \\
&\leq 2\phi\left(\max\left(0,\frac{\|\mu\|}{2\sigma}-2\epsilon_1\right)\right)\tan\beta \left(\left(2\frac{\|\mu\|}{\sigma}+2\epsilon_1\right)\sin\beta + 1\right)
\end{align*}
where we used $u=x\cos\beta-y\sin\beta$ and $v=x\sin\beta+y\cos\beta$ in the second step.
The bound now follows easily.
\end{proof}

{\em Proof of Theorem \ref{thm:u5-ns-upper}.}
Using Propositions \ref{thm:upper-new-meandiff} and \ref{thm:anglebound-new} with $\delta=\frac{1}{\sqrt{n}}$, Proposition \ref{thm:upper-clusterlossbnd-general}, and the fact that  $(C+x)\exp(-\max(0,x-4)^2/8)\leq(C+6)\exp(-\max(0,x-4)^2/10)$ for all $C,x>0$,
\begin{align*}
\bE_\theta L_\theta(\widehat{F})
&\leq 600 \max\left(\frac{4\sigma^2}{\lambda^2}, 1\right)   \sqrt{\frac{d\log(nd)}{n}}
\end{align*}
(it is easy to verify that the bounds are decreasing with $\|\mu\|$, so we use $\|\mu\|=\frac{\lambda}{2}$ to bound the supremum).
In the $d=1$ case Proposition \ref{thm:anglebound-new} need not be applied, since the principal directions agree trivially.
The bound for $\frac{\lambda}{\sigma} \geq 2\max(80,14\sqrt{5d})$ can be shown similarly, using $\delta=\exp\left(-\frac{n}{32}\right)$.
\hfill$\square$

\begin{prop}\label{thm:upperbound-new-variance-perdim}
Let $\theta=(\mu_0-\mu,\mu_0+\mu)$ for some $\mu_0,\mu\in\bR^d$ and $X_1,...,X_n\simiid P_\theta$.
For any $0<\delta<\frac{1}{\sqrt{e}}$ such that $\sqrt{\frac{6\log \frac{1}{\delta} }{n}}\leq \frac{1}{2}$, with probability at least $1-6d\delta$, for all $i\in[d]$,
$$
|\widehat{\Sigma}_n(i,i) - (\sigma^2+\mu(i)^2)| \leq \sigma^2 \sqrt{\frac{6\log \frac{1}{\delta} }{n}} + 2\sigma|\mu(i)|\sqrt{\frac{2\log\frac{1}{\delta}}{n}} + (\sigma+|\mu(i)|)^2\frac{2\log\frac{1}{\delta}}{n}.
$$
\end{prop}

\begin{prop}\label{thm:upperbound-new-support-recovery}
Let $\theta=(\mu_0-\mu,\mu_0+\mu)$ for some $\mu_0,\mu\in\bR^d$ and $X_1,...,X_n\simiid P_\theta$.
Define
$$
S(\theta)=\{i\in[d]: \mu(i)\neq 0\} \;\;  \mbox{ and } \;\;
\widetilde{S}(\theta)=\{i\in[d]: |\mu(i)|\geq 4\sigma\sqrt{\alpha}\}.
$$
Assume that $n\geq 1$, $d\geq2$, and $\alpha \leq \frac{1}{4}$.
Then $\widetilde{S}(\theta) \subseteq \widehat{S}_n \subseteq S(\theta)$ with probability at least $1-\frac{6}{n}$.
\end{prop}
\begin{proof}
By Proposition \ref{thm:upperbound-new-variance-perdim}, with probability at least $1-\frac{6}{n}$,
$$
|\widehat{\Sigma}_n(i,i) - (\sigma^2+\mu(i)^2)| \leq \sigma^2 \sqrt{\frac{6\log (nd) }{n}} + 2\sigma|\mu(i)|\sqrt{\frac{2\log(nd)}{n}} + (\sigma+|\mu(i)|)^2\frac{2\log(nd)}{n}
$$
for all $i\in[d]$.
Assume the above event holds.
If $S(\theta)=[d]$, then of course $\widehat{S}_n \subseteq S(\theta)$.
Otherwise, for $i\notin S(\theta)$, we have
$
(1-\alpha)\sigma^2 \leq \widehat{\Sigma}_n(i,i)  \leq (1+\alpha)\sigma^2
$,
so it is clear that $\widehat{S}_n \subseteq S(\theta)$.
The remainder of the proof is trivial if $\widetilde{S}(\theta) =\emptyset$ or $S(\theta)=\emptyset$.
Assume otherwise.
For any $i \in S(\theta)$,
\begin{align*}
\widehat{\Sigma}_n(i,i) 
&\geq (1-\alpha)\sigma^2 + \left(1-\frac{2\log(nd)}{n}\right) \mu(i)^2  - 2\alpha \sigma|\mu(i)|.
\end{align*}
By definition, $|\mu(i)|\geq 4\sigma\sqrt{\alpha}$ for all $i \in \widetilde{S}(\theta)$, so
$
\frac{(1+\alpha)^2}{1-\alpha} \sigma^2\leq 
\widehat{\Sigma}_n(i,i) 
$
and $i \in \widehat{S}_n$ (we ignore strict equality above as a measure $0$ event), i.e. $\widetilde{S}(\theta)\subseteq \widehat{S}_n$, which concludes the proof.
\end{proof}

{\em Proof of Theorem \ref{thm:upperbound-new-support-recovery-plus-estimation}.}
Define
$
S(\theta)=\{i\in[d]: \mu(i)\neq 0\}
$
and
$
\widetilde{S}(\theta)=\{i\in[d]: |\mu(i)|\geq 4\sigma\sqrt{\alpha}\}
$.
Assume $\widetilde{S}(\theta) \subseteq \widehat{S}_n \subseteq S(\theta)$ (by Proposition \ref{thm:upperbound-new-support-recovery}, this holds with probability at least $1-\frac{6}{n}$).
If $\widetilde{S}(\theta)=\emptyset$, then we simply have $\bE_\theta L_\theta(\widehat{F}_n) \leq \frac{1}{2}$.

Assume $\widetilde{S}(\theta)\neq\emptyset$.
Let 
$\cos\widehat{\beta} = |v_1(\widehat{\Sigma}_{\widehat{S}_n})^Tv_1(\Sigma)|,$
$\cos\widetilde{\beta} = |v_1(\Sigma_{\widehat{S}_n})^Tv_1(\Sigma)|$, and 
$\cos\beta = |v_1(\widehat{\Sigma}_{\widehat{S}_n})^Tv_1(\Sigma_{\widehat{S}_n})|$
where $\Sigma=\sigma^2I+\mu\mu^T$, and for simplicity we define $\widehat{\Sigma}_{\widehat{S}_n}$ and $\Sigma_{\widehat{S}_n}$ to be the same as $\widehat{\Sigma}_n$ and $\Sigma$ in $\widehat{S}_n$, respectively, and $0$ elsewhere.
Then
$\sin\widehat{\beta}\leq \sin\widetilde{\beta} + \sin\beta$, and
\begin{align*}
\sin\widetilde{\beta} &= \frac{\|\mu-\mu_{\widehat{S}(\theta)}\|}{\|\mu\|} 
\leq \frac{\|\mu-\mu_{\widetilde{S}(\theta)}\|}{\|\mu\|}  
\leq \frac{4\sigma\sqrt{\alpha}\sqrt{|S(\theta)|-|\widetilde{S}(\theta)}|}{\|\mu\|}  
\leq 8\frac{\sigma\sqrt{s\alpha}}{\lambda}.
\end{align*}

Using the same argument as the proof of Theorem \ref{thm:u5-ns-upper}, as long as the above bound is smaller than $\frac{1}{2\sqrt{5}}$,
\begin{align*}
\bE_\theta L_\theta(\widehat{F})
&\leq 600 \max\left(\frac{\sigma^2}{\left(\frac{\lambda}{2}-4\sigma\sqrt{s\alpha}\right)^2}, 1\right)   \sqrt{\frac{s\log(ns)}{n}} + 104\frac{\sigma\sqrt{s\alpha}}{\lambda} + \frac{3}{n} .
\end{align*}
Using the fact $L_\theta(\widehat{F})\leq\frac{1}{2}$ always, and that $\alpha\leq\frac{1}{4}$ implies $\frac{\log(nd)}{n}\leq 1$, the bound follows.
\hfill$\square$

\section{Conclusion}

We have provided minimax lower
and upper bounds for estimating high dimensional mixtures.
The bounds show explicitly how the 
statistical difficulty of the problem
depends on dimension $d$, sample size $n$,
separation $\lambda$ and sparsity level $s$.

For clarity,
we have focused on the special case
where there are two spherical components and the mixture weights are equal.
In future work, we plan to extend the results
to general mixtures of $k$ Gaussians.

One of our motivations for this work
is the recent interest in variable selection methods
to facilitate clustering in high dimensional problems.
Existing methods such as
\citet{pan2007penalized,
witten2010framework, raftery2006variable,sun2012regularized}
and \citet{guo2010pairwise}
provide promising numerical evidence
that variable selection does improve
high dimensional clustering.
Our results provide some theoretical basis for this idea.

However, there is a gap between
the results in this paper and the methodology papers mentioned above.
Indeed, as of now, there is no rigorous proof that
the methods in those papers
outperform a two stage approach where
the first stage screens for relevant features
and the second stage applies standard clustering methods
on the features extracted from the first stage.
We conjecture that there are conditions under which
simultaneous feature selection and clustering
outperforms the two stage approach.
Settling this questions will
require the aforementioned extension of our results
to the general mixture case.


\bibliographystyle{plainnat}
\bibliography{nips2013}

%
%
%
%
%
%
%
%
%
%

\section{Notation}

For $\theta=(\mu_1,\mu_2)\in\bR^{2\times d}$, define
$$p_\theta(x)=\frac{1}{2} f(x;\mu_1,\sigma^2I)+\frac{1}{2} f(x;\mu_2,\sigma^2I),$$ 
where $f(\cdot;\mu,\Sigma)$ is the density of $\cN(\mu,\Sigma)$, $\sigma>0$ is a fixed
constant.
Let $P_\theta$ denote the probability measure corresponding to $p_\theta$.
We consider two classes $\Theta$ of parameters:
$$
\Theta_{\lambda}=\left\{(\mu_1,\mu_2): \|\mu_1-\mu_2\|\geq\lambda\right\}
$$
$$
\Theta_{\lambda,s}=
\left\{(\mu_1,\mu_2): \|\mu_1-\mu_2\|\geq\lambda,\;\|\mu_1-\mu_2\|_0\leq s\right\}\subseteq \Theta_{\lambda}.
$$

Throughout this document, $\phi$ and $\Phi$ denote the standard normal 
density and distribution functions.

For a mixture with parameter $\theta$, the Bayes optimal classification, that is, assignment 
of a point $x \in \bR^d$ to the correct mixture component,
is given by the function 
$$
F_\theta(x)=\argmax\limits_{i\in\{1,2\}} f(x;\mu_i,\sigma^2I).
$$
Given any other candidate assignment function 
$F:\bR^d\rightarrow\{1,2\}$, we define the loss incurred by $F$ as
$$
L_\theta(F)=\min\limits_\pi P_\theta(\{x:F_\theta(x)\neq \pi(F(x))\})
$$ where 
the minimum is over all permutations $\pi:\{1,2\}\rightarrow\{1,2\}$. 

For $X_1, \dots, X_n\simiid P_\theta$, let $\widehat{\mu}_n$ and $\widehat{\Sigma}_n$ be the mean and covariance of the corresponding empirical distribution.

Also, for a matrix $B$, $v_i(B)$ and $\lambda_i(B)$ are the $i$'th eigenvector and eigenvalue of $B$ (assuming $B$ is symmetric), arranged so that $\lambda_i(B)\geq\lambda_{i+1}(B)$, and $\|B\|_2$ is the spectral norm.

\section{Upper bounds}

\subsection{Standard concentration bounds}

\subsubsection{Concentration bounds for estimating the mean}

\begin{prop}\label{thm2:concentration-new-chisq}
Let $X\sim\chi_d^2$.
Then for any $\epsilon>0$,
 $$
 \bP(X>(1+\epsilon)d) \leq \exp\left\{-\frac{d}{2}\left(\epsilon - \log(1+\epsilon)\right)\right\}.
 $$
 If $\epsilon<1$, then
 $$
 \bP(X<(1-\epsilon)d) \leq \exp\left\{\frac{d}{2}(\epsilon + \log (1-\epsilon)) \right\}.
 $$
\end{prop}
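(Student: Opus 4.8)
The plan is to use the Chernoff method: bound each tail probability by optimizing an exponential moment inequality over a free parameter $t$. Recall that for $X\sim\chi_d^2$ the moment generating function is $\bE[e^{tX}]=(1-2t)^{-d/2}$ for $t<1/2$ (and $\bE[e^{-tX}]=(1+2t)^{-d/2}$ for all $t>0$); this follows immediately by writing $X$ as a sum of $d$ i.i.d.\ $\chi_1^2$ variables and computing a single Gaussian integral, so I would either cite this or include the one-line derivation.

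For the upper tail, for any $t\in(0,1/2)$ Markov's inequality applied to $e^{tX}$ gives
$$
\bP\bigl(X>(1+\epsilon)d\bigr)\leq e^{-t(1+\epsilon)d}(1-2t)^{-d/2}
=\exp\left\{-\frac{d}{2}\Bigl(2t(1+\epsilon)+\log(1-2t)\Bigr)\right\}.
$$
Differentiating the exponent in $t$ and setting it to zero yields the optimal choice $1-2t=\tfrac{1}{1+\epsilon}$, i.e.\ $t=\tfrac{\epsilon}{2(1+\epsilon)}$, which indeed lies in $(0,1/2)$ for every $\epsilon>0$. Substituting back gives $2t(1+\epsilon)=\epsilon$ and $\log(1-2t)=-\log(1+\epsilon)$, so the exponent becomes $-\tfrac{d}{2}(\epsilon-\log(1+\epsilon))$, which is the claimed bound.

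For the lower tail, for any $t>0$ Markov's inequality applied to $e^{-tX}$ gives
$$
\bP\bigl(X<(1-\epsilon)d\bigr)=\bP\bigl(e^{-tX}>e^{-t(1-\epsilon)d}\bigr)\leq e^{t(1-\epsilon)d}(1+2t)^{-d/2}
=\exp\left\{\frac{d}{2}\Bigl(2t(1-\epsilon)-\log(1+2t)\Bigr)\right\}.
$$
Here the exponent is minimized at $1+2t=\tfrac{1}{1-\epsilon}$, i.e.\ $t=\tfrac{\epsilon}{2(1-\epsilon)}$, which is positive precisely when $\epsilon<1$; substituting back gives $2t(1-\epsilon)=\epsilon$ and $-\log(1+2t)=\log(1-\epsilon)$, yielding the exponent $\tfrac{d}{2}(\epsilon+\log(1-\epsilon))$. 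There is no real obstacle in this argument; the only points requiring care are checking that the optimizing $t$ stays in the admissible range (which is where the hypothesis $\epsilon<1$ is used for the second bound) and correctly tracking the sign flip when passing to $e^{-tX}$ for the lower tail.
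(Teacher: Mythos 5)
Your proof is correct and follows essentially the same route as the paper: the Chernoff/Markov bound applied to $e^{tX}$ and $e^{-tX}$ with the same moment generating function and the same optimizing values of $t$ (your $t=\tfrac{\epsilon}{2(1+\epsilon)}$ and $t=\tfrac{\epsilon}{2(1-\epsilon)}$ coincide with the paper's $\tfrac{1}{2}\bigl(1-\tfrac{1}{1+\epsilon}\bigr)$ and $\tfrac{1}{2}\bigl(\tfrac{1}{1-\epsilon}-1\bigr)$). The algebraic substitutions and the role of the hypothesis $\epsilon<1$ are handled correctly, so nothing further is needed.
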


\iftoggle{detailed3} {
\begin{proof}
 Since $\bE e^{tX} = (1-2t)^{-\frac{d}{2}}$ for $0<t<\frac{1}{2}$,
\begin{align*}
\bP(X>(1+\epsilon)d)
&=\bP(e^{tX}>e^{t(1+\epsilon)d}) \\
&\leq e^{-t(1+\epsilon)d} (1-2t)^{-\frac{d}{2}} \\
&= \exp\left[-t(1+\epsilon)d + \frac{d}{2} \log \frac{1}{1-2t} \right].
\end{align*}
To minimize the right hand side, we differentiate the exponent with respect to $t$ to obtain the equation
\begin{align*}
-(1+\epsilon)d + \frac{d}{1-2t} = 0
\end{align*}
which can be satisfied by setting $t=\frac{1}{2}\left(1-\frac{1}{1+\epsilon}\right)<\frac{1}{2}$ (it is easy to verify that this is a global minimum).
Using this value for $t$, the first bound follows.

Also, for $t>0$ and $\epsilon<1$,
\begin{align*}
\bP(X<(1-\epsilon)d)
&=\bP(e^{-tX}>e^{-t(1-\epsilon)d}) \\
&\leq e^{t(1-\epsilon)d} (1+2t)^{-\frac{d}{2}} \\
&= \exp\left[t(1-\epsilon)d - \frac{d}{2} \log (1+2t) \right]
\end{align*}
and setting $t=\frac{1}{2}\left(\frac{1}{1-\epsilon} - 1\right)$,
\begin{align*}
\bP(X<(1-\epsilon)d)
&\leq \exp\left[\frac{d}{2}(\epsilon + \log (1-\epsilon)) \right].
\end{align*}
\end{proof}} {}

\begin{prop}\label{thm2:concentration-new-gaussianmean}
Let $Z_1,...,Z_n\simiid\cN(0,I_d)$.
Then for any $\epsilon>0$,
$$
 \bP\left(\left\|\frac{1}{n}\sum_{i=1}^nZ_i\right\|\geq \sqrt{\frac{(1+\epsilon)d}{n}}\right)\leq \exp\left\{-\frac{d}{2}\left(\epsilon - \log(1+\epsilon)\right)\right\}.
 $$
\end{prop}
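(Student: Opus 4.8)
The plan is to reduce this statement to Proposition~\ref{thm2:concentration-new-chisq} by identifying the relevant quantity as a (scaled) chi-squared random variable. The key observation is that, since the $Z_i$ are i.i.d.\ $\cN(0,I_d)$, the sum $\sum_{i=1}^n Z_i$ is distributed as $\cN(0,nI_d)$, so that $\frac{1}{\sqrt n}\sum_{i=1}^n Z_i \sim \cN(0,I_d)$ and hence
$$
X := \left\|\frac{1}{\sqrt n}\sum_{i=1}^n Z_i\right\|^2 = n\left\|\frac{1}{n}\sum_{i=1}^n Z_i\right\|^2 \sim \chi_d^2 .
$$

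Next I would rewrite the event of interest in terms of $X$: since both sides of the inequality inside the probability are nonnegative, squaring gives
$$
\left\{\left\|\frac{1}{n}\sum_{i=1}^n Z_i\right\|\geq \sqrt{\tfrac{(1+\epsilon)d}{n}}\right\}
= \left\{n\left\|\frac{1}{n}\sum_{i=1}^n Z_i\right\|^2 \geq (1+\epsilon)d\right\}
= \{X \geq (1+\epsilon)d\}.
$$
Applying the first bound of Proposition~\ref{thm2:concentration-new-chisq} to $X\sim\chi_d^2$ then yields
$$
\bP\!\left(\left\|\frac{1}{n}\sum_{i=1}^n Z_i\right\|\geq \sqrt{\tfrac{(1+\epsilon)d}{n}}\right)
= \bP(X \geq (1+\epsilon)d)
\leq \exp\!\left\{-\frac{d}{2}\bigl(\epsilon - \log(1+\epsilon)\bigr)\right\},
$$
which is exactly the claimed inequality.

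There is essentially no substantive obstacle here: the only thing to be careful about is the bookkeeping of the normalization constants (the factor $1/n$ versus $1/\sqrt n$), making sure the scaling is chosen so that the squared norm is exactly $\chi_d^2$ rather than a multiple of it, and noting that squaring both sides of the defining inequality is a reversible operation because both sides are nonnegative. Everything else is an immediate invocation of the previously established chi-squared tail bound.
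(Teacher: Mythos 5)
Your proposal is correct and follows essentially the same route as the paper's own proof: identify $\left\|\frac{1}{\sqrt n}\sum_{i=1}^n Z_i\right\|^2$ as a $\chi_d^2$ random variable, rewrite the event by squaring, and invoke the first tail bound of Proposition~\ref{thm2:concentration-new-chisq}. No gaps; the normalization bookkeeping you flag is exactly the only point of care.
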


\iftoggle{detailed3} {
\begin{proof}
Using Proposition \ref{thm2:concentration-new-chisq},
  \begin{align*}
 \bP\left(\left\|\frac{1}{n}\sum_{i=1}^nZ_i\right\|\geq \sqrt{\frac{(1+\epsilon)d}{n}}\right)
 &= \bP\left(\left\|\frac{1}{\sqrt{n}} \sum_{i=1}^n Z_i \right\|^2\geq (1+\epsilon)d\right)\\
 &= \bP\left(X\geq (1+\epsilon)d\right) \\
 &\leq \exp\left\{-\frac{d}{2}(\epsilon - \log(1+\epsilon))\right\}
 \end{align*}
 where  $X\sim\chi_d^2$.
\end{proof}} {}

\subsubsection{Concentration bounds for estimating principal direction}

\begin{prop}\label{thm2:concentration-new-wishart-spectral}
Let $Z_1,...,Z_n\simiid\cN(0,I_d)$ and $\delta>0$.
If $n\geq d$ then with probability at least $1-3\delta$,
\begin{align*}
\|\widehat{\Sigma}_n - I_d\|_2 \leq&  3 \left(1+\sqrt{\frac{2\log\frac{1}{\delta}}{d}}\right)\sqrt{\frac{d}{n}} \max\left(1,\left(1+\sqrt{\frac{2\log\frac{1}{\delta}}{d}}\right)\sqrt{\frac{d}{n}}\right) \\
&+ \left(1+\sqrt{\frac{8\log\frac{1}{\delta}}{d}\max\left(1,\frac{8\log\frac{1}{\delta}}{d}\right)}\right) \frac{d}{n} 
\end{align*}
where $\widehat{\Sigma}_n$ is the empirical covariance of $Z_i$.
\end{prop}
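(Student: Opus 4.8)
The plan is to split $\widehat{\Sigma}_n$ into an uncentred second-moment matrix plus a rank-one correction. Since $\widehat{\Sigma}_n=\frac1n\sum_{i=1}^nZ_iZ_i^\top-\widehat{\mu}_n\widehat{\mu}_n^\top$, the triangle inequality for $\|\cdot\|_2$ gives
$$
\|\widehat{\Sigma}_n-I_d\|_2\;\le\;\left\|\tfrac1n\sum_{i=1}^nZ_iZ_i^\top-I_d\right\|_2\;+\;\|\widehat{\mu}_n\|^2 .
$$
I would control the first term by the first summand in the statement (with failure probability $2\delta$) and the second term by the $d/n$ summand (with failure probability $\delta$); a union bound then yields the claim with probability at least $1-3\delta$.

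For the first term, put $A=\frac1{\sqrt n}\,[\,Z_1\;\cdots\;Z_n\,]\in\bR^{d\times n}$, so that $\frac1n\sum_iZ_iZ_i^\top=AA^\top$. Because $n\ge d$, $A$ has $d$ singular values $s_1(A)\ge\cdots\ge s_d(A)\ge0$ and the eigenvalues of $AA^\top$ are exactly the $s_i(A)^2$. The key input is the classical bound on the extreme singular values of a Gaussian matrix: with probability at least $1-2\delta$,
$$
1-r\;\le\;s_d(A)\;\le\;s_1(A)\;\le\;1+r,\qquad r:=\Big(1+\sqrt{\tfrac{2\log(1/\delta)}{d}}\Big)\sqrt{\tfrac dn}=\sqrt{\tfrac dn}+\sqrt{\tfrac{2\log(1/\delta)}{n}}.
$$
If one wants a self-contained derivation, this follows from Gordon's comparison inequality (which gives $\bE\,s_1(A)\le1+\sqrt{d/n}$ and $\bE\,s_d(A)\ge1-\sqrt{d/n}$, the latter using $n\ge d$) together with the $1$-Lipschitzness of $s_1(\cdot)$ and $s_d(\cdot)$ in the $nd$ Gaussian entries, so that Gaussian concentration of measure adds a deviation $\sqrt{2\log(1/\delta)/n}$ to each, each of the two events failing with probability at most $\delta$. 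On the good event every eigenvalue of $AA^\top$ lies in $[(1-r)_+^2,(1+r)^2]$, hence within $2r+r^2$ of $1$ (using $1-(1-r)_+^2\le 2r$, immediate whether $r\le1$ or $r>1$), so $\|AA^\top-I_d\|_2\le 2r+r^2=r(2+r)\le 3r\max(1,r)$, which is precisely the first summand.

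For the second term, Proposition~\ref{thm2:concentration-new-gaussianmean} applied to $\widehat{\mu}_n=\frac1n\sum_iZ_i$ gives, for any $\epsilon>0$, that $\|\widehat{\mu}_n\|^2\le(1+\epsilon)\frac dn$ with probability at least $1-\exp\{-\frac d2(\epsilon-\log(1+\epsilon))\}$; so it suffices to exhibit an $\epsilon$ of the advertised shape with $\epsilon-\log(1+\epsilon)\ge\frac{2\log(1/\delta)}{d}$. Writing $a:=\frac{8\log(1/\delta)}{d}$ and choosing $\epsilon:=\sqrt{a\max(1,a)}$, one checks this in two regimes: if $a\le1$ then $\epsilon=\sqrt a\le1$ and the elementary bound $x-\log(1+x)\ge x^2/4$ on $[0,1]$ gives $\epsilon-\log(1+\epsilon)\ge a/4=\frac{2\log(1/\delta)}{d}$; if $a>1$ then $\epsilon=a$ and the equally elementary $\log(1+a)\le 3a/4$ for $a\ge1$ gives the same bound. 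With this $\epsilon$, $1+\epsilon$ is exactly the coefficient of $d/n$ in the statement, so $\|\widehat{\mu}_n\|^2$ is bounded by the second summand with probability at least $1-\delta$, and combining the two events finishes the proof.

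I expect the Gaussian-matrix singular-value estimate to be the only genuinely non-routine step: if it is cited off the shelf, the rest is just the triangle inequality, Proposition~\ref{thm2:concentration-new-gaussianmean}, and two one-line calculus inequalities; if it must be proved, the real work is in Gordon's inequality plus Gaussian concentration as sketched. A cruder covering-number argument for $\|AA^\top-I_d\|_2$ would also close the proof but would give worse constants and would not reproduce the clean $\sqrt{d/n}+\sqrt{2\log(1/\delta)/n}$ shape of the displayed bound.
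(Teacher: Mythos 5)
Your proof is correct and takes essentially the same route as the paper: the same decomposition $\|\widehat{\Sigma}_n-I_d\|_2\le\left\|\frac1n\sum_{i=1}^nZ_iZ_i^T-I_d\right\|_2+\left\|\frac1n\sum_{i=1}^nZ_i\right\|^2$, the same choice of $\epsilon$ in Proposition~\ref{thm2:concentration-new-gaussianmean} giving the $\left(1+\sqrt{\tfrac{8\log(1/\delta)}{d}\max\left(1,\tfrac{8\log(1/\delta)}{d}\right)}\right)\tfrac dn$ term, and the same $2\delta+\delta$ failure budget. The only difference is that you derive the spectral concentration of $\frac1n\sum_iZ_iZ_i^T$ (via Gordon's inequality, Gaussian concentration of the extreme singular values, and the conversion $2r+r^2\le 3r\max(1,r)$), whereas the paper cites this bound as well known; your derivation of that step is also correct.
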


\iftoggle{detailed3} {
\begin{proof}
Let $\overline{Z}_n=\frac{1}{n}\sum_{i=1}^nZ_i$.
Then
\begin{align*}
\|\widehat{\Sigma}_n - I_d\|_2
&= \left\|\frac{1}{n}\sum_{i=1}^n Z_iZ_i^T - I_d - \overline{Z}_n \overline{Z}_n^T\right\|_2 \\
&\leq \left\|\frac{1}{n}\sum_{i=1}^n Z_iZ_i^T -I_d\right\|_2 + \left\|\overline{Z}_n\right\|^2.
\end{align*}
It is well known that for any $\epsilon_1>0$,
$$
\bP\left(\left\|\frac{1}{n}\sum_{i=1}^n Z_iZ_i^T-I_d\right\|_2 \geq 3 \left(1+\epsilon_1\right)\sqrt{\frac{d}{n}} \max\left(1,(1+\epsilon_1)\sqrt{\frac{d}{n}}\right) \right) \leq 2\exp\left\{-\frac{d\epsilon_1^2}{2}\right\}.
$$
Using this along with Proposition \ref{thm2:concentration-new-gaussianmean}, we have for any $\epsilon_2>0$,
\begin{align*}
&\bP\left(\|\widehat{\Sigma}_n - I_d\|_2\geq  3 \left(1+\epsilon_1\right)\sqrt{\frac{d}{n}} \max\left(1,(1+\epsilon_1)\sqrt{\frac{d}{n}}\right) +  \frac{(1+\epsilon_2)d}{n} \right) \\
&\leq 2\exp\left\{-\frac{d\epsilon_1^2}{2}\right\} + \exp\left\{-\frac{d}{2}\left(\epsilon_2 - \log(1+\epsilon_2)\right)\right\}.
\end{align*}
Setting $\epsilon_1=\sqrt{\frac{2\log\frac{1}{\delta}}{d}}$,
\begin{align*}
&\bP\left(\|\widehat{\Sigma}_n - I_d\|_2\geq 3 \left(1+\sqrt{\frac{2\log\frac{1}{\delta}}{d}}\right)\sqrt{\frac{d}{n}} \max\left(1,\left(1+\sqrt{\frac{2\log\frac{1}{\delta}}{d}}\right)\sqrt{\frac{d}{n}}\right) +  \frac{(1+\epsilon_2)d}{n} \right) \\
&\leq 2\delta + \exp\left\{-\frac{d}{2}\left(\epsilon_2 - \log(1+\epsilon_2)\right)\right\} \\
&\leq 2\delta + \exp\left\{-\frac{d}{8}\epsilon_2\min(1,\epsilon_2)\right\} 
\end{align*}
and, setting $\epsilon_2=\sqrt{\frac{8\log\frac{1}{\delta}}{d}\max\left(1,\frac{8\log\frac{1}{\delta}}{d}\right)}$, with probability at least $1-3\delta$,
\begin{align*}
\|\widehat{\Sigma}_n - I_d\|_2 \leq&  3 \left(1+\sqrt{\frac{2\log\frac{1}{\delta}}{d}}\right)\sqrt{\frac{d}{n}} \max\left(1,\left(1+\sqrt{\frac{2\log\frac{1}{\delta}}{d}}\right)\sqrt{\frac{d}{n}}\right) \\
&+ \left(1+\sqrt{\frac{8\log\frac{1}{\delta}}{d}\max\left(1,\frac{8\log\frac{1}{\delta}}{d}\right)}\right) \frac{d}{n} .
\end{align*}
\end{proof} } {}

\begin{prop}\label{thm2:concentration-new-prodnormal}
Let $X_1,Y_1,...,X_n,Y_n\simiid\cN(0,1)$.
Then for any $\epsilon>0$,
\begin{align*}
\bP\left(\left|\frac{1}{n}\sum_{i=1}^n X_iY_i\right|>\frac{\epsilon}{2}\right)
&\leq 2 \exp\left\{-\frac{n\epsilon\min(1,\epsilon)}{10}\right\}.
\end{align*}
\end{prop}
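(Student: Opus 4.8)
The plan is to bound the upper tail of $\frac1n\sum_{i=1}^n X_iY_i$ directly via its moment generating function and a Chernoff argument, treating the regimes $\epsilon\le 1$ and $\epsilon>1$ separately, and then pass to the two-sided statement by symmetry. First I would compute, by conditioning on $Y_i$, that $\bE[e^{tX_iY_i}\mid Y_i]=e^{t^2Y_i^2/2}$, and then, since $e^{t^2Y_i^2/2}$ has expectation $(1-t^2)^{-1/2}$ for $|t|<1$, that $\bE[e^{tX_iY_i}]=(1-t^2)^{-1/2}$ for $|t|<1$. By independence, $\bE\big[\exp\{t\sum_{i=1}^nX_iY_i\}\big]=(1-t^2)^{-n/2}$ on $|t|<1$. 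Because $-X_i\stackrel{d}{=}X_i$, the distribution of $\frac1n\sum X_iY_i$ is symmetric about $0$, so it suffices to show $\bP\big(\frac1n\sum X_iY_i>\epsilon/2\big)\le\exp\{-n\epsilon\min(1,\epsilon)/10\}$ and then multiply by two.

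Applying Markov's inequality to $\exp\{t\sum X_iY_i\}$ gives, for every $t\in(0,1)$,
$$
\bP\Big(\tfrac1n\sum_{i=1}^nX_iY_i>\tfrac\epsilon2\Big)\le\exp\Big\{n\big(-\tfrac{t\epsilon}{2}-\tfrac12\log(1-t^2)\big)\Big\}=:\exp\{n\,g(t)\}.
$$
For $\epsilon\le 1$ I would take $t=\epsilon/3\in(0,1)$ and use $-\log(1-u)\le u/(1-u)$ with $u=\epsilon^2/9\le1/9$ to get $-\tfrac12\log(1-\epsilon^2/9)\le\epsilon^2/16$, hence $g(\epsilon/3)\le-\epsilon^2/6+\epsilon^2/16=-\tfrac{5}{48}\epsilon^2\le-\epsilon^2/10$. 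For $\epsilon>1$ I would instead take $t=1/2$, so that $g(1/2)=-\epsilon/4+\tfrac12\log\tfrac43$; since $\tfrac12\log\tfrac43<0.145\le0.145\,\epsilon$ this is at most $-(\tfrac14-0.145)\epsilon<-\epsilon/10$. In both regimes $g(t)\le-\tfrac1{10}\epsilon\min(1,\epsilon)$, which together with the symmetrization yields the claimed bound.

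I expect the only real difficulty to be bookkeeping with the constant $1/10$, which is mildly delicate. For instance, the tempting route of using the polarization identity $X_iY_i=\tfrac14\big[(X_i+Y_i)^2-(X_i-Y_i)^2\big]$ to write $\sum X_iY_i=\tfrac12(U-V)$ with $U,V\simiid\chi^2_n$ and then bounding the four tail events $\{U>n(1+\epsilon/2)\}$, $\{U<n(1-\epsilon/2)\}$, $\{V>n(1+\epsilon/2)\}$, $\{V<n(1-\epsilon/2)\}$ separately via Proposition \ref{thm2:concentration-new-chisq} loses a factor in the exponent (it produces a constant near $1/16$ or worse in the $\epsilon\le1$ regime, besides an extra multiplicative factor out front). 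Keeping the two $\chi^2_n$ contributions together — equivalently, optimizing the single generating function $(1-t^2)^{-n/2}$ as above — is what makes the constant $1/10$ go through, and the explicit choices $t=\epsilon/3$ and $t=1/2$ stay safely bounded away from $\pm1$, so no integrability issue arises.
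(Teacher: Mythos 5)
Your proposal is correct and follows essentially the same route as the paper: a Chernoff bound using $\bE e^{tXY}=(1-t^2)^{-1/2}$, followed by symmetrization to get the two-sided bound. The only difference is bookkeeping — the paper optimizes $t$ exactly and then invokes the inequality $h(u)\geq \tfrac{u}{5}\min(1,u)$ for the resulting exponent, whereas you take explicit suboptimal choices $t=\epsilon/3$ and $t=1/2$ in the two regimes, which verifies the constant $1/10$ just as well and arguably more transparently.
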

\iftoggle{detailed3}{
\begin{proof}
Let $Z=XY$ where $X,Y\simiid\cN(0,1)$.
Then for any $t$ such that $|t|<1$,
$$
\bE e^{tZ} = \frac{1}{\sqrt{1-t^2}}.
$$
So for $0<t<1$,
\begin{align*}
\bP\left(\frac{1}{n}\sum_{i=1}^nX_iY_i>\epsilon\right)
&= \bP\left(\exp\left\{\sum_{i=1}^n tX_iY_i\right\}>\exp(n\epsilon t)\right) \\
&\leq \bE\left(\exp\left\{\sum_{i=1}^n tX_iY_i\right\}\right)\exp(-n\epsilon t) \\
&= (\bE\exp(tX_iY_i))^n\exp(-n\epsilon t) \\
&= (1-t^2)^{-\frac{n}{2}}\exp(-n\epsilon t) \\
&= \exp\left\{-\frac{n}{2}\left(2\epsilon t+\log(1-t^2)\right)\right\}.
\end{align*}
The bound is minimized by $t = \frac{1}{2\epsilon}\left( \sqrt{ 1 + 4\epsilon^2 } - 1 \right)<1$, so
\begin{align*}
\bP\left(\frac{1}{n}\sum_{i=1}^nX_iY_i>\epsilon\right)
&\leq \exp\left\{-\frac{n}{2}h(2\epsilon)\right\}
\end{align*}
where 
$$
h(u) = \left( \sqrt{ 1 + u^2 } - 1 \right) +\log\left(1-\frac{1}{u^2}\left( \sqrt{ 1 + u^2 } - 1 \right)^2\right).
$$
Since $h(u)\geq\frac{u}{5}\min(1,u)$,
\begin{align*}
\bP\left(\frac{1}{n}\sum_{i=1}^nX_iY_i>\epsilon\right)
&\leq \exp\left\{-\frac{n}{2}\frac{2\epsilon}{5}\min\left(1,2\epsilon\right)\right\}
\end{align*}
and the proof is complete by noting that the distribution of $X_iY_i$ is symmetric.
\end{proof} }{}

\subsection{Davis--Kahan}

%

\begin{lemma}\label{thm2:DK-tighter-one}
Let $A,E\in\bR^{d\times d}$ be symmetric matrices, and $u\in\bR^{d-1}$ such that
$$
u_i=v_{i+1}(A)^TEv_1(A).
$$
If $\lambda_1(A)-\lambda_2(A)>0$ and
$$
\|E\|_2\leq\frac{\lambda_1(A)-\lambda_2(A)}{5}
$$
then
$$
\sqrt{1-(v_1(A)^Tv_1(A+E))^2} \leq \frac{4\|u\|}{\lambda_1(A)-\lambda_2(A)} 
$$
(Corollary 8.1.11 of \cite{golub1996}).
\end{lemma}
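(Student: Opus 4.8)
The plan is to reduce the statement to the standard Davis--Kahan $\sin\Theta$ bound (in the form given as Corollary 8.1.11 of \cite{golub1996}), which controls the rotation of the top eigenvector of $A$ under the perturbation $E$ in terms of the relevant spectral gap and the off-diagonal block of $E$ in the eigenbasis of $A$. Write $A$ in its spectral decomposition and let $V_1=v_1(A)$ span the top eigenspace, with $V_2=[v_2(A)\;\cdots\;v_d(A)]$ spanning the orthogonal complement. Then the vector $u\in\bR^{d-1}$ defined by $u_i=v_{i+1}(A)^TEv_1(A)$ is exactly $V_2^TEV_1$, so $\|u\|=\|V_2^TEV_1\|_2$. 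The textbook bound then gives $\sqrt{1-(v_1(A)^Tv_1(A+E))^2}=\sin\angle(v_1(A),v_1(A+E))$ bounded by $\|V_2^TEV_1\|_2$ divided by an \emph{effective} gap, namely the separation between $\lambda_1(A)$ and the spectrum of the perturbed lower block; the task is to show this effective gap is at least $(\lambda_1(A)-\lambda_2(A))/4$ once $\|E\|_2\le(\lambda_1(A)-\lambda_2(A))/5$.

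First I would invoke Weyl's inequality: every eigenvalue of $A+E$ lies within $\|E\|_2$ of the corresponding eigenvalue of $A$, and likewise the eigenvalues of the compressed lower block $V_2^T(A+E)V_2$ differ from $\lambda_2(A),\dots,\lambda_d(A)$ by at most $\|E\|_2$. Hence the top eigenvalue of $A+E$ is at least $\lambda_1(A)-\|E\|_2$, while every eigenvalue of the lower block is at most $\lambda_2(A)+\|E\|_2$, so the separation between the two is at least
$$
(\lambda_1(A)-\|E\|_2)-(\lambda_2(A)+\|E\|_2)=(\lambda_1(A)-\lambda_2(A))-2\|E\|_2.
$$
Under the hypothesis $\|E\|_2\le(\lambda_1(A)-\lambda_2(A))/5$ this is at least $\tfrac{3}{5}(\lambda_1(A)-\lambda_2(A))>0$, so in particular the relevant gap is positive and Davis--Kahan applies. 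Plugging the effective gap $\tfrac{3}{5}(\lambda_1(A)-\lambda_2(A))$ into the $\sin\Theta$ bound yields
$$
\sqrt{1-(v_1(A)^Tv_1(A+E))^2}\le\frac{\|u\|}{\tfrac{3}{5}(\lambda_1(A)-\lambda_2(A))}=\frac{5\|u\|}{3(\lambda_1(A)-\lambda_2(A))}\le\frac{4\|u\|}{\lambda_1(A)-\lambda_2(A)},
$$
since $5/3\le 4$, which is the claimed inequality.

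The main obstacle is purely bookkeeping: making sure the version of Davis--Kahan being cited is stated with the numerator $\|V_2^TEV_1\|_2$ (the off-diagonal block) rather than the cruder $\|E\|_2$, and that its denominator is the post-perturbation separation between $\lambda_1(A+E)$ and the lower block of $A+E$, not the raw gap $\lambda_1(A)-\lambda_2(A)$; the factor-of-$5$ condition in the hypothesis is precisely what converts the former into a constant multiple of the latter via Weyl. Once the identification $u=V_2^TEV_1$ and the Weyl estimate on the effective gap are in place, the rest is the arithmetic $5/3\le4$ shown above. No step here is deep; the only care needed is to match the constants to whatever exact normalization Corollary 8.1.11 of \cite{golub1996} uses.
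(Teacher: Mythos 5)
The paper does not actually prove this lemma: it is quoted verbatim as Corollary 8.1.11 of \cite{golub1996}, whose textbook proof goes through the construction in Theorem 8.1.10 there (solving for a vector $p$ with $\|p\|\leq 4\|u\|/(\lambda_1(A)-\lambda_2(A))$ so that $q_1+Q_2p$, normalized, is an eigenvector of $A+E$). Your route is genuinely different and essentially correct: you identify $u=V_2^TEv_1$, use Weyl's inequality on $A+E$ and on the compression $V_2^T(A+E)V_2$ to lower-bound the post-perturbation separation by $(\lambda_1(A)-\lambda_2(A))-2\|E\|_2\geq\tfrac{3}{5}(\lambda_1(A)-\lambda_2(A))$, and conclude with the factor $5/3\leq 4$; this is more self-contained than the citation and even yields a better constant. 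The one point to tighten is the exact ``textbook'' sin--theta bound you invoke: the most common statements have either the residual $\|Ev_1\|$ or an off-diagonal block involving a \emph{perturbed} basis in the numerator, whereas you need numerator exactly $\|V_2^TEv_1\|$ (both bases unperturbed) with denominator $\lambda_1(A+E)-\lambda_{\max}\bigl(V_2^T(A+E)V_2\bigr)$. That variant is true and easy: writing $v_1(A+E)=c\,v_1(A)+s\,V_2w$ with $\|w\|=1$, the eigenvector equation of $Q^T(A+E)Q$ projected onto the lower block gives $s\,\bigl(\widehat{\lambda}_1 I-V_2^T(A+E)V_2\bigr)w=c\,u$, hence $s\leq\|u\|/\bigl(\widehat{\lambda}_1-\lambda_{\max}(V_2^T(A+E)V_2)\bigr)$ once that separation is positive, which your Weyl step guarantees. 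With that two-line derivation made explicit, your argument is complete and matches (indeed slightly improves) the cited constant.
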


\subsection{Bounding error in estimating the mean}

\begin{prop}\label{thm2:upper-new-meandiff}
Let $\theta=(\mu_0-\mu,\mu_0+\mu)$ for some $\mu_0,\mu\in\bR^d$ and $X_1,...,X_n\simiid P_\theta$.
For any $\delta>0$,
\begin{align*}
\bP\left(\|\mu_0-\widehat{\mu}_n\|\geq \sigma\sqrt{\frac{2\max(d,8\log\frac{1}{\delta})}{n}} + \|\mu\|\sqrt{\frac{2\log\frac{1}{\delta}}{n}} \right)
\leq 3\delta.
\end{align*}
\end{prop}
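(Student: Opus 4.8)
The plan is to decompose the empirical mean $\widehat{\mu}_n$ into a ``signal'' part coming from which mixture component each $X_i$ was drawn from, and a ``noise'' part coming from the Gaussian perturbations. Write $X_i = \mu_0 + \xi_i \mu + \sigma Z_i$, where $Z_i\simiid\cN(0,I_d)$ and $\xi_i\in\{-1,+1\}$ are i.i.d.\ Rademacher (independent of the $Z_i$), with $\xi_i=+1$ iff $X_i$ comes from component $\mu_0+\mu$. Then
\begin{align*}
\widehat{\mu}_n - \mu_0 = \left(\frac{1}{n}\sum_{i=1}^n \xi_i\right)\mu + \frac{\sigma}{n}\sum_{i=1}^n Z_i,
\end{align*}
so by the triangle inequality $\|\widehat{\mu}_n-\mu_0\| \leq \left|\frac{1}{n}\sum_i \xi_i\right|\,\|\mu\| + \sigma\left\|\frac{1}{n}\sum_i Z_i\right\|$. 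It then suffices to control each of the two scalar/vector averages separately on an event of probability at least $1-\delta$ and $1-2\delta$ respectively, and union bound.

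For the Gaussian term, I would apply Proposition~\ref{thm2:concentration-new-gaussianmean} directly: choosing $\epsilon$ so that $\frac{d}{2}(\epsilon-\log(1+\epsilon)) = \log\frac1\delta$ and using the elementary bound $\epsilon - \log(1+\epsilon) \geq \frac{\epsilon}{4}\min(1,\epsilon)$ (valid for $\epsilon>0$), one gets that $\left\|\frac1n\sum_i Z_i\right\| \leq \sqrt{\frac{(1+\epsilon)d}{n}}$ with the choice $\epsilon \asymp \frac{\log\frac1\delta}{d}\max(1,\frac{\log\frac1\delta}{d})$; bounding $(1+\epsilon)d \leq 2\max(d, 8\log\frac1\delta)$ in the relevant regime yields the term $\sigma\sqrt{\frac{2\max(d,8\log\frac1\delta)}{n}}$. (The detailed constant-chasing is the same flavor as the $\epsilon_2$ computation in the proof of Proposition~\ref{thm2:concentration-new-wishart-spectral} and I would not belabor it.) For the Rademacher term, $\sum_i\xi_i$ is a sum of bounded independent mean-zero variables, so Hoeffding's inequality gives $\bP\left(\left|\frac1n\sum_i\xi_i\right| \geq \sqrt{\frac{2\log\frac1\delta}{n}}\right) \leq 2\delta$; actually a one-sided bound $e^{-2n t^2}$ on each tail suffices, and since we only need the magnitude, $\left|\frac1n\sum_i\xi_i\right|\leq\sqrt{\frac{2\log\frac1\delta}{n}}$ holds with probability at least $1-2\delta$ — but to match the stated $3\delta$ I would instead use the sub-Gaussian MGF of a Rademacher sum to get the cleaner $2e^{-nt^2/2}$ and absorb constants, or simply note that $\|\mu\|\sqrt{2\log\frac1\delta/n}$ has slack. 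I would allocate the failure probability as $2\delta$ for the Gaussian part and $\delta$ for the Rademacher part (re-deriving Hoeffding with the exponent $\log\frac1\delta$ directly), giving the total $3\delta$.

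The only mild subtlety — and the thing to be careful about rather than a genuine obstacle — is the independence structure: conditionally on the labels $\xi_1,\dots,\xi_n$, the $X_i$ are independent Gaussians with common covariance $\sigma^2 I$, and the centered noise $\frac1n\sum_i \sigma Z_i = \frac1n\sum_i (X_i - \mu_0 - \xi_i\mu)$ has the same $\cN(0,\sigma^2 I/n)$ distribution regardless of the $\xi_i$; hence the Gaussian-term bound holds unconditionally and is independent of the label-average bound, so the two events can be intersected cleanly. Everything else is routine: assemble the two bounds via the triangle inequality, union bound the two failure events, and simplify the Gaussian $\epsilon$ into the stated $\sqrt{2\max(d,8\log\frac1\delta)/n}$ form.
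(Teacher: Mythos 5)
Your proposal is correct and follows essentially the same route as the paper's proof: the decomposition $X_i=\mu_0+\xi_i\mu+\sigma Z_i$, the triangle inequality, Proposition~\ref{thm2:concentration-new-gaussianmean} with $\epsilon-\log(1+\epsilon)\geq\frac{\epsilon}{4}\min(1,\epsilon)$ for the Gaussian term, Hoeffding for the label average, and a union bound. The only detail to fix is the failure-probability split: at the stated threshold $\sqrt{2\log\frac{1}{\delta}/n}$ the two-sided Hoeffding bound gives $2\delta$, so the correct allocation (and the paper's) is $\delta$ for the Gaussian part plus $2\delta$ for the Rademacher part, not the reversed split you float.
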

\iftoggle{detailed3} {
\begin{proof}
Let $Z_1,...,Z_n\simiid \cN(0,I)$ and $Y_1,...,Y_n$ i.i.d. such that $\bP(Y_i=-1)=\bP(Y_i=1)=\frac{1}{2}$.
Then for any $\epsilon_1,\epsilon_2>0$,
\begin{align*}
&\bP\left(\|\mu_0-\widehat{\mu}_n\|\geq \sigma\sqrt{\frac{(1+\epsilon_1)d}{n}} + \|\mu\|\epsilon_2 \right)\\
&= \bP\left(\left\|\mu_0-\frac{1}{n}\sum_{i=1}^d(\sigma Z_i + \mu_0 + \mu Y_i)\right\|\geq \sigma\sqrt{\frac{(1+\epsilon_1)d}{n}} + \|\mu\|\epsilon_2 \right)\\
&= \bP\left(\left\|\sigma\frac{1}{n}\sum_{i=1}^d Z_i + \mu \frac{1}{n}\sum_{i=1}^d Y_i\right\|\geq \sigma\sqrt{\frac{(1+\epsilon_1)d}{n}} + \|\mu\|\epsilon_2 \right) \\
&\leq \bP\left(\sigma\left\|\frac{1}{n}\sum_{i=1}^d Z_i\right\| + \|\mu\| \left|  \frac{1}{n}\sum_{i=1}^d Y_i\right|\geq \sigma\sqrt{\frac{(1+\epsilon_1)d}{n}} + \|\mu\|\epsilon_2 \right) \\
&\leq \bP\left(\left\|\frac{1}{n}\sum_{i=1}^d Z_i\right\| \geq \sqrt{\frac{(1+\epsilon_1)d}{n}} \right) + 
\bP\left(\left|  \frac{1}{n}\sum_{i=1}^d Y_i\right|\geq \epsilon_2 \right) \\
&\leq \exp\left\{-\frac{d}{2}\left(\epsilon_1 - \log(1+\epsilon_1)\right)\right\} + 2\exp\left\{-\frac{n\epsilon_2^2}{2}\right\}
\end{align*}
where the last step is using Hoeffding's inequality and Proposition \ref{thm2:concentration-new-gaussianmean}.
Setting $\epsilon_2=\sqrt{\frac{2\log\frac{1}{\delta}}{n}}$,
\begin{align*}
&\bP\left(\|\mu_0-\widehat{\mu}_n\|\geq \sigma\sqrt{\frac{(1+\epsilon_1)d}{n}} + \|\mu\|\sqrt{\frac{2\log\frac{1}{\delta}}{n}} \right)\\
&\leq \exp\left\{-\frac{d}{2}\left(\epsilon_1 - \log(1+\epsilon_1)\right)\right\} + 2\delta.
\end{align*}
Since $\epsilon_1 - \log(1+\epsilon_1)\geq \frac{\epsilon_1}{4}\min(1,\epsilon_1)$,
$$
\exp\left\{-\frac{d}{2}\left(\epsilon_1 - \log(1+\epsilon_1)\right)\right\} \leq \exp\left\{-\frac{d}{8}\epsilon_1\min(1,\epsilon_1)\right\}.
$$
Setting
$$
\epsilon_1=\sqrt{\frac{8\log\frac{1}{\delta}}{d}\max\left(1,\frac{8\log\frac{1}{\delta}}{d}\right)},
$$
we have
\begin{align*}
\bP\left(\|\mu_0-\widehat{\mu}_n\|\geq \sigma\sqrt{\frac{d}{n}\left(1+\sqrt{\frac{8\log\frac{1}{\delta}}{d}\max\left(1,\frac{8\log\frac{1}{\delta}}{d}\right)}\right)} + \|\mu\|\sqrt{\frac{2\log\frac{1}{\delta}}{n}} \right)
\leq 3\delta
\end{align*}
and the bound follows.
\end{proof}} {}

\subsection{Bounding error in estimating principal direction}

\begin{prop}\label{thm2:concentration-new-gmmixture-spectral}
Let $\theta=(\mu_0-\mu,\mu_0+\mu)$ for some $\mu_0,\mu\in\bR^d$ and $X_1,...,X_n\simiid P_\theta$.
If $n\geq d$ then for any $\delta,\delta_1>0$, with probability at least $1-5\delta-2\delta_1$,
\begin{align*}
&\|\widehat{\Sigma}_n - (\sigma^2 I_d+\mu\mu^T)\|_2 \\
&\leq 3\sigma^2 \left(1+\sqrt{\frac{2\log\frac{1}{\delta}}{d}}\right)\sqrt{\frac{d}{n}} \max\left(1,\left(1+\sqrt{\frac{2\log\frac{1}{\delta}}{d}}\right)\sqrt{\frac{d}{n}}\right) \\
&+\sigma^2 \left(1+\sqrt{\frac{8\log\frac{1}{\delta}}{d}\max\left(1,\frac{8\log\frac{1}{\delta}}{d}\right)}\right) \frac{d}{n} \\
&+ 4\sigma  \|\mu\|  \sqrt{\left(1+\sqrt{\frac{8\log\frac{1}{\delta}}{d}\max\left(1,\frac{8\log\frac{1}{\delta}}{d}\right)}\right)\frac{d}{n}} 
+ \frac{2\|\mu\|^2\log\frac{1}{\delta_1}}{n} 
\end{align*}
where $\widehat{\Sigma}_n$ is the empirical covariance of $X_i$.
\end{prop}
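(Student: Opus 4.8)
The plan is to write each sample in the mixture-model form and reduce the claim to the concentration bounds already proved. Represent $X_i=\mu_0+Y_i\mu+\sigma Z_i$, where $Y_1,\dots,Y_n$ are i.i.d.\ with $\bP(Y_i=1)=\bP(Y_i=-1)=\tfrac12$, $Z_1,\dots,Z_n\simiid\cN(0,I_d)$, and $(Y_i)$ is independent of $(Z_i)$; this reproduces $P_\theta$ exactly. One checks $\bE X_i=\mu_0$ and $\mathrm{Cov}(X_i)=\sigma^2 I_d+\mu\mu^T$, so the object to control is the deviation of $\widehat{\Sigma}_n$ from this matrix. Writing $\overline{Y}=\frac1n\sum_i Y_i$, $\overline{Z}=\frac1n\sum_i Z_i$, $\overline{v}=\frac1n\sum_i Y_iZ_i$, using translation invariance of the empirical covariance, $Y_i^2=1$, and expanding $\widehat{\Sigma}_n=\frac1n\sum_i(Y_i\mu+\sigma Z_i)(Y_i\mu+\sigma Z_i)^T-(\overline{Y}\mu+\sigma\overline{Z})(\overline{Y}\mu+\sigma\overline{Z})^T$, one obtains the identity
\begin{align*}
\widehat{\Sigma}_n-(\sigma^2 I_d+\mu\mu^T)
&=\sigma^2\left(\frac1n\sum_{i=1}^n Z_iZ_i^T-\overline{Z}\,\overline{Z}^T-I_d\right)\\
&\quad+\sigma\left(\mu(\overline{v}-\overline{Y}\,\overline{Z})^T+(\overline{v}-\overline{Y}\,\overline{Z})\mu^T\right)-\overline{Y}^2\mu\mu^T .
\end{align*}
The first bracket is $\widehat{\Sigma}_n^{(Z)}-I_d$, where $\widehat{\Sigma}_n^{(Z)}$ is the empirical covariance of the $Z_i$.

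Next I would bound the three summands by the triangle inequality for $\|\cdot\|_2$. For the first, apply Proposition \ref{thm2:concentration-new-wishart-spectral} (legitimate since $n\geq d$): with probability at least $1-3\delta$, $\sigma^2\|\widehat{\Sigma}_n^{(Z)}-I_d\|_2$ is at most $\sigma^2$ times the Wishart bound, i.e.\ the first two lines of the asserted inequality. For the second, $\|\mu a^T+a\mu^T\|_2\leq 2\|\mu\|\,\|a\|$ with $a=\sigma(\overline{v}-\overline{Y}\,\overline{Z})$, and since $|\overline{Y}|\leq1$ this is $\leq 2\sigma\|\mu\|(\|\overline{v}\|+\|\overline{Z}\|)$. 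The key observation is that, because each $Y_i$ is a sign independent of $Z_i$, the vectors $Y_iZ_i$ are again i.i.d.\ $\cN(0,I_d)$, so Proposition \ref{thm2:concentration-new-gaussianmean} applies verbatim to both $\overline{v}$ and $\overline{Z}$; taking $\epsilon=\sqrt{\tfrac{8\log(1/\delta)}{d}\max(1,\tfrac{8\log(1/\delta)}{d})}$ and using $\epsilon-\log(1+\epsilon)\geq\tfrac\epsilon4\min(1,\epsilon)$ exactly as in the proof of Proposition \ref{thm2:upper-new-meandiff}, each of $\|\overline{v}\|$ and $\|\overline{Z}\|$ is $\leq\sqrt{(1+\epsilon)d/n}$ with probability $\geq1-\delta$, so by a union bound this summand is $\leq 4\sigma\|\mu\|\sqrt{(1+\epsilon)d/n}$, the third line of the bound, with probability $\geq1-2\delta$. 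For the third, $\|\overline{Y}^2\mu\mu^T\|_2=\overline{Y}^2\|\mu\|^2$, and Hoeffding's inequality gives $\bP(|\overline{Y}|\geq t)\leq2e^{-nt^2/2}$, so with $t=\sqrt{2\log(1/\delta_1)/n}$ we get $\overline{Y}^2\|\mu\|^2\leq 2\|\mu\|^2\log(1/\delta_1)/n$ with probability $\geq1-2\delta_1$.

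Finally, a union bound over these three events yields total failure probability at most $3\delta+2\delta+2\delta_1=5\delta+2\delta_1$, which is the claim. The only steps requiring care are getting the three-term identity right — in particular the exact cancellation of the $\mu_0$ contributions and of $\mu\mu^T$ — and the distributional observation $Y_iZ_i\simiid\cN(0,I_d)$ that lets the Gaussian-mean bound control the cross term; after those, the argument is a mechanical assembly of Propositions \ref{thm2:concentration-new-wishart-spectral} and \ref{thm2:concentration-new-gaussianmean} together with Hoeffding's inequality, and I anticipate no genuine obstacle.
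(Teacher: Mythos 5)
Your proposal is correct and follows essentially the same route as the paper: the same representation $X_i=\mu_0+Y_i\mu+\sigma Z_i$, the same three-term decomposition of $\widehat{\Sigma}_n-(\sigma^2I_d+\mu\mu^T)$, and the same assembly of the Wishart spectral bound, the Gaussian-mean bound applied to $\overline{Z}$ and $\frac{1}{n}\sum_i Y_iZ_i$, and Hoeffding for $\overline{Y}$, giving the stated $5\delta+2\delta_1$ failure probability. No gaps.
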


\iftoggle{detailed3} {
\begin{proof}
We can express $X_i$ as $X_i=\sigma Z_i + \mu Y_i+\mu_0$ where $Z_1,...,Z_n\simiid \cN(0,I_d)$ and $Y_1,...,Y_n$ i.i.d. such that $\bP(Y_i=-1)=\bP(Y_i=1)=\frac{1}{2}$.
Then
\begin{align*}
\widehat{\Sigma}_n - (\sigma^2 I_d+\mu\mu^T)
&= \sigma^2 (\widehat{\Sigma}_n^Z - I_d)   - \mu\mu^T \overline{Y}^2 \\
&+ \sigma  \left(  \frac{1}{n}\sum_{i=1}^n Y_iZ_i  - \overline{Y}\overline{Z} \right) \mu^T \\
&+  \sigma \mu  \left(  \frac{1}{n}\sum_{i=1}^n Y_iZ_i  - \overline{Y}\overline{Z} \right) ^T
\end{align*}
where $\widehat{\Sigma}_n^Z$ is the empirical covariance of $Z_i$ and $\overline{Y}$ and $\overline{Z}$ are the empirical means of $Y_i$ and $Z_i$.
So
\begin{align*}
\|\widehat{\Sigma}_n - (\sigma^2 I_d+\mu\mu^T)\|_2
&\leq \sigma^2 \|\widehat{\Sigma}_n^Z - I_d\|_2  + \|\mu\|^2 \overline{Y}^2 \\
&+ 2\sigma  \|\mu\| \left( \left\|  \frac{1}{n}\sum_{i=1}^n Y_iZ_i \right\| + |\overline{Y}|\|\overline{Z} \| \right).
\end{align*}
By Hoeffding's inequality, 
\begin{align*}
\bP\left( \|\mu\|^2 \overline{Y}^2 \geq  \frac{2\|\mu\|^2\log\frac{1}{\delta_1}}{n} \right) \leq 2 \delta_1.
\end{align*}
Since $|\overline{Y}|\leq 1$ and since $Y_i Z_i$ has the same distribution as $Z_i$, by Proposition \ref{thm2:concentration-new-gaussianmean}, for any $\epsilon>0$,
\begin{align*}
&\bP\left( 2\sigma  \|\mu\| \left( \left\|  \frac{1}{n}\sum_{i=1}^n Y_iZ_i \right\| + |\overline{Y}|\|\overline{Z} \| \right) \geq 4\sigma  \|\mu\|  \sqrt{\frac{(1+\epsilon)d}{n}} \right) \\
&\leq 2  \exp\left\{-\frac{d}{2}\left(\epsilon - \log(1+\epsilon)\right)\right\} 
\leq 2  \exp\left\{-\frac{d}{8}\epsilon\min(1,\epsilon)\right\} .
\end{align*}
Setting 
$$
\epsilon = \sqrt{\frac{8\log\frac{1}{\delta}}{d}\max\left(1,\frac{8\log\frac{1}{\delta}}{d}\right)}
$$
we have
\begin{align*}
&\bP\left( 2\sigma  \|\mu\| \left( \left\|  \frac{1}{n}\sum_{i=1}^n Y_iZ_i \right\| + |\overline{Y}|\|\overline{Z} \| \right) \geq 4\sigma  \|\mu\|  \sqrt{\left(1+\sqrt{\frac{8\log\frac{1}{\delta}}{d}\max\left(1,\frac{8\log\frac{1}{\delta}}{d}\right)}\right)\frac{d}{n}} \right) \\
& \leq 2 \delta .
\end{align*}
Finally, by Proposition \ref{thm2:concentration-new-wishart-spectral}, with probability at least $1-3\delta$,
\begin{align*}
\sigma^2\|\widehat{\Sigma}_n^Z - I_d\|_2 \leq&  3\sigma^2 \left(1+\sqrt{\frac{2\log\frac{1}{\delta}}{d}}\right)\sqrt{\frac{d}{n}} \max\left(1,\left(1+\sqrt{\frac{2\log\frac{1}{\delta}}{d}}\right)\sqrt{\frac{d}{n}}\right) \\
&+\sigma^2 \left(1+\sqrt{\frac{8\log\frac{1}{\delta}}{d}\max\left(1,\frac{8\log\frac{1}{\delta}}{d}\right)}\right) \frac{d}{n} 
\end{align*}
and we complete the proof by combining the three bounds.
\end{proof} } {}

\begin{prop}\label{thm2:concentration-new-offdiagonal}
Let $\theta=(\mu_0-\mu,\mu_0+\mu)$ for some $\mu_0,\mu\in\bR^d$ and $X_1,...,X_n\simiid P_\theta$.
If $n\geq d>1$ then for any $0<\delta\leq \frac{1}{\sqrt{e}}$ and $i\in[2..d]$, with probability at least $1-7\delta$,
\begin{align*}
&\left|v_i(\sigma^2I+\mu\mu^T)^T(\widehat{\Sigma}_n-(\sigma^2I+\mu\mu^T))v_1(\sigma^2I+\mu\mu^T)\right| \\
&\leq \sigma^2 \frac{1}{2}\sqrt{\frac{10\log\frac{1}{\delta}}{n}\max\left(1,\frac{10\log\frac{1}{\delta}}{n}\right)} 
+ \sigma\|\mu\| \sqrt{\frac{2\log\frac{1}{\delta}}{n}}  + (\sigma^2+  \sigma\|\mu\|) \frac{2\log\frac{1}{\delta}}{n}.
\end{align*}
\end{prop}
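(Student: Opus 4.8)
The plan is to reuse the additive decomposition of $\widehat{\Sigma}_n-(\sigma^2I+\mu\mu^T)$ from the proof of Proposition~\ref{thm2:concentration-new-gmmixture-spectral} — writing $X_k=\sigma Z_k+\mu Y_k+\mu_0$ with $Z_k\simiid\cN(0,I_d)$ and $Y_k$ Rademacher — and to sandwich it between the population eigenvectors. Fix $i\in[2..d]$ and set $v_1:=v_1(\sigma^2I+\mu\mu^T)$, $v_i:=v_i(\sigma^2I+\mu\mu^T)$. The structural point is that $v_1=\mu/\|\mu\|$ (assume $\mu\neq0$; the case $\mu=0$ is simpler) is the top eigenvector, while every $v_i$ with $i\ge2$ is orthogonal to $\mu$. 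Hence, in
\begin{align*}
\widehat{\Sigma}_n-(\sigma^2I+\mu\mu^T)&=\sigma^2(\widehat{\Sigma}_n^Z-I_d)-\mu\mu^T\overline{Y}^2\\
&\quad+\sigma\Big(\tfrac1n\sum_kY_kZ_k-\overline{Y}\,\overline{Z}\Big)\mu^T+\sigma\mu\Big(\tfrac1n\sum_kY_kZ_k-\overline{Y}\,\overline{Z}\Big)^T,
\end{align*}
the $\mu\mu^T\overline{Y}^2$ term and the second rank-one term are annihilated after sandwiching (they carry a factor $v_i^T\mu=0$), while $\mu^Tv_1=\|\mu\|$. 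Setting $a_k:=v_i^TZ_k$ and $b_k:=v_1^TZ_k$, which form $2n$ i.i.d.\ $\cN(0,1)$ variables because $v_i\perp v_1$, one is left with
\begin{align*}
v_i^T\big(\widehat{\Sigma}_n-(\sigma^2I+\mu\mu^T)\big)v_1=\sigma^2\Big(\tfrac1n\sum_ka_kb_k-\overline{a}\,\overline{b}\Big)+\sigma\|\mu\|\Big(\tfrac1n\sum_kY_ka_k-\overline{Y}\,\overline{a}\Big).
\end{align*}

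I would then control the four resulting scalar quantities separately. For $\big|\tfrac1n\sum_ka_kb_k\big|$, apply Proposition~\ref{thm2:concentration-new-prodnormal} with $\epsilon=\sqrt{\tfrac{10\log(1/\delta)}{n}\max(1,\tfrac{10\log(1/\delta)}{n})}$, for which $\epsilon\min(1,\epsilon)=\tfrac{10\log(1/\delta)}{n}$; this gives $\big|\tfrac1n\sum_ka_kb_k\big|\le\tfrac12\sqrt{\tfrac{10\log(1/\delta)}{n}\max(1,\tfrac{10\log(1/\delta)}{n})}$ with probability at least $1-2\delta$. For $\big|\tfrac1n\sum_kY_ka_k\big|$, note that $Y_ka_k$ has the same distribution as $a_k$, so this sum is $\cN(0,1/n)$ and a scalar Gaussian tail bound (using $\Phi(-x)\le\tfrac12 e^{-x^2/2}$, which is where $\delta\le 1/\sqrt{e}$, i.e.\ $\sqrt{2\log(1/\delta)}\ge1$, is convenient) gives $\big|\tfrac1n\sum_kY_ka_k\big|\le\sqrt{\tfrac{2\log(1/\delta)}{n}}$ with probability at least $1-\delta$. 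The same scalar tail bound applied to $\overline{a},\overline{b}\sim\cN(0,1/n)$ gives $|\overline{a}|,|\overline{b}|\le\sqrt{\tfrac{2\log(1/\delta)}{n}}$, each with probability at least $1-\delta$, so $|\overline{a}\,\overline{b}|\le\tfrac{2\log(1/\delta)}{n}$; and Hoeffding's inequality gives $|\overline{Y}|\le\sqrt{\tfrac{2\log(1/\delta)}{n}}$ with probability at least $1-2\delta$, hence $|\overline{Y}\,\overline{a}|\le\tfrac{2\log(1/\delta)}{n}$.

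Finally, a union bound over these five events (total failure probability $2\delta+\delta+\delta+\delta+2\delta=7\delta$) combined with the triangle inequality yields, on the good event,
\begin{align*}
\big|v_i^T(\widehat{\Sigma}_n-(\sigma^2I+\mu\mu^T))v_1\big|
&\le\sigma^2\cdot\tfrac12\sqrt{\tfrac{10\log(1/\delta)}{n}\max\big(1,\tfrac{10\log(1/\delta)}{n}\big)}+\sigma^2\cdot\tfrac{2\log(1/\delta)}{n}\\
&\quad+\sigma\|\mu\|\sqrt{\tfrac{2\log(1/\delta)}{n}}+\sigma\|\mu\|\cdot\tfrac{2\log(1/\delta)}{n},
\end{align*}
and grouping the two $\tfrac{2\log(1/\delta)}{n}$ terms as $(\sigma^2+\sigma\|\mu\|)\tfrac{2\log(1/\delta)}{n}$ gives exactly the claimed inequality. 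There is no deep obstacle; the points needing care are (i) checking that sandwiching really does kill all the $\|\mu\|^2$-order and extra cross terms, leaving precisely the two displayed pieces — this is exactly the orthogonality $v_i\perp\mu$ together with $v_1\parallel\mu$ — and (ii) the constant bookkeeping so that the failure probabilities sum to exactly $7\delta$ and Proposition~\ref{thm2:concentration-new-prodnormal} is invoked with the $\epsilon$ that produces the stated coefficient.
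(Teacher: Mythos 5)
Your proof is correct and follows essentially the same route as the paper: the paper's claim that the sandwiched quantity is equal in distribution to $\frac{1}{n}\sum_k(\sigma Z_k-\sigma\overline{Z})(\sigma W_k-\sigma\overline{W}+\|\mu\|Y_k-\|\mu\|\overline{Y})$ is exactly your eigenvector-orthogonality computation, and both arguments then bound the same four scalar terms via Proposition \ref{thm2:concentration-new-prodnormal}, Hoeffding, and the Gaussian tail bound, with the identical $7\delta$ accounting. No gaps.
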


\iftoggle{detailed3}{
\begin{proof}
Let $Z_1,W_1,...,Z_n,W_n\simiid \cN(0,1)$ and $Y_1,...,Y_n$ i.i.d. such that $\bP(Y_i=-1)=\bP(Y_i=1)=\frac{1}{2}$.
It is easy to see that the quantity of interest is equal in distribution to
$$
\left|\frac{1}{n}\sum_{j=1}^n(\sigma Z_i - \sigma \overline{Z})(\sigma W_i - \sigma \overline{W} + \|\mu\|Y_i-\|\mu\|\overline{Y})\right|
$$
where $\overline{Z},\overline{W},\overline{Y}$ are the respective empirical means.
Moreover,
\begin{align*}
&\left|\frac{1}{n}\sum_{j=1}^n(\sigma Z_i - \sigma \overline{Z})(\sigma W_i - \sigma \overline{W} + \|\mu\|Y_i-\|\mu\|\overline{Y})\right|\\
&\leq \sigma^2 \left|\frac{1}{n}\sum_{j=1}^n Z_i W_i \right|+ \sigma^2 \left|\overline{Z}\right|\left|\overline{W}\right| + \sigma\|\mu\| \left| \frac{1}{n}\sum_{j=1}^n Z_iY_i \right| + \sigma\|\mu\| \left|\overline{Z}\right|\left|\overline{Y}\right|.
\end{align*}
From Proposition \ref{thm2:concentration-new-prodnormal}, we have
\begin{align*}
\bP\left(\left|\frac{1}{n}\sum_{i=1}^n Z_iW_i\right|>\frac{1}{2}\sqrt{\frac{10\log\frac{1}{\delta}}{n}\max\left(1,\frac{10\log\frac{1}{\delta}}{n}\right)} \right)
&\leq 2 \delta;
\end{align*}
using Hoeffding's inequality, 
\begin{align*}
\bP\left( |\overline{Y}| \geq  \sqrt{\frac{2\log\frac{1}{\delta}}{n}} \right) \leq 2 \delta;
\end{align*}
and using the Gaussian tail bound, for $\delta\leq \frac{1}{\sqrt{e}}$,
\begin{align*}
\bP\left( |\overline{Z}| \geq  \sqrt{\frac{2\log\frac{1}{\delta}}{n}} \right) \leq \delta
\end{align*}
and the final result follows easily.
\end{proof} } {}

\begin{prop}\label{thm2:anglebound-new}
Let $\theta=(\mu_0-\mu,\mu_0+\mu)$ for some $\mu_0,\mu\in\bR^d$ and $X_1,...,X_n\simiid P_\theta$ with $d>1$ and $n\geq 4d$.
For any $0<\delta<\frac{d-1}{\sqrt{e}}$, if
\begin{align*}
\max\left(\frac{\sigma^2}{\|\mu\|^2},\frac{\sigma}{\|\mu\|}\right) \sqrt{\frac{\max(d,8\log\frac{1}{\delta})}{n}} \leq \frac{1}{160}
\end{align*}
then with probability at least $1-12\delta-2\exp\left(-\frac{n}{20}\right)$,
\begin{align*}
\sqrt{1-(v_1(\sigma^2I+\mu\mu^T)^Tv_1(\widehat{\Sigma}_n))^2}  
\leq 14\max\left(\frac{\sigma^2}{\|\mu\|^2}, \frac{\sigma}{\|\mu\|}\right) \sqrt{d}  \sqrt{\frac{10\log\frac{d}{\delta}}{n}\max\left(1,\frac{10\log\frac{d}{\delta}}{n}\right)} .
\end{align*}
\end{prop}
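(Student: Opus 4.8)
The plan is to apply the Davis--Kahan bound of Lemma~\ref{thm2:DK-tighter-one} with $A=\sigma^2I+\mu\mu^T$ and $E=\widehat{\Sigma}_n-A$, so that $A+E=\widehat{\Sigma}_n$. The spectrum of $A$ is explicit: $v_1(A)=\mu/\|\mu\|$ has eigenvalue $\sigma^2+\|\mu\|^2$, every unit vector orthogonal to $\mu$ is an eigenvector with eigenvalue $\sigma^2$, and hence $\lambda_1(A)-\lambda_2(A)=\|\mu\|^2>0$ (the hypothesis forces $\|\mu\|>0$, since otherwise its left side is infinite). Fixing once and for all an orthonormal basis $v_2(A),\dots,v_d(A)$ of $\mu^\perp$ (note $\|u\|^2$ does not depend on this choice, being the squared norm of $E v_1(A)$ projected onto $\mu^\perp$), Lemma~\ref{thm2:DK-tighter-one} would give, on the event $\{\|E\|_2\le\|\mu\|^2/5\}$,
$$
\sqrt{1-(v_1(\sigma^2I+\mu\mu^T)^Tv_1(\widehat{\Sigma}_n))^2}\le\frac{4\|u\|}{\|\mu\|^2},\qquad u_i=v_{i+1}(A)^TEv_1(A).
$$
So the proof reduces to two high-probability claims: (i) the precondition $\|E\|_2\le\|\mu\|^2/5$, and (ii) a bound on $\|u\|$.

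For (i), I would invoke Proposition~\ref{thm2:concentration-new-gmmixture-spectral} with the given $\delta$ and with $\delta_1=\exp(-n/20)$, so that it fails with probability at most $5\delta+2\exp(-n/20)$ and, on its complement, its last term is exactly $\|\mu\|^2/10$. Writing $r=\max(\sigma^2/\|\mu\|^2,\sigma/\|\mu\|)$, each remaining term of that bound is $\|\mu\|^2$ times a product of powers of $r\sqrt{d/n}$, $r\sqrt{\log\frac1\delta/n}$ and the factors $1+\sqrt{2\log\frac1\delta/d}$ and $1+\sqrt{(8\log\frac1\delta/d)\max(1,\cdot)}$; using $n\ge4d$, the inequality $8\log\frac1\delta\le\max(d,8\log\frac1\delta)$, the bounds $\sigma^2/\|\mu\|^2\le\min(r,r^2)$ and $\sigma/\|\mu\|\le r$, and the standing hypothesis $r\sqrt{\max(d,8\log\frac1\delta)/n}\le\frac1{160}$, a short computation bounds each of these terms by a small multiple of $\|\mu\|^2/160$, so that their sum together with $\|\mu\|^2/10$ stays below $\|\mu\|^2/5$. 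I expect this term-by-term accounting --- confirming that the hypothesis constant $\tfrac1{160}$ really does place $\widehat{\Sigma}_n$ inside the Davis--Kahan radius $\|\mu\|^2/5$ --- to be the main obstacle, though it is purely mechanical.

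For (ii), I would write $\|u\|^2=\sum_{i=2}^d(v_i(A)^TEv_1(A))^2\le(d-1)\max_{2\le i\le d}(v_i(A)^TEv_1(A))^2$, then apply Proposition~\ref{thm2:concentration-new-offdiagonal} with $\delta/(d-1)$ in place of $\delta$ (legitimate since $\delta/(d-1)<1/\sqrt e$ by hypothesis) for each fixed basis vector $v_i(A)$, $i\in[2..d]$, and union-bound over these $d-1$ indices: with probability at least $1-7\delta$, every $|v_i(A)^TEv_1(A)|$ is at most
$$
\frac{\sigma^2}{2}\sqrt{\frac{10\log\frac{d-1}{\delta}}{n}\max\left(1,\frac{10\log\frac{d-1}{\delta}}{n}\right)}+\sigma\|\mu\|\sqrt{\frac{2\log\frac{d-1}{\delta}}{n}}+(\sigma^2+\sigma\|\mu\|)\frac{2\log\frac{d-1}{\delta}}{n}.
$$
Then $\|u\|\le\sqrt{d-1}$ times this quantity; multiplying by $4/\|\mu\|^2$, bounding $\log\frac{d-1}{\delta}\le\log\frac d\delta$ and $\sqrt{d-1}\le\sqrt d$, and using $\sigma^2/\|\mu\|^2\le r$, $\sigma/\|\mu\|\le r$, $(\sigma^2+\sigma\|\mu\|)/\|\mu\|^2\le2r$, $\sqrt2\le\sqrt{10}$, and $a\le\sqrt{a\max(1,a)}$ (with $a=\tfrac{10\log(d/\delta)}{n}>0$), the three pieces contribute at most $2$, $4$, and $\tfrac85$ times $\sqrt d\,r\sqrt{\frac{10\log\frac d\delta}{n}\max(1,\frac{10\log\frac d\delta}{n})}$, so their sum is below $14$ times this, giving
$$
\frac{4\|u\|}{\|\mu\|^2}\le14\max\left(\frac{\sigma^2}{\|\mu\|^2},\frac{\sigma}{\|\mu\|}\right)\sqrt d\,\sqrt{\frac{10\log\frac d\delta}{n}\max\left(1,\frac{10\log\frac d\delta}{n}\right)}.
$$

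Finally, combining the events from (i) and (ii) by a union bound yields total failure probability at most $(5\delta+2\exp(-n/20))+7\delta=12\delta+2\exp(-n/20)$, which is exactly the asserted bound.
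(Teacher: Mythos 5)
Your proposal is correct and follows essentially the same route as the paper: apply Proposition~\ref{thm2:concentration-new-gmmixture-spectral} with $\delta_1=\exp(-n/20)$ to verify the Davis--Kahan precondition $\|E\|_2\le\|\mu\|^2/5$, apply Proposition~\ref{thm2:concentration-new-offdiagonal} with $\delta/(d-1)$ union-bounded over the $d-1$ off-diagonal coordinates to control $\|u\|$, and invoke Lemma~\ref{thm2:DK-tighter-one} with eigengap $\|\mu\|^2$, giving failure probability $12\delta+2\exp(-n/20)$. Your term-by-term simplification (constants $2$, $4$, $8/5$ summing below $14$) and the check that the hypothesis constant $1/160$ keeps $\|E\|_2$ within the Davis--Kahan radius are exactly the details the paper compresses into ``the result follows after some simplifications.''
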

\iftoggle{detailed3}{
\begin{proof}
By Proposition \ref{thm2:concentration-new-gmmixture-spectral} (with $\delta_1=\exp\left(-\frac{n}{20}\right)$), Proposition \ref{thm2:concentration-new-offdiagonal} (with $\delta_2=\frac{\delta}{d-1}$), and Lemma \ref{thm2:DK-tighter-one}, with probability at least $1-12\delta-2\exp\left(-\frac{n}{20}\right)$,
\begin{align*}
&\sqrt{1-(v_1(\sigma^2I+\mu\mu^T)^Tv_1(\widehat{\Sigma}_n))^2}  \\
&\leq \frac{4\sqrt{d-1}}{\|\mu\|^2} \left[ \sigma^2 \frac{1}{2}\sqrt{\frac{10\log\frac{d-1}{\delta}}{n}\max\left(1,\frac{10\log\frac{d-1}{\delta}}{n}\right)} 
+ \sigma\|\mu\| \sqrt{\frac{2\log\frac{d-1}{\delta}}{n}}  + (\sigma^2+  \sigma\|\mu\|) \frac{2\log\frac{d-1}{\delta}}{n}\right]  
\end{align*}
and the result follows after some simplifications.
%
%
%
%
%
%

\end{proof}}{}

\subsection{General result relating error in estimating mean and principal direction to clustering loss}

\begin{prop}\label{thm2:upper-clusterlossbnd-general}
Let $\theta=(\mu_0-\mu,\mu_0+\mu)$ and let 
$$
\widehat{F}(x) = \left\{
\begin{array}{rl}
1 & \mbox{if } x^Tv\geq x_0^Tv \\
2 & \mbox{otherwise} 
\end{array}
\right.
$$
for some $x_0,v\in\bR^d$, with $\|v\|=1$.
Define $\cos\beta=|v^T\mu|/\|\mu\|$.
If $|(x_0-\mu_0)^Tv|\leq \sigma\epsilon_1+\|\mu\|\epsilon_2$ for some $\epsilon_1\geq 0$ and $0\leq\epsilon_2\leq\frac{1}{4}$, and if $\sin\beta\leq\frac{1}{\sqrt{5}}$, then
\begin{align*}
L_\theta(\widehat{F})
\leq \exp\left\{-\frac{1}{2}\max\left(0,\frac{\|\mu\|}{2\sigma}-2\epsilon_1\right)^2 \right\}
 \left[ 2\epsilon_1+\epsilon_2\frac{\|\mu\|}{\sigma}  
 + 2\sin\beta\left(2\sin\beta\frac{\|\mu\|}{\sigma}+ 1\right) \right] .
\end{align*}
\end{prop}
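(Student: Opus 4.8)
The plan is to identify $L_\theta(\widehat F)$ with the $P_\theta$-mass of a symmetric difference of two halfspaces and to bound that mass by a one-dimensional Gaussian integral. First I would use the permutation in the definition of $L_\theta$ to fix the orientation of $v$: the two signs of $v$ give the two permutations, and neither $\cos\beta$ nor $|(x_0-\mu_0)^Tv|$ is affected, so I may assume $v^T\mu=-\|\mu\|\cos\beta$ (note $\sin\beta\le1/\sqrt5<1$ forces $\cos\beta>0$). Since the mixture components share covariance $\sigma^2I$, one has $F_\theta(x)=1\iff(x-\mu_0)^T\mu\le0$, so $\{F_\theta\ne\widehat F\}$ is the symmetric difference of $\{(x-\mu_0)^T\mu\le0\}$ and $\{(x-x_0)^Tv\ge0\}$, and $L_\theta(\widehat F)=P_\theta(\{F_\theta\ne\widehat F\})$ for this orientation.

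Next I would coordinatize: with $u=\mu/\|\mu\|$, $v_\perp=v-(v^Tu)u$ (so $\|v_\perp\|=\sin\beta$), and $X=\mu_0\pm\mu+\sigma Z$, $Z\sim\cN(0,I_d)$, the independent variables $W=Z^Tu\sim\cN(0,1)$ and $V=Z^Tv_\perp\sim\cN(0,\sin^2\beta)$ encode the two halfspace conditions. Writing those conditions on each mixture component in terms of $(W,V)$, averaging over the two components, and substituting $t\mapsto-t$ in one resulting integral, I expect the representation
$$L_\theta(\widehat F)=\int_{-\infty}^{0}g(t)\Bigl[\Phi\bigl(\tfrac{t+b}{\tan\beta}\bigr)+\Phi\bigl(\tfrac{t-b}{\tan\beta}\bigr)\Bigr]\,dt,\qquad g(t)=\tfrac12\bigl(\phi(t-c)+\phi(t+c)\bigr),$$
where $c=\|\mu\|/\sigma$ and $b=(x_0-\mu_0)^Tv/(\sigma\cos\beta)$, so $|b|\cos\beta\le\epsilon_1+c\epsilon_2$. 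The degenerate case $\beta=0$ (the symmetric difference is then a slab of $u$-width $\sigma|b|$) I would handle by directly computing its mixture mass. As the bracket is even in $b$, assume $b\ge0$.

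To bound the integral I would use $g(t)\le\frac1{\sqrt{2\pi}}\exp\{-\tfrac12(|t|-c)^2\}$, the tail bound $1-\Phi(x)\le\tfrac12e^{-x^2/2}$, and the identity $\int_{\bR}\phi(s-m)e^{-s^2/(2\tan^2\beta)}\,ds=\sin\beta\,e^{-m^2\cos^2\beta/2}$. Split $L_\theta(\widehat F)$ into its two $\Phi$-summands and split the $t+b$ one further at $t=-b$. The $t-b$ summand and the $t\le-b$ part of the $t+b$ summand are each, via the tail bound and the identity, at most $\tfrac12\sin\beta\,e^{-(c-b)^2\cos^2\beta/2}$, while the leftover slab piece $\int_{-b}^{0}g(t)\Phi(\tfrac{t+b}{\tan\beta})\,dt$ is (when $b\le c$) at most $\tfrac{b}{\sqrt{2\pi}}e^{-(c-b)^2/2}$, bounding $\Phi\le1$ and $g(t)\le\frac1{\sqrt{2\pi}}e^{-(c-b)^2/2}$ on $|t|\le b$. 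Summing and using $\cos^2\beta\le1$, $(c-b)^2\le c^2$ yields $L_\theta(\widehat F)\le\bigl(\sin\beta+\tfrac{b}{\sqrt{2\pi}}\bigr)e^{-(c-b)^2\cos^2\beta/2}$.

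Finally I would translate constants. Using $\sin\beta\le1/\sqrt5$ (so $\cos^2\beta\ge4/5$, $1/\cos\beta\le\sqrt5/2$) and $\epsilon_2\le1/4$: $(c-b)\cos\beta\ge c(\cos\beta-\epsilon_2)-\epsilon_1\ge\tfrac c2-2\epsilon_1$, hence $(c-b)^2\cos^2\beta\ge\max(0,\tfrac c2-2\epsilon_1)^2$; and $\sin\beta+\tfrac{b}{\sqrt{2\pi}}\le\sin\beta+\tfrac{\sqrt5}{2\sqrt{2\pi}}(\epsilon_1+c\epsilon_2)\le2\epsilon_1+c\epsilon_2+2\sin\beta(2c\sin\beta+1)$; together these give the claim. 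The assumption $b\le c$ used above is automatic when $\tfrac c2-2\epsilon_1>0$ (then $\epsilon_1<c/4$ and $b<c$); when $\tfrac c2-2\epsilon_1\le0$ the target exponential factor is $1$, and bounding the three pieces crudely---the slab piece simply by $\tfrac{b}{\sqrt{2\pi}}$---again gives $L_\theta(\widehat F)\le\sin\beta+\tfrac{b}{\sqrt{2\pi}}\le2\epsilon_1+c\epsilon_2+2\sin\beta(2c\sin\beta+1)$. The main obstacle is pinning down the integral representation of the second step exactly---tracking which halfspace is which and correctly combining the two mixture components; after that the estimates are routine, and the wide slack in the final step (the stated bound is far from tight) is why crude bounds suffice.
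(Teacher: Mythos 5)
Your proposal is correct, and its skeleton coincides with the paper's: fix the label matching via the sign of $v$, project onto the plane spanned by $\mu$ and $v$, write the loss as a one-dimensional mixture integral, and separate an offset (slab) contribution of size $O(b)$ from an angular contribution of size $O(\sin\beta)$, each carrying the factor $\exp\{-\tfrac12\max(0,\tfrac{c}{2}-2\epsilon_1)^2\}$ (with your $c=\|\mu\|/\sigma$ and $b=(x_0-\mu_0)^Tv/(\sigma\cos\beta)$). Where you genuinely differ is the treatment of the angular part: the paper bounds $|B(\breve y)|\le|\breve y|\tan\beta+r$ and then estimates the resulting wedge integral by a rotation of coordinates and the pointwise bound $\phi(y)\le\phi(A\cos\beta)$, whereas you keep the exact tail-integral representation $\int_{-\infty}^0 g(t)\bigl[\Phi\bigl(\tfrac{t+b}{\tan\beta}\bigr)+\Phi\bigl(\tfrac{t-b}{\tan\beta}\bigr)\bigr]dt$ and control the two tail pieces with $1-\Phi(x)\le\tfrac12 e^{-x^2/2}$ together with the convolution identity $\int\phi(s-m)e^{-s^2/(2\tan^2\beta)}ds=\sin\beta\,e^{-m^2\cos^2\beta/2}$. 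I verified the representation, the identity, the envelope $g(t)\le\tfrac{1}{\sqrt{2\pi}}e^{-(|t|-c)^2/2}$, the three-piece bound $(\sin\beta+\tfrac{b}{\sqrt{2\pi}})e^{-(c-b)^2\cos^2\beta/2}$, the case split on the sign of $\tfrac c2-2\epsilon_1$ (including $b\le c$ in the first case), and the constant translation under $\sin\beta\le\tfrac{1}{\sqrt5}$, $\epsilon_2\le\tfrac14$; all steps go through, so your route is a valid, somewhat more computational alternative that trades the paper's wedge geometry for the Gaussian convolution identity. One cosmetic remark: you assert equality $L_\theta(\widehat F)=P_\theta(F_\theta\ne\widehat F)$ for your chosen orientation, while in general only $L_\theta(\widehat F)\le P_\theta(F_\theta\ne\widehat F)$ is guaranteed (the minimum could be attained by the other permutation); since you only need an upper bound, this is harmless.
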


\iftoggle{detailed3} {
\begin{proof}

\begin{align*}
L_\theta(\widehat{F})
=& \min\limits_\pi P_\theta(\{x:\;F_\theta(x)\neq \pi(\widehat{F}(x))\}) \\
=& \min\left\{ P_\theta[\{x:\;((x-\mu_0)^T\mu)((x-x_0)^Tv)\geq 0\}], \right. \\
 & \;\;\;\;\;\;\;\;  \left. P_\theta[\{x:\;((x-\mu_0)^T\mu)((x-x_0)^Tv)\leq 0\}] \right\}.
\end{align*}
WLOG assume $v^T\mu\geq 0$ (otherwise we can simply replace $v$ with $-v$, which does not affect the bound).
Then
\begin{align*}
L_\theta(\widehat{F})
&= P_\theta[\{x:\;((x-\mu_0)^T\mu)((x-x_0)^Tv)\leq 0\}] \\
&= P_\theta[\{x:\;((x-\mu_0)^T\mu)((x-\mu_0)^Tv-(x_0-\mu_0)^Tv)\leq 0\}] \\
&= P_\theta\left[\left\{x:\;\left((x-\mu_0)^T\frac{\mu}{\|\mu\|}\right)((x-\mu_0)^Tv-(x_0-\mu_0)^Tv)\leq 0\right\}\right].
\end{align*}
Define 
$$
\breve{\mu} = \frac{\mu}{\|\mu\|},
$$
$$
\breve{x} = (x-\mu_0)^T \breve{\mu},
$$
and
$$
\breve{y} = (x-\mu_0)^T \frac{v-\breve{\mu}\breve{\mu}^Tv}{\|v-\breve{\mu}\breve{\mu}^Tv\|} \equiv (x-\mu_0)^T \frac{v-\breve{\mu}\breve{\mu}^Tv}{\sin\beta}
$$
so that
\begin{align*}
L_\theta(\widehat{F})
&= P_\theta\left[\left\{x:\;\breve{x}\left(\breve{y}\sin\beta +\breve{x}\cos\beta-(x_0-\mu_0)^Tv\right)\leq 0\right\}\right] \\
&= P_\theta\left[\left\{x:\; \min(0,B(\breve{y})) \leq \breve{x} \leq \max(0,B(\breve{y}))\right\}\right]
\end{align*}
where
$$
B(\breve{y}) = \frac{(x_0-\mu_0)^Tv}{\cos\beta}-\breve{y}\tan\beta.
$$
Since $\breve{x}$ and $\breve{y}$ are projections of $x-\mu_0$ onto orthogonal unit vectors, and since $\breve{x}$ is exactly the component of $x-\mu_0$ that lies in the direction of $\mu$, we can integrate out all other directions and obtain
\begin{align*}
L_\theta(\widehat{F})
&=  \int\limits_{-\infty}^\infty \phi_\sigma(\breve{y}) \int\limits_{\min(0,B(\breve{y}))}^{\max(0,B(\breve{y}))} \left(\frac{1}{2}\phi_\sigma(\breve{x}+\|\mu\|) + \frac{1}{2}\phi_\sigma(\breve{x}-\|\mu\|)\right) d\breve{x} d\breve{y} 
\end{align*}
where $\phi_\sigma$ is the density of $\cN(0,\sigma^2)$.
But,
\begin{align*}
&\int\limits_{\min(0,B(\breve{y}))}^{\max(0,B(\breve{y}))} \left(\frac{1}{2}\phi_\sigma(\breve{x}+\|\mu\|) + \frac{1}{2}\phi_\sigma(\breve{x}-\|\mu\|)\right) d\breve{x} \\
&=\frac{1}{2}\int\limits_{\min(0,B(\breve{y}))}^{\max(0,B(\breve{y}))} \phi_\sigma(\breve{x}+\|\mu\|) d\breve{x} + \frac{1}{2}\int\limits_{\min(0,B(\breve{y}))}^{\max(0,B(\breve{y}))} \phi_\sigma(\breve{x}-\|\mu\|) d\breve{x} \\
&=\frac{1}{2} \left(\Phi\left(\frac{\max(0,B(\breve{y}))+\|\mu\|}{\sigma}\right)-\Phi\left(\frac{\min(0,B(\breve{y}))+\|\mu\|}{\sigma}\right)\right) \\
&+ \frac{1}{2} \left(-\Phi\left(\frac{-\max(0,B(\breve{y}))+\|\mu\|}{\sigma}\right)+\Phi\left(\frac{-\min(0,B(\breve{y}))+\|\mu\|}{\sigma}\right)\right) \\
&=\frac{1}{2} \left(\Phi\left(\frac{\|\mu\|+|B(\breve{y})|}{\sigma}\right)-\Phi\left(\frac{\|\mu\|-|B(\breve{y})|}{\sigma}\right)\right) .
\end{align*}
Since the above quantity is increasing in $|B(\breve{y})|$, and since $|B(\breve{y})|\leq |\breve{y}|\tan\beta + r$ where
$$
r = \left|\frac{(x_0-\mu_0)^Tv}{\cos\beta}\right|,
$$
we have that, replacing $\breve{y}$ by $x$,
\begin{align*}
L_\theta(\widehat{F})
&\leq \frac{1}{2}\int\limits_{-\infty}^{\infty} \frac{1}{\sigma}\phi\left(\frac{x}{\sigma}\right)
\left[\Phi\left(\frac{\|\mu\|+|x|\tan\beta + r}{\sigma}\right) -\Phi\left(\frac{\|\mu\|-|x|\tan\beta - r}{\sigma}\right) \right]dx \\
&\leq \int\limits_{-\infty}^{\infty} \frac{1}{\sigma}\phi\left(\frac{x}{\sigma}\right)
\left[\Phi\left(\frac{\|\mu\|}{\sigma}\right) -\Phi\left(\frac{\|\mu\|-|x|\tan\beta - r}{\sigma}\right) \right]dx \\
&= \int\limits_{-\infty}^{\infty} \phi(x)
\left[\Phi\left(\frac{\|\mu\|-r}{\sigma}\right) -\Phi\left(\frac{\|\mu\|- r}{\sigma}-|x|\tan\beta\right) \right]dx \\
&+ \left[\Phi\left(\frac{\|\mu\|}{\sigma}\right) -\Phi\left(\frac{\|\mu\| - r}{\sigma}\right) \right] .
\end{align*}
Since $\tan\beta\leq \frac{1}{2}$, we have that $r\leq2|(x_0-\mu_0)^Tv|\leq 2\sigma\epsilon_1+2\|\mu\|\epsilon_2$ and
\begin{align*}
\Phi\left(\frac{\|\mu\|}{\sigma}\right) -\Phi\left(\frac{\|\mu\| - r}{\sigma}\right) 
&\leq \frac{r}{\sigma} \phi\left(\max\left(0,\frac{\|\mu\| - r}{\sigma}\right)\right)   \\
&\leq  \left(2\epsilon_1+2\epsilon_2\frac{\|\mu\|}{\sigma} \right)\phi\left(\max\left(0,(1-2\epsilon_2)\frac{\|\mu\|}{\sigma} - 2\epsilon_1\right)\right)   ,
\end{align*}
and since $\epsilon_2\leq\frac{1}{4}$,
\begin{align*}
\Phi\left(\frac{\|\mu\|}{\sigma}\right) -\Phi\left(\frac{\|\mu\| - r}{\sigma}\right) 
&\leq  2\left(\epsilon_1+\epsilon_2\frac{\|\mu\|}{\sigma} \right)\phi\left(\max\left(0,\frac{\|\mu\|}{2\sigma} - 2\epsilon_1\right)\right).
\end{align*}

Defining $A=\left|\frac{\|\mu\|- r}{\sigma}\right|$, 
\begin{align*}
&\int\limits_{-\infty}^{\infty} \phi(x)
\left[\Phi\left(\frac{\|\mu\|-r}{\sigma}\right) -\Phi\left(\frac{\|\mu\|- r}{\sigma}-|x|\tan\beta\right) \right]dx  \\
&\leq 
2\int\limits_{0}^{\infty} \int\limits_{A-x\tan\beta}^{A}  \phi(x) \phi(y) dydx
=2\int\limits_{-A\sin\beta}^{\infty} \int\limits_{A\cos\beta}^{A\cos\beta+(x+A\sin\beta)\tan\beta}  \phi(x) \phi(y) dydx \\
&\leq 2\phi(A\cos\beta)\tan\beta \int\limits_{-A\sin\beta}^{\infty} (x+A\sin\beta)  \phi(x) dx \\
&= 2\phi(A\cos\beta)\tan\beta \left(A\sin\beta\Phi(A\sin\beta) +  \phi(A\sin\beta)\right) \\
&\leq 2\phi\left(A\right)\tan\beta \left(A\sin\beta + 1\right) \\
&\leq 2\phi\left(\max\left(0,\frac{\|\mu\|- r}{\sigma}\right)\right)\tan\beta \left(\left(\frac{\|\mu\|+ r}{\sigma}\right)\sin\beta + 1\right)
\end{align*}
and
\begin{align*}
&\int\limits_{-\infty}^{\infty} \phi(x)
\left[\Phi\left(\frac{\|\mu\|-r}{\sigma}\right) -\Phi\left(\frac{\|\mu\|- r}{\sigma}-|x|\tan\beta\right) \right]dx  \\
&\leq 2\phi\left(\max\left(0,\frac{\|\mu\|}{2\sigma}-2\epsilon_1\right)\right)\tan\beta \left(\left(2\frac{\|\mu\|}{\sigma}+2\epsilon_1\right)\sin\beta + 1\right).
\end{align*}
So we have that
\begin{align*}
L_\theta(\widehat{F})
&\leq  2\left(\epsilon_1+\epsilon_2\frac{\|\mu\|}{\sigma} \right)\phi\left(\max\left(0,\frac{\|\mu\|}{2\sigma} - 2\epsilon_1\right)\right) \\
&+ 2\phi\left(\max\left(0,\frac{\|\mu\|}{2\sigma}-2\epsilon_1\right)\right)\tan\beta \left(\left(2\frac{\|\mu\|}{\sigma}+2\epsilon_1\right)\sin\beta + 1\right) \\
&\leq \phi\left(\max\left(0,\frac{\|\mu\|}{2\sigma}-2\epsilon_1\right)\right) \times \\
& \times \left[ 2\epsilon_1+2\epsilon_2\frac{\|\mu\|}{\sigma}  
 + 4\sin\beta\tan\beta\frac{\|\mu\|}{\sigma}+4\epsilon_1\sin\beta\tan\beta + 2\tan\beta \right]  \\
&\leq \exp\left\{-\frac{1}{2}\max\left(0,\frac{\|\mu\|}{2\sigma}-2\epsilon_1\right)^2 \right\}
 \left[ 2\epsilon_1+\epsilon_2\frac{\|\mu\|}{\sigma}  
 + \tan\beta\left(2\sin\beta\frac{\|\mu\|}{\sigma}+ 1\right) \right] .
\end{align*}

\end{proof}}{}

\subsection{Non-sparse upper bound}

\begin{theorem}\label{thm2:u5-ns-upper}
For any $\theta\in\Theta_\lambda$ and $X_1,...,X_n\simiid P_\theta$, let
$$
\widehat{F}(x) = \left\{
\begin{array}{rl}
1 & \mbox{if } x^Tv_1(\widehat{\Sigma}_n)\geq \widehat{\mu}_n^Tv_1(\widehat{\Sigma}_n) \\
2 & \mbox{otherwise,} 
\end{array}
\right.
$$
and let $n\geq \max(68,4d)$, $d\geq1$.

Then
\begin{align*}
\sup_{\theta\in\Theta_\lambda} \bE L_\theta(\widehat{F})
\leq 600 \max\left(\frac{4\sigma^2}{\lambda^2}, 1\right)   \sqrt{\frac{d\log(nd)}{n}}.
\end{align*}

Furthermore, if $\frac{\lambda}{\sigma} \geq 2\max(80,14\sqrt{5d})$, then
\begin{align*}
\sup_{\theta\in\Theta_\lambda} \bE L_\theta(\widehat{F})
\leq 17\exp\left(-\frac{n}{32}\right)
+ 9\exp\left(-\frac{\lambda^2}{80\sigma^2} \right).
\end{align*}
\end{theorem}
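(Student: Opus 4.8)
The plan is to combine three earlier results: Proposition~\ref{thm2:upper-new-meandiff} (error of $\widehat\mu_n$), Proposition~\ref{thm2:anglebound-new} (angle between $\mu/\|\mu\|$ and $v_1(\widehat\Sigma_n)$), and Proposition~\ref{thm2:upper-clusterlossbnd-general} (loss of a linear rule in terms of those two errors). Fix $\theta\in\Theta_\lambda$, write $\mu_0=\tfrac12(\mu_1+\mu_2)$ and $\mu=\tfrac12(\mu_2-\mu_1)$, so $\theta=(\mu_0-\mu,\mu_0+\mu)$, $\|\mu\|\ge\lambda/2$, and $v_1(\sigma^2I+\mu\mu^T)=\mu/\|\mu\|$ with eigengap $\|\mu\|^2$. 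Set $t=\|\mu\|/\sigma$, $M=\max(\sigma^2/\|\mu\|^2,\sigma/\|\mu\|)$, $\bar M=\max(4\sigma^2/\lambda^2,1)$; since $\lambda\le2\|\mu\|$ we have $1/t^2\le\bar M$ and $M\le\bar M$. For a suitable $\delta>0$, on the event $G$ where both Proposition~\ref{thm2:upper-new-meandiff} and Proposition~\ref{thm2:anglebound-new} hold ($\bP(G)\ge1-15\delta-2e^{-n/20}$) I would apply Proposition~\ref{thm2:upper-clusterlossbnd-general} with $x_0=\widehat\mu_n$, $v=v_1(\widehat\Sigma_n)$, $\epsilon_1=\sqrt{2\max(d,8\log\tfrac1\delta)/n}$, $\epsilon_2=\sqrt{2\log\tfrac1\delta/n}$, and $\sin\beta=\sqrt{1-(v_1(\sigma^2I+\mu\mu^T)^Tv_1(\widehat\Sigma_n))^2}$ (using $|(x_0-\mu_0)^Tv|\le\|\widehat\mu_n-\mu_0\|$), provided its hypotheses $\epsilon_2\le\tfrac14$ and $\sin\beta\le1/\sqrt5$ hold on $G$. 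Off $G$ I use the trivial $L_\theta(\widehat F)\le\tfrac12$, so $\bE L_\theta(\widehat F)\le(\text{bound on }L_\theta\text{ on }G)+15\delta+2e^{-n/20}$. When $d=1$ the setting degenerates ($v_1(\widehat\Sigma_n)=\pm1$, hence $\sin\beta=0$, and Proposition~\ref{thm2:anglebound-new} is not needed); the argument below then specializes by dropping the $\rho$-terms and replacing $15\delta$ by $3\delta$.

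\emph{First bound.} If $600\bar M\sqrt{d\log(nd)/n}\ge\tfrac12$ the claim is immediate. Otherwise $n>(1200\bar M)^2d\log(nd)\ge1200^2\max(d,\log(nd))$; I take $\delta=1/(nd)$ (which lies in $(0,(d-1)/\sqrt e)$ for $d\ge2$). Then $\epsilon_1\le\sqrt{16d\log(nd)/n}<\tfrac1{300}$, $\epsilon_2<\tfrac1{800}\le\tfrac14$, and $c:=\sqrt{\tfrac{10\log(nd^2)}{n}\max(1,\tfrac{10\log(nd^2)}{n})}\le\sqrt{20\log(nd)/n}$; the side condition of Proposition~\ref{thm2:anglebound-new}, $M\sqrt{\max(d,8\log\tfrac1\delta)/n}\le\tfrac1{160}$, holds because $M\le\bar M$ and $n$ is large; and $\sin\beta\le\rho:=14M\sqrt d\,c\le14\bar M\sqrt d\,c<\tfrac1{\sqrt5}$. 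Proposition~\ref{thm2:upper-clusterlossbnd-general} then yields, on $G$,
\begin{equation*}
L_\theta(\widehat F)\le\exp\Big\{-\tfrac12\max\big(0,\tfrac t2-2\epsilon_1\big)^2\Big\}\big[2\epsilon_1+\epsilon_2t+2\rho(2\rho t+1)\big].
\end{equation*}
Bounding the exponential by $1$: $2\epsilon_1$ and $2\rho$ are each $\lesssim\bar M\sqrt{d\log(nd)/n}$ at once; $4\rho^2t=784\,dc^2(M^2t)\lesssim\bar M^2\tfrac{d\log(nd)}{n}\lesssim\bar M\sqrt{d\log(nd)/n}$, using $M^2t\le\bar M^2$ and $\tfrac{d\log(nd)}{n}<\tfrac1{1200\bar M}\sqrt{d\log(nd)/n}$; and $\epsilon_2t\exp\{-\tfrac12\max(0,\tfrac t2-2\epsilon_1)^2\}\lesssim\bar M\sqrt{d\log(nd)/n}$ by splitting on $t\le1$ (then $\epsilon_2t\le\epsilon_2$ and $\bar M\ge1$) versus $t>1$ (then $\tfrac t2-2\epsilon_1\ge\tfrac t3$, so the expression is $\le\sqrt{2\log(nd)/n}\,\sup_ss e^{-s^2/18}\lesssim\sqrt{d\log(nd)/n}$). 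Hence the bound on $L_\theta$ on $G$ is at most a universal constant (well under $300$) times $\bar M\sqrt{d\log(nd)/n}$, while $15\delta+2e^{-n/20}=\tfrac{15}{nd}+2e^{-n/20}\le17\sqrt{d\log(nd)/n}$ since $n\ge68$ (using $e^{-n/20}\le n^{-1/2}$ there). Summing gives the first bound with plenty of slack.

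\emph{Second bound.} Here I take $\delta=e^{-n/32}$, so $\bP(G^c)\le15e^{-n/32}+2e^{-n/20}\le17e^{-n/32}$. With this $\delta$ and $n\ge4d$ one has $8\log\tfrac1\delta=n/4\ge d$, hence $\epsilon_1=1/\sqrt2$ and $\epsilon_2=1/4$; and since $t=\|\mu\|/\sigma\ge\lambda/(2\sigma)\ge80$ (so $M=\sigma/\|\mu\|\le\tfrac1{80}$) and $t\ge14\sqrt{5d}$, the side condition of Proposition~\ref{thm2:anglebound-new} reduces to $M/2\le\tfrac1{160}$ (true) and $\sin\beta\le\rho=14M\sqrt d\,c\le c/\sqrt5\le1/\sqrt5$ (as $10(\log d+n/32)/n\le1$ here, so $c\le1$). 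Since $t\ge80$, $\tfrac t2-2\epsilon_1=\tfrac t2-\sqrt2>0$, and a short computation using $\lambda^2/\sigma^2\le4t^2$ gives $\tfrac12\big(\tfrac t2-\sqrt2\big)^2\ge\tfrac{\lambda^2}{80\sigma^2}+\tfrac{t^2}{16}$; so in Proposition~\ref{thm2:upper-clusterlossbnd-general} the exponential factor is $\le e^{-\lambda^2/(80\sigma^2)}e^{-t^2/16}$, and the bracket $2\epsilon_1+\epsilon_2t+2\rho(2\rho t+1)\le2.4+1.05\,t$ (using $\rho\le1/\sqrt5$). Since $\sup_{t\ge0}e^{-t^2/16}(2.4+1.05\,t)<9$, the bound on $L_\theta$ on $G$ is $\le9e^{-\lambda^2/(80\sigma^2)}$; adding $\bP(G^c)$ gives the second bound.

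\emph{Expected main obstacle.} The bookkeeping for the first bound is the delicate part: one must check that the non-trivial regime forces $n$ large enough for the side condition of Proposition~\ref{thm2:anglebound-new}, and one must absorb the $t$-dependence of $\epsilon_2t$ and $4\rho^2t$ — each blows up as $t\to\infty$ but is killed by the Gaussian factor, and for small $t$ is tamed only via $\bar M\ge1/t^2$ and $M^2t\le\bar M^2$. The second bound is comparatively mechanical once one notices that $\delta=e^{-n/32}$ makes $\epsilon_1,\epsilon_2,\rho$ constants and that $\lambda/\sigma\ge160$ dominates the Gaussian exponent.
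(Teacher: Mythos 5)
Your proof is correct and follows essentially the same route as the paper: it combines Propositions~\ref{thm2:upper-new-meandiff}, \ref{thm2:anglebound-new}, and \ref{thm2:upper-clusterlossbnd-general} on a high-probability event, uses the trivial bound $L_\theta\leq\tfrac12$ off that event, and treats $d=1$ separately, with the second bound obtained exactly as in the paper via $\delta=e^{-n/32}$. The only differences are cosmetic: you take $\delta=1/(nd)$ instead of the paper's $1/\sqrt{n}$, and you absorb the $\|\mu\|$-dependence by explicit uniform bounds ($M\leq\bar M$, $M^2t\leq\bar M^2$, Gaussian factor) rather than the paper's monotonicity-in-$\|\mu\|$ remark, yielding the same constants with room to spare.
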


\begin{proof}

Using Propositions \ref{thm2:upper-new-meandiff} and \ref{thm2:anglebound-new} with $\delta=\frac{1}{\sqrt{n}}$, Proposition \ref{thm2:upper-clusterlossbnd-general}, and the fact that  $(C+x)\exp(-\max(0,x-4)^2/8)\leq(C+6)\exp(-\max(0,x-4)^2/10)$ for all $C,x>0$,
\begin{align*}
\bE L_\theta(\widehat{F})
&\leq 600 \max\left(\frac{4\sigma^2}{\lambda^2}, 1\right)   \sqrt{\frac{d\log(nd)}{n}}
\end{align*}
(it is easy to verify that the bounds are decreasing with $\|\mu\|$, so we use $\|\mu\|=\frac{\lambda}{2}$ to bound the supremum).
Note that the $d=1$ case must be handled separately, but results in a bound that agrees with the above.

Also, when $\frac{\lambda}{\sigma} \geq 2\max(80,14\sqrt{5d})$, using $\delta=\exp\left(-\frac{n}{32}\right)$,
\begin{align*}
\bE L_\theta(\widehat{F})
\leq& 17\exp\left(-\frac{n}{32}\right)
+ 9\exp\left(-\frac{\lambda^2}{80\sigma^2} \right).
\end{align*}
\end{proof}

\subsection{Estimating the support in the sparse case}

\begin{prop}\label{thm2:upperbound-new-variance-perdim}
Let $\theta=(\mu_0-\mu,\mu_0+\mu)$ for some $\mu_0,\mu\in\bR^d$ and $X_1,...,X_n\simiid P_\theta$.
For any $0<\delta<\frac{1}{\sqrt{e}}$ such that $\sqrt{\frac{6\log \frac{1}{\delta} }{n}}\leq \frac{1}{2}$, with probability at least $1-6d\delta$,
$$
|\widehat{\Sigma}_n(i,i) - (\sigma^2+\mu(i)^2)| \leq \sigma^2 \sqrt{\frac{6\log \frac{1}{\delta} }{n}} + 2\sigma|\mu(i)|\sqrt{\frac{2\log\frac{1}{\delta}}{n}} + (\sigma+|\mu(i)|)^2\frac{2\log\frac{1}{\delta}}{n}
$$
for all $i\in[d]$.
\end{prop}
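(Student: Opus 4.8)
The plan is to mimic, one coordinate at a time, the decomposition used in the proof of Proposition~\ref{thm2:concentration-new-gmmixture-spectral}. Writing $X_j(i)=\sigma Z_j(i)+\mu(i)Y_j+\mu_0(i)$ with $Z_1,\dots,Z_n\simiid\cN(0,I_d)$ and $Y_1,\dots,Y_n$ i.i.d.\ Rademacher, I would first observe that the empirical variance of the $i$-th coordinate is unaffected by the additive constant $\mu_0(i)$, so with $U_j=\sigma Z_j(i)+\mu(i)Y_j$ and $\overline U=\sigma\overline{Z(i)}+\mu(i)\overline Y$ we have $\widehat\Sigma_n(i,i)=\frac1n\sum_j U_j^2-\overline U^2$. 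Expanding $U_j^2$ and using $Y_j^2=1$ turns $\frac1n\sum_j U_j^2-(\sigma^2+\mu(i)^2)$ into $\sigma^2\bigl(\frac1n\sum_j Z_j(i)^2-1\bigr)+2\sigma\mu(i)\frac1n\sum_j Z_j(i)Y_j$, so after the triangle inequality and the crude bound $\overline U^2\le(\sigma|\overline{Z(i)}|+|\mu(i)||\overline Y|)^2$, the target becomes a sum of three pieces: a centered $\chi^2$ term scaled by $\sigma^2$, a cross term scaled by $2\sigma|\mu(i)|$, and the square of an empirical mean.

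Each piece is then handled by an off-the-shelf bound. For the $\chi^2$ piece, $\frac1n\sum_j Z_j(i)^2\sim\frac1n\chi^2_n$, so I would apply Proposition~\ref{thm2:concentration-new-chisq} to both tails with $d$ replaced by $n$ and $\epsilon=\sqrt{6\log\frac1\delta/n}$; the hypothesis $\sqrt{6\log\frac1\delta/n}\le\frac12$ is exactly what makes $\epsilon\le\frac12$, the range on which $\epsilon-\log(1+\epsilon)\ge\epsilon^2/3$ and $\epsilon+\log(1-\epsilon)\le-\epsilon^2/3$, yielding $\sigma^2|\frac1n\sum_j Z_j(i)^2-1|\le\sigma^2\sqrt{6\log\frac1\delta/n}$ with probability $\ge1-2\delta$. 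For the cross term, $Z_j(i)Y_j\overset{d}{=}Z_j(i)$, so $\frac1n\sum_j Z_j(i)Y_j\sim\cN(0,1/n)$ and the Gaussian tail bound (where the assumption $\delta<1/\sqrt e$ forces the relevant threshold to exceed $1$) gives $2\sigma|\mu(i)||\frac1n\sum_j Z_j(i)Y_j|\le 2\sigma|\mu(i)|\sqrt{2\log\frac1\delta/n}$ with probability $\ge1-\delta$. For the last piece, the Gaussian tail bound again gives $|\overline{Z(i)}|\le\sqrt{2\log\frac1\delta/n}$ with probability $\ge1-\delta$ and Hoeffding gives $|\overline Y|\le\sqrt{2\log\frac1\delta/n}$ with probability $\ge1-2\delta$, so on the intersection $(\sigma|\overline{Z(i)}|+|\mu(i)||\overline Y|)^2\le(\sigma+|\mu(i)|)^2\frac{2\log\frac1\delta}{n}$.

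Finally I would take a union bound: the three pieces cost $2\delta$, $\delta$, and $3\delta$ respectively, for a total of $6\delta$ in coordinate $i$, and a further union over $i\in[d]$ produces the stated $1-6d\delta$. There is no substantive obstacle here; the only step needing any care is matching the constant $6$ in the $\chi^2$ term — i.e.\ verifying the two elementary inequalities on $[0,\tfrac12]$ and confirming that the failure probabilities really sum to exactly $6\delta$ per coordinate — which is routine bookkeeping.
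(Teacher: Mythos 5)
Your proposal is correct and follows essentially the same route as the paper's proof: the same coordinate-wise representation $\sigma Z_j+\mu(i)Y_j$ (the shift $\mu_0(i)$ dropping out), the same expansion into a centered $\chi^2$ term, a cross term, and the squared empirical mean, bounded respectively by Proposition~\ref{thm2:concentration-new-chisq} with the inequality $\max\{-\epsilon+\log(1+\epsilon),\,\epsilon+\log(1-\epsilon)\}\leq-\epsilon^2/3$ on $(0,\tfrac12]$, a Gaussian tail bound, and Hoeffding, with the identical $6\delta$ per-coordinate accounting and union bound over $i\in[d]$.
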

\iftoggle{detailed3}{
\begin{proof}
Consider any $i\in[d]$.
Let $Z_1,...,Z_n\simiid \cN(0,1)$ and $Y_1,...,Y_n$ i.i.d. such that $\bP(Y_j=-1)=\bP(Y_j=1)=\frac{1}{2}$.
Then $\widehat{\Sigma}_n(i,i)$ is equal in distribution to
$$
\frac{1}{n}\sum_{j=1}^n (\sigma Z_j + \mu(i) Y_j -\sigma \overline{Z} - \mu(i) \overline{Y})^2
$$
where $ \overline{Z}$ and $ \overline{Y}$ are the respective empirical means, and
\begin{align*}
\frac{1}{n}\sum_{j=1}^n (\sigma Z_j+ \mu(i) Y_j -\sigma \overline{Z} - \mu(i) \overline{Y})^2 
&= \frac{1}{n}\sum_{j=1}^n (\sigma Z_j + \mu(i) Y_j)^2 - (\sigma \overline{Z} + \mu(i) \overline{Y})^2  \\
&= \sigma^2 \frac{1}{n}\sum_{j=1}^n  Z_j^2 +  \mu(i)^2 + 2 \sigma\mu(i) \frac{1}{n}\sum_{j=1}^n  Z_j Y_j \\
&- \sigma^2 \overline{Z}^2 -  \mu(i)^2 \overline{Y}^2 - 2 \sigma \mu(i) \overline{Z}  \overline{Y}
\end{align*}
So, by Hoeffding's inequality, a Gaussian tail bound, and Proposition \ref{thm2:concentration-new-chisq}, we have that for any $0<\delta<\frac{1}{\sqrt{e}}$, with probability at least $1-6\delta$,
$$
|\widehat{\Sigma}_n(i,i) - (\sigma^2+\mu(i)^2)| \leq \sigma^2 \sqrt{\frac{6\log \frac{1}{\delta} }{n}} + 2\sigma|\mu(i)|\sqrt{\frac{2\log\frac{1}{\delta}}{n}} + (\sigma+|\mu(i)|)^2\frac{2\log\frac{1}{\delta}}{n}
$$
where we have used the fact that for $\epsilon\in(0,0.5]$, 
$$
 \max\left\{-\epsilon + \log(1+\epsilon),\; \epsilon + \log (1-\epsilon)\right\}\leq -\frac{\epsilon^2}{3}
$$
and the result follows easily.
\end{proof} }{}

\begin{prop}\label{thm2:upperbound-new-support-recovery}
Let $\theta=(\mu_0-\mu,\mu_0+\mu)$ for some $\mu_0,\mu\in\bR^d$ and $X_1,...,X_n\simiid P_\theta$.
Define
$$
S(\theta)=\{i\in[d]: \mu(i)\neq 0\},
$$
$$
\alpha = \sqrt{\frac{6\log (nd) }{n}}  + \frac{2\log(nd)}{n},
$$
$$
\widetilde{S}(\theta)=\{i\in[d]: |\mu(i)|\geq 4\sigma\sqrt{\alpha}\},
$$
$$
\widehat{\tau}_n = \frac{1+\alpha}{1-\alpha} \min_{i\in[d]} \widehat{\Sigma}_n(i,i),
$$
and
$$
\widehat{S}_n=\{i\in[d]: \widehat{\Sigma}_n(i,i) > \widehat{\tau}_n \}.
$$
Assume that $n\geq 1$, $d\geq2$, and $\alpha \leq \frac{1}{4}$.
Then $\widetilde{S}(\theta) \subseteq \widehat{S}_n \subseteq S(\theta)$ with probability at least $1-\frac{6}{n}$.
\end{prop}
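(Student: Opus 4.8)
The plan is to invoke Proposition~\ref{thm2:upperbound-new-variance-perdim} as the single probabilistic input, and then argue deterministically that on the good event the two inclusions hold. Set $\delta = \frac{1}{nd}$ (so $\log\frac{1}{\delta} = \log(nd)$), and check the hypotheses: $0<\delta<\frac{1}{\sqrt e}$ holds once $nd \geq 2$, and $\sqrt{\frac{6\log(nd)}{n}} \leq \frac{1}{2}$ holds since $\alpha \leq \frac14$ forces $\frac{6\log(nd)}{n} \leq \alpha^2 \leq \frac{1}{16}$ (using $\alpha \geq \sqrt{\frac{6\log(nd)}{n}}$). Thus with probability at least $1 - 6d\delta = 1 - \frac{6}{n}$, we have for every $i \in [d]$
\begin{align*}
|\widehat{\Sigma}_n(i,i) - (\sigma^2 + \mu(i)^2)| \leq \sigma^2\sqrt{\tfrac{6\log(nd)}{n}} + 2\sigma|\mu(i)|\sqrt{\tfrac{2\log(nd)}{n}} + (\sigma+|\mu(i)|)^2\tfrac{2\log(nd)}{n}.
\end{align*}
The first move is to massage the right-hand side into something proportional to $\sigma^2 + \mu(i)^2$ plus a pure-$\sigma^2$ term, using $2\sigma|\mu(i)| \leq \sigma^2 + \mu(i)^2$ and $(\sigma+|\mu(i)|)^2 \leq 2(\sigma^2+\mu(i)^2)$; this should yield a clean bound of the form $|\widehat{\Sigma}_n(i,i) - (\sigma^2+\mu(i)^2)| \leq \alpha(\sigma^2 + \mu(i)^2)$, which is exactly why $\alpha$ was defined as $\sqrt{6\log(nd)/n} + 2\log(nd)/n$ (the $\frac{2\log(nd)}{n}$ accounting for the last two terms together). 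Equivalently, $(1-\alpha)(\sigma^2+\mu(i)^2) \leq \widehat{\Sigma}_n(i,i) \leq (1+\alpha)(\sigma^2+\mu(i)^2)$ for all $i$.

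Next I would control the threshold $\widehat{\tau}_n$. Since $\min_i \mu(i)^2 = 0$ (because $d \geq 2$ and not every coordinate is in the support --- or more carefully, if every coordinate were in the support the claim $\widehat{S}_n \subseteq S(\theta)$ is trivial, so assume some coordinate $i^*$ has $\mu(i^*)=0$), we get $\min_i \widehat{\Sigma}_n(i,i) \in [(1-\alpha)\sigma^2, (1+\alpha)\sigma^2]$, hence
\begin{align*}
\widehat{\tau}_n = \frac{1+\alpha}{1-\alpha}\min_i\widehat{\Sigma}_n(i,i) \in \left[(1+\alpha)\sigma^2,\ \frac{(1+\alpha)^2}{1-\alpha}\sigma^2\right].
\end{align*}
For the inclusion $\widehat{S}_n \subseteq S(\theta)$: if $i \notin S(\theta)$ then $\mu(i)=0$, so $\widehat{\Sigma}_n(i,i) \leq (1+\alpha)\sigma^2 \leq \widehat{\tau}_n$, hence $i \notin \widehat{S}_n$. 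For the inclusion $\widetilde{S}(\theta) \subseteq \widehat{S}_n$: if $i \in \widetilde{S}(\theta)$ then $\mu(i)^2 \geq 16\sigma^2\alpha$, so $\widehat{\Sigma}_n(i,i) \geq (1-\alpha)(\sigma^2 + 16\sigma^2\alpha) = (1-\alpha)(1+16\alpha)\sigma^2$, and I need this to exceed $\widehat{\tau}_n \leq \frac{(1+\alpha)^2}{1-\alpha}\sigma^2$, i.e. to verify $(1-\alpha)^2(1+16\alpha) > (1+\alpha)^2$ for $\alpha \in (0,\tfrac14]$.

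The main obstacle --- really the only non-bookkeeping step --- is this last elementary inequality $(1-\alpha)^2(1+16\alpha) > (1+\alpha)^2$ on $(0,\tfrac14]$. Expanding, the left side is $1 + 14\alpha - 31\alpha^2 + 16\alpha^3$ and the right side is $1 + 2\alpha + \alpha^2$, so the claim reduces to $12\alpha - 32\alpha^2 + 16\alpha^3 > 0$, i.e. $12 - 32\alpha + 16\alpha^2 > 0$, i.e. $4(2\alpha-1)(2\alpha-3)/\dots$; in any case at $\alpha = \tfrac14$ this is $12 - 8 + 1 = 5 > 0$ and the quadratic $16\alpha^2 - 32\alpha + 12$ has roots at $\alpha = \tfrac12$ and $\alpha = \tfrac32$, so it is positive on $(0,\tfrac14]$. (The constant $4$ in the definition of $\widetilde{S}(\theta)$ --- equivalently the factor $16$ above --- is chosen with exactly this margin in mind; a smaller constant would fail.) Once this is in hand, both inclusions hold deterministically on the event of Proposition~\ref{thm2:upperbound-new-variance-perdim}, which has probability at least $1 - \frac{6}{n}$, completing the proof.
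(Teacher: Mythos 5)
Your overall route is the paper's: invoke Proposition~\ref{thm2:upperbound-new-variance-perdim} with $\delta=\frac{1}{nd}$ and then argue deterministically against the threshold $\widehat{\tau}_n$. The genuine gap is the ``clean'' multiplicative bound $|\widehat{\Sigma}_n(i,i)-(\sigma^2+\mu(i)^2)|\le\alpha(\sigma^2+\mu(i)^2)$, which is the hinge of both of your inclusions and does not follow from the two inequalities you cite. Applying $2\sigma|\mu(i)|\le\sigma^2+\mu(i)^2$ and $(\sigma+|\mu(i)|)^2\le 2(\sigma^2+\mu(i)^2)$ to the right-hand side of Proposition~\ref{thm2:upperbound-new-variance-perdim} gives the factor $\sqrt{6\log(nd)/n}+\sqrt{2\log(nd)/n}+\frac{4\log(nd)}{n}$, which is strictly larger than $\alpha=\sqrt{6\log(nd)/n}+\frac{2\log(nd)}{n}$, and no rearrangement saves it: the right-hand side of that proposition is at most $\alpha(\sigma^2+\mu(i)^2)$ only when $2\sigma\bigl(\sqrt{2\log(nd)/n}+\tfrac{2\log(nd)}{n}\bigr)\le|\mu(i)|\sqrt{6\log(nd)/n}$, i.e.\ only when $|\mu(i)|$ is at least about $2\sigma/\sqrt{3}$. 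In particular, for coordinates with $|\mu(i)|$ nonzero but of order $\sigma\sqrt{\log(nd)/n}$ or smaller, the event of Proposition~\ref{thm2:upperbound-new-variance-perdim} does not even force $\widehat{\Sigma}_n(i,i)\ge(1-\alpha)\sigma^2$, so your lower bound on $\min_i\widehat{\Sigma}_n(i,i)$ (hence the claim $\widehat{\tau}_n\ge(1+\alpha)\sigma^2$) and your bound $\widehat{\Sigma}_n(i,i)\ge(1-\alpha)(1+16\alpha)\sigma^2$ for $i\in\widetilde{S}(\theta)$ are both unsupported as written.

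The paper avoids this by never converting to a relative bound: for $i\in S(\theta)$ it keeps the cross terms, using $\widehat{\Sigma}_n(i,i)\ge(1-\alpha)\sigma^2+\bigl(1-\tfrac{2\log(nd)}{n}\bigr)\mu(i)^2-2\alpha\sigma|\mu(i)|$, and then checks that at $|\mu(i)|\ge 4\sigma\sqrt{\alpha}$ this exceeds $\frac{(1+\alpha)^2}{1-\alpha}\sigma^2$. That verification carries an extra term of order $\alpha^{3/2}\sigma^2$ coming from $2\alpha\sigma|\mu(i)|$ which your quadratic $(1-\alpha)^2(1+16\alpha)>(1+\alpha)^2$ never sees, so your margin analysis (and the remark that a smaller constant than $4$ would fail) reflects the unproved simplification rather than the actual constraint. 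The rest of your bookkeeping (choice of $\delta$, hypothesis checks, using a coordinate off the support to upper bound $\widehat{\tau}_n$, the trivial case $S(\theta)=[d]$ for the first inclusion) matches the paper; to repair the argument, redo the two threshold comparisons directly with the exact right-hand side of Proposition~\ref{thm2:upperbound-new-variance-perdim}, as the paper does. Be aware that the needed lower bound on $\min_i\widehat{\Sigma}_n(i,i)$ in the presence of coordinates with small nonzero $\mu(i)$ is exactly the delicate point (the paper itself is quite terse there), so it is the one place where a multiplicative shortcut is most likely to mislead.
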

\iftoggle{detailed3} {
\begin{proof}
By Proposition \ref{thm2:upperbound-new-variance-perdim}, with probability at least $1-\frac{6}{n}$,
$$
|\widehat{\Sigma}_n(i,i) - (\sigma^2+\mu(i)^2)| \leq \sigma^2 \sqrt{\frac{6\log (nd) }{n}} + 2\sigma|\mu(i)|\sqrt{\frac{2\log(nd)}{n}} + (\sigma+|\mu(i)|)^2\frac{2\log(nd)}{n}
$$
for all $i\in[d]$.
Assume the above event holds.
If $S(\theta)=[d]$, then of course $\widehat{S}_n \subseteq S(\theta)$.
Otherwise, for $i\notin S(\theta)$,
$$
(1-\alpha)\sigma^2 \leq \widehat{\Sigma}_n(i,i)  \leq (1+\alpha)\sigma^2
$$
so it is clear that $\widehat{S}_n \subseteq S(\theta)$.

The remainder of the proof is trivial if $\widetilde{S}(\theta) =\emptyset$ or $S(\theta)=\emptyset$.
Assume otherwise.
For any $i \in S(\theta)$,
\begin{align*}
\widehat{\Sigma}_n(i,i) 
&\geq (1-\alpha)\sigma^2 +\mu(i)^2  - 2\sigma|\mu(i)|\sqrt{\frac{2\log(nd)}{n}} - 2\sigma|\mu(i)|\frac{2\log(nd)}{n} - \mu(i)^2\frac{2\log(nd)}{n}    \\
&\geq (1-\alpha)\sigma^2 + \left(1-\frac{2\log(nd)}{n}\right) \mu(i)^2  - 2\alpha \sigma|\mu(i)|.
\end{align*}
By definition, $|\mu(i)|\geq 4\sigma\sqrt{\alpha}$ for all $i \in \widetilde{S}(\theta)$, so
$$
\frac{(1+\alpha)^2}{1-\alpha} \sigma^2\leq (1-\alpha)\sigma^2 + \left(1-\frac{2\log(nd)}{n}\right) \mu(i)^2  - 2\alpha \sigma|\mu(i)|\leq \widehat{\Sigma}_n(i,i) 
$$
and $i \in \widehat{S}_n$ (we ignore strict equality above as a measure $0$ event), i.e. $\widetilde{S}(\theta)\subseteq \widehat{S}_n$, which concludes the proof.
\end{proof} }{}

\subsection{Sparse upper bound}

\begin{theorem}\label{thm2:upperbound-new-support-recovery-plus-estimation}
For any $\theta=(\mu_0-\mu,\mu_0+\mu)\in\Theta_{\lambda,s}$ and $X_1,...,X_n\simiid P_\theta$ with $n\geq \max(68,4s)$ and $s\geq1$, define
$$
\alpha = \sqrt{\frac{6\log (nd) }{n}}  + \frac{2\log(nd)}{n},
$$
$$
\widehat{\tau}_n = \frac{1+\alpha}{1-\alpha} \min_{i\in[d]} \widehat{\Sigma}_n(i,i),
$$
and
$$
\widehat{S}_n=\{i\in[d]: \widehat{\Sigma}_n(i,i) > \widehat{\tau}_n \}.
$$
Assume that $d\geq2$, and $\alpha \leq \frac{1}{4}$.
Let
$$
\widehat{F}_n(x) = \left\{
\begin{array}{rl}
1 & \mbox{if } x_{\widehat{S}_n}^Tv_1(\widehat{\Sigma}_{\widehat{S}_n})\geq \widehat{\mu}_{\widehat{S}_n}^Tv_1(\widehat{\Sigma}_{\widehat{S}_n}) \\
2 & \mbox{otherwise} 
\end{array}
\right.
$$
where $\widehat{\mu}_{\widehat{S}_n}$ and $\widehat{\Sigma}_{\widehat{S}_n}$ are the empirical mean and covariance of $X_i$ for the dimensions in $\widehat{S}_n$, and $0$ elsewhere.
Then
\begin{align*}
\sup_{\theta\in\Theta_{\lambda,s}}\bE L_\theta(\widehat{F})
\leq 603 \max\left(\frac{16\sigma^2}{\lambda^2}, 1\right)   \sqrt{\frac{s\log(ns)}{n}} + 220\frac{\sigma\sqrt{s}}{\lambda} \left(\frac{\log (nd) }{n}\right)^{\frac{1}{4}}.
\end{align*}
\end{theorem}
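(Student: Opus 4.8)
The plan is to run the argument in two stages — support recovery, then spectral clustering on the recovered coordinates — and glue them with Proposition~\ref{thm2:upper-clusterlossbnd-general}, essentially re‑deriving Theorem~\ref{thm2:u5-ns-upper} with the ambient dimension $d$ replaced by $|\widehat{S}_n|\le s$ and paying an extra term for the signal discarded by support estimation. \textbf{Step 1 (support).} Apply Proposition~\ref{thm2:upperbound-new-support-recovery}: on an event $\mathcal{E}$ with $\bP(\mathcal{E})\ge 1-6/n$ one has $\widetilde{S}(\theta)\subseteq\widehat{S}_n\subseteq S(\theta)$, where $S(\theta)=\{i:\mu(i)\neq 0\}$ has $|S(\theta)|=\|\mu\|_0\le s$ (since $\mu_1-\mu_2=-2\mu$) and $\widetilde{S}(\theta)=\{i:|\mu(i)|\ge 4\sigma\sqrt{\alpha}\}$. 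Off $\mathcal{E}$ use $L_\theta(\widehat{F}_n)\le\frac12$, contributing $3/n$ to the expectation. On $\mathcal{E}$, decompose $\mu=\mu_{\widehat{S}_n}+\mu_{\mathrm{rest}}$ into orthogonal pieces with $\mathrm{rest}=S(\theta)\setminus\widehat{S}_n$; each coordinate of $\mu_{\mathrm{rest}}$ has magnitude $<4\sigma\sqrt{\alpha}$, so $\|\mu_{\mathrm{rest}}\|^2\le(\max_i|\mu(i)|)\|\mu\|_1\le 4\sigma\sqrt{\alpha}\sqrt{s}\,\|\mu\|$ and also $\|\mu_{\mathrm{rest}}\|^2\le 16\sigma^2 s\alpha$; hence $\|\mu_{\widehat{S}_n}\|^2=\|\mu\|^2-\|\mu_{\mathrm{rest}}\|^2\ge \lambda^2/4-16\sigma^2 s\alpha\ge\lambda^2/16$ once $\sigma\sqrt{s\alpha}/\lambda$ is below a fixed absolute constant. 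If that smallness condition (or the smallness hypotheses of Propositions~\ref{thm2:anglebound-new} and~\ref{thm2:upper-clusterlossbnd-general}) fails, the first term of the claimed bound already exceeds $\frac12$ and there is nothing to prove, exactly as in the proof of Theorem~\ref{thm2:u5-ns-upper}.

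\textbf{Step 2 (clustering on $\widehat{S}_n$).} On $\mathcal{E}$, $\widehat{F}_n$ is precisely the rule $\widehat{F}(x)=\mathbf{1}[x^Tv\ge x_0^Tv]$ of Proposition~\ref{thm2:upper-clusterlossbnd-general} with $v=v_1(\widehat{\Sigma}_{\widehat{S}_n})$ (a unit vector supported on $\widehat{S}_n$) and $x_0=\widehat{\mu}_{\widehat{S}_n}$. I bound its two inputs by running the non‑sparse machinery on the sub‑mixture carried by $\widehat{S}_n$, viewed as a $|\widehat{S}_n|$‑dimensional problem ($|\widehat{S}_n|\le s$, $n\ge 4s\ge 4|\widehat{S}_n|$). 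The offset: $|(x_0-\mu_0)^Tv|\le\|(\widehat{\mu}_n-\mu_0)_{\widehat{S}_n}\|\le\|(\widehat{\mu}_n-\mu_0)_{S(\theta)}\|$, which Proposition~\ref{thm2:upper-new-meandiff} bounds by $\sigma\epsilon_1+\|\mu\|\epsilon_2$ with $\epsilon_1$ of order $\sqrt{\max(s,\log\frac1\delta)/n}$ and $\epsilon_2$ of order $\sqrt{\log(1/\delta)/n}$. The angle of $v$ to $\mu_{\widehat{S}_n}$, say $\beta'$ with $\sin\beta'=\sqrt{1-(v_1(\sigma^2I+\mu_{\widehat{S}_n}\mu_{\widehat{S}_n}^T)^Tv)^2}$, is bounded exactly as in Proposition~\ref{thm2:anglebound-new} for that sub‑mixture, giving $\sin\beta'\le C\max(\sigma^2/\|\mu_{\widehat{S}_n}\|^2,\sigma/\|\mu_{\widehat{S}_n}\|)\sqrt{s}\,\sqrt{\frac{\log(s/\delta)}{n}\max(1,\frac{\log(s/\delta)}{n})}$. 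Finally, since $v$ is supported on $\widehat{S}_n$ while $\mu=\mu_{\widehat{S}_n}\oplus\mu_{\mathrm{rest}}$ splits orthogonally, the angle $\beta$ between $v$ and $\mu$ entering Proposition~\ref{thm2:upper-clusterlossbnd-general} satisfies $\cos\beta=\cos\beta'\cdot\|\mu_{\widehat{S}_n}\|/\|\mu\|$, hence $\sin\beta\le\sin\beta'+\|\mu_{\mathrm{rest}}\|/\|\mu\|$.

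\textbf{Step 3 (assemble).} Feed $v,x_0,\epsilon_1,\epsilon_2,\sin\beta$ into Proposition~\ref{thm2:upper-clusterlossbnd-general}. The $\epsilon_1$, $\epsilon_2$ and $\sin\beta'$ contributions are, up to constants, the ones appearing in the proof of Theorem~\ref{thm2:u5-ns-upper} with $d$ replaced by $|\widehat{S}_n|\le s$ and $\lambda$ by $2\|\mu_{\widehat{S}_n}\|\ge\lambda/2$, so (with $\delta=1/\sqrt{n}$) they total at most $C\max(16\sigma^2/\lambda^2,1)\sqrt{s\log(ns)/n}$. The $\|\mu_{\mathrm{rest}}\|/\|\mu\|$ contribution is $2\frac{\|\mu_{\mathrm{rest}}\|}{\|\mu\|}(2\frac{\|\mu_{\mathrm{rest}}\|}{\|\mu\|}\frac{\|\mu\|}{\sigma}+1)\le 8\frac{\|\mu_{\mathrm{rest}}\|^2}{\|\mu\|\sigma}+2\frac{\|\mu_{\mathrm{rest}}\|}{\|\mu\|}$; using $\|\mu_{\mathrm{rest}}\|^2\le 4\sigma\sqrt{s\alpha}\|\mu\|$, $\|\mu_{\mathrm{rest}}\|\le 4\sigma\sqrt{s\alpha}$, $\|\mu\|\ge\lambda/2$, $\sqrt{\alpha}\le C(\log(nd)/n)^{1/4}$, together with the Gaussian factor $\exp\{-\frac12\max(0,\frac{\|\mu\|}{2\sigma}-2\epsilon_1)^2\}\le C\min(1,\sigma/\lambda)$ (which supplies the $\sigma/\lambda$), this is at most $C\frac{\sigma\sqrt{s}}{\lambda}(\log(nd)/n)^{1/4}$. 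Adding the two contributions together with $3/n$ from $\mathcal{E}^c$ and tracking constants yields the bound; the cases $|\widehat{S}_n|\le 1$ (including $\widehat{S}_n=\emptyset$) and $d=1$ are handled separately and agree with the stated estimate, as in Theorem~\ref{thm2:u5-ns-upper}.

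\textbf{Main obstacle.} The subtle point is that $\widehat{S}_n$ is data‑dependent, so "apply Proposition~X to the sub‑mixture on $\widehat{S}_n$" is not literally conditioning on a fixed coordinate set, and moreover the Bayes rule $F_\theta$ for the full model depends on coordinates outside $\widehat{S}_n$ — which is why Step 3 applies Proposition~\ref{thm2:upper-clusterlossbnd-general} directly to $\theta$ with the restricted estimators, rather than treating it as a clean lower‑dimensional problem. To legitimize Step 2, each quantity needed — $\|(\widehat{\mu}_n-\mu_0)_{\widehat{S}_n}\|$, the spectral norm $\|\widehat{\Sigma}_{\widehat{S}_n}-(\sigma^2I+\mu_{\widehat{S}_n}\mu_{\widehat{S}_n}^T)\|_2$ (needed so Lemma~\ref{thm2:DK-tighter-one} applies with eigengap $\|\mu_{\widehat{S}_n}\|^2$), and the off‑diagonal vector $u$ of Lemma~\ref{thm2:DK-tighter-one} — is, on $\mathcal{E}$, dominated by the corresponding quantity for the \emph{fixed} index set $S(\theta)$, using principal‑submatrix monotonicity of $\|\cdot\|_2$, the inclusion $\widehat{S}_n\subseteq S(\theta)$, and the fact that the component of $\mu_{S(\theta)}$ orthogonal to the embedded top eigenvector vanishes; where this is not tight enough one instead unions over the at most $2^{|S(\theta)\setminus\widetilde{S}(\theta)|}\le 2^s$ admissible values of $\widehat{S}_n$, whose $\log$‑cost $O(s)$ is absorbed into the $\log\frac1\delta$ already present. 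Everything else is the bookkeeping of Theorem~\ref{thm2:u5-ns-upper}.
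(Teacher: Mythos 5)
Your proposal follows essentially the same route as the paper's proof: Proposition \ref{thm2:upperbound-new-support-recovery} gives the event $\widetilde{S}(\theta)\subseteq\widehat{S}_n\subseteq S(\theta)$ (with a $3/n$ contribution off that event), the misclassification angle is split into an estimation angle on $\widehat{S}_n$ plus the discarded-signal term $\|\mu-\mu_{\widehat{S}_n}\|/\|\mu\|\le 8\sigma\sqrt{s\alpha}/\lambda$, the non-sparse machinery of Theorem \ref{thm2:u5-ns-upper} is rerun on the restricted problem with effective separation $\tfrac{\lambda}{2}-4\sigma\sqrt{s\alpha}\ge\tfrac{\lambda}{4}$, degenerate regimes are dismissed because the stated bound then exceeds $\tfrac12$, and $\sqrt{\alpha}\le C\left(\log(nd)/n\right)^{1/4}$ converts the second term to its final form — this is exactly the paper's argument, including the handling of $\widetilde{S}(\theta)=\emptyset$.

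The only place you depart from the paper is the ``main obstacle'' paragraph on the data-dependence of $\widehat{S}_n$; the paper simply invokes ``the same argument as Theorem \ref{thm2:u5-ns-upper}'' and leaves this implicit, so your extra care does not put you at odds with it, but two of your specific claims there do not hold as stated. First, the Davis--Kahan off-diagonal vector for the subproblem is built from the top eigenvector $\mu_{\widehat{S}_n}/\|\mu_{\widehat{S}_n}\|$, which points in a different direction than $\mu/\|\mu\|$, so it is not dominated by the corresponding quantity for the fixed set $S(\theta)$ via any principal-submatrix monotonicity; a correct fallback is $\|u\|\le\|E\|_2$ together with monotonicity of the spectral norm over principal submatrices of $\widehat{\Sigma}_{S(\theta)}-(\sigma^2I+\mu\mu^T)$, which preserves the rate (with different constants). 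Second, the union bound over the at most $2^{s}$ admissible supports is not ``absorbed into $\log\frac1\delta$'' for free: the added $s\log 2$ sits under a square root that is multiplied by $\sqrt{s}$, producing an extra term of order $s/\sqrt{n}$ rather than $\sqrt{s\log(ns)/n}$, which is not dominated by the claimed bound in the regime $s\gg\log(ns)$. Since these issues concern only your justification of a step the paper itself asserts without proof, and a workable repair exists, I would regard your proposal as matching the paper's proof in substance, with the explicit constants ($603$, $220$) requiring the same bookkeeping in either version.
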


\begin{proof}
Define
$$
S(\theta)=\{i\in[d]: \mu(i)\neq 0\}
$$
and
$$
\widetilde{S}(\theta)=\{i\in[d]: |\mu(i)|\geq 4\sigma\sqrt{\alpha}\},
$$
Assume $\widetilde{S}(\theta) \subseteq \widehat{S}_n \subseteq S(\theta)$ (by Proposition \ref{thm2:upperbound-new-support-recovery}, this holds with probability at least $1-\frac{6}{n}$).
If $\widetilde{S}(\theta)=\emptyset$, then we simply have $\bE L_\theta(\widehat{F}_n) \leq \frac{1}{2}$.

Assume $\widetilde{S}(\theta)\neq\emptyset$.
Let 
$$\cos\widehat{\beta} = |v_1(\widehat{\Sigma}_{\widehat{S}_n})^Tv_1(\Sigma)|,$$
$$\cos\widetilde{\beta} = |v_1(\Sigma_{\widehat{S}_n})^Tv_1(\Sigma)|,$$
and
$$\cos\beta = |v_1(\widehat{\Sigma}_{\widehat{S}_n})^Tv_1(\Sigma_{\widehat{S}_n})|$$
where $\Sigma=\sigma^2I+\mu\mu^T$, and $\Sigma_{\widehat{S}_n}$ is the same as $\Sigma$ in $\widehat{S}_n$, and $0$ elsewhere.
Then
$$\sin\widehat{\beta}\leq \sin\widetilde{\beta} + \sin\beta.$$
Also
\begin{align*}
\sin\widetilde{\beta} &= \frac{\|\mu-\mu_{\widehat{S}(\theta)}\|}{\|\mu\|} \\
&\leq \frac{\|\mu-\mu_{\widetilde{S}(\theta)}\|}{\|\mu\|}  \\
&\leq \frac{4\sigma\sqrt{\alpha}\sqrt{|S(\theta)|-|\widetilde{S}(\theta)}|}{\|\mu\|}  \\
&\leq 8\frac{\sigma\sqrt{s\alpha}}{\lambda}.
\end{align*}

Using the same argument as the proof of Theorem \ref{thm2:u5-ns-upper}, we have that as long as the above bound is smaller than $\frac{1}{2\sqrt{5}}$,
\begin{align*}
\bE L_\theta(\widehat{F})
&\leq 600 \max\left(\frac{\sigma^2}{\left(\frac{\lambda}{2}-4\sigma\sqrt{s\alpha}\right)^2}, 1\right)   \sqrt{\frac{s\log(ns)}{n}} + 104\frac{\sigma\sqrt{s\alpha}}{\lambda} + \frac{3}{n} \\
&\leq 603 \max\left(16\frac{\sigma^2}{\lambda^2}, 1\right)   \sqrt{\frac{s\log(ns)}{n}} + 104\frac{\sigma\sqrt{s\alpha}}{\lambda} .
\end{align*}
However, when $8\frac{\sigma\sqrt{s\alpha}}{\lambda}>\frac{1}{2\sqrt{5}}$, the above bound is bigger than $\frac{1}{2}$, which is a trivial upper bound on the clustering error, hence the bound can be stated without further conditions.
Finally, since $\alpha\leq\frac{1}{4}$, we must have $\frac{\log(nd)}{n}\leq 1$, so $\alpha\leq (\sqrt{6}+2)\sqrt{\frac{\log (nd) }{n}}$, which completes the proof.

%
%

\end{proof}

\section{Lower bounds}

\subsection{Standard tools}

\begin{lemma}\label{thm2:fanobasic}
Let $P_0,P_1,...,P_M$ be probability measures satisfying
$$
\frac{1}{M}\sum_{i=1}^M \KL(P_i,P_0)\leq \alpha \log M
$$
where $0<\alpha<1/8$ and $M\geq2$.
Then
$$
\inf_{\psi} \max_{i\in[0..M]} P_i(\psi\neq i) \geq 0.07
$$
(\cite{tsybakov2009}).
\end{lemma}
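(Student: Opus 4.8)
The plan is to derive this from the information-theoretic form of Fano's inequality, which is the standard route and underlies Theorem~2.5 of \cite{tsybakov2009}. Fix an arbitrary (possibly randomized) test $\psi$ and introduce a hypothesis index $V$ uniform on $\{0,1,\dots,M\}$ with $X\mid V=j\sim P_j$. Since $\max_{i\in[0..M]}P_i(\psi\neq i)$ dominates the Bayes error $P_e:=\bP(\psi(X)\neq V)=\frac{1}{M+1}\sum_{j=0}^M P_j(\psi\neq j)$, it suffices to lower bound $P_e$.

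First I would apply Fano's inequality along the Markov chain $V\to X\to\psi(X)$: combining the data-processing inequality $H(V\mid X)\leq H(V\mid\psi(X))$, Fano's bound $H(V\mid\psi(X))\leq \log 2 + P_e\log M$ (valid because $V$ has $M+1$ possible values, so the number of ``wrong answers'' is $M$), and the identity $H(V\mid X)=\log(M+1)-I(V;X)$, gives
$$
\log(M+1)-I(V;X)\ \leq\ \log 2 + P_e\log M .
$$
Next I would control the mutual information via its variational form, $I(V;X)=\min_Q\frac{1}{M+1}\sum_{j=0}^M\KL(P_j,Q)\leq\frac{1}{M+1}\sum_{j=1}^M\KL(P_j,P_0)\leq\frac{M}{M+1}\,\alpha\log M\leq\alpha\log M$, where the middle inequality takes $Q=P_0$ and drops the vanishing $j=0$ term and the last step invokes the hypothesis. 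Combining the two bounds yields
$$
P_e\ \geq\ \frac{\log\frac{M+1}{2}-\alpha\log M}{\log M}\ =\ \frac{\log\frac{M+1}{2}}{\log M}-\alpha .
$$

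Finally I would verify that the right-hand side, viewed as a function of the integer $M\geq 2$ and of $\alpha\in(0,\frac{1}{8})$, is increasing in $M$ (so its infimum over $M$ is attained at $M=2$) and decreasing in $\alpha$; hence it is at least its value at $M=2,\ \alpha=\frac{1}{8}$, namely $\frac{\log(3/2)}{\log 2}-\frac{1}{8}\approx 0.46$, which is comfortably above $0.07$. So the statement follows with a wide margin, and the quoted constant $0.07$ is simply the conventional, far-from-tight value recorded in \cite{tsybakov2009}. There is no genuine obstacle beyond bookkeeping: one must keep the distinction between $M$ and $M+1$ straight in Fano's inequality so that the bound remains nonvacuous at the smallest admissible case $M=2$ (a careless version with $\log M$ in place of $\log(M+1)$ in the entropy term degrades badly there), and for randomized $\psi$ one argues identically with $\psi(X)$ replaced by $\psi(X,U)$ for auxiliary randomness $U$ independent of $(V,X)$, which leaves the data-processing step untouched.
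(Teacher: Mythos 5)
Your proof is correct. The paper itself does not prove this lemma; it simply cites \cite{tsybakov2009}, where the bound is obtained (Theorem 2.5 / Corollary 2.6 there) by a somewhat different argument whose conclusion carries the prefactor $\frac{\sqrt{M}}{1+\sqrt{M}}$ and, evaluated at the worst case $M=2$, $\alpha=1/8$, gives roughly $0.087$, which is where the conventional constant $0.07$ comes from. Your route is the classical Fano chain: data processing $H(V\mid X)\leq H(V\mid \psi(X))$, Fano's inequality $H(V\mid\psi(X))\leq \log 2 + P_e\log M$ with the correct alphabet size $M+1$ (so $M$ wrong answers), the identity $H(V\mid X)=\log(M+1)-I(V;X)$, and the variational bound $I(V;X)\leq \frac{1}{M+1}\sum_{j=1}^{M}\KL(P_j,P_0)\leq\alpha\log M$; together these give $P_e\geq \frac{\log\frac{M+1}{2}}{\log M}-\alpha$, and since $\max_i P_i(\psi\neq i)\geq P_e$ and the map $M\mapsto \frac{\log\frac{M+1}{2}}{\log M}$ is increasing for $M\geq 2$ (its derivative argument reduces to $M\log 2>(M+1)\log\frac{3}{2}$, true for $M\geq 2$ — worth writing out, as it is the only step you assert without proof), the bound is at least $\frac{\log(3/2)}{\log 2}-\frac{1}{8}\approx 0.46>0.07$. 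So your argument is not only valid but yields a better constant than the cited one under the same hypotheses; the only price is that it is tied to the averaged (Bayes) error and the uniform prior, whereas Tsybakov's formulation is engineered to interpolate smoothly with the two-hypothesis ($M=1$) case, which is irrelevant here since the lemma assumes $M\geq 2$. Your handling of randomized tests via auxiliary independent $U$ is also fine.
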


\begin{lemma}\label{thm2:varshamov} \textbf{(Varshamov--Gilbert bound)}
Let $\Omega=\{0,1\}^m$ for $m\geq8$.
Then there exists a subset $\{\omega_0,...,\omega_M\}\subseteq\Omega$ such that $\omega_0=(0,...,0)$,
$$
\rho(\omega_i,\omega_j)\geq \frac{m}{8}, \;\;\forall\; 0\leq i<j\leq M,
$$
and
$$
M\geq 2^{m/8},
$$
where $\rho$ denotes the Hamming distance between two vectors (\cite{tsybakov2009}).
\end{lemma}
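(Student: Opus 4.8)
The plan is to prove this by the probabilistic method (random coding), which produces the separation and the cardinality bound at the same time. I would fix $\omega_0=(0,\dots,0)$ and draw $\omega_1,\dots,\omega_N$ independently and uniformly from $\Omega=\{0,1\}^m$, with $N=\lceil 2^{m/8}\rceil$. The observation that drives everything is that for every pair $0\le i<j\le N$ the Hamming distance $\rho(\omega_i,\omega_j)$ has the $\mathrm{Binomial}(m,1/2)$ law: for $i,j\ge 1$ because $\omega_i\oplus\omega_j$ is uniform on $\{0,1\}^m$, and for $i=0$ because $\rho(\omega_0,\omega_j)$ is just the Hamming weight of $\omega_j$. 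So all $\binom{N+1}{2}$ pairwise distances have the same marginal distribution and the problem collapses to a binomial tail bound plus a union bound.

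Concretely, I would apply Hoeffding's inequality to a sum of $m$ i.i.d.\ $\mathrm{Bernoulli}(1/2)$ variables:
\[
\bP\!\left(\rho(\omega_i,\omega_j)<\tfrac{m}{8}\right)\le \bP\!\left(\mathrm{Bin}(m,\tfrac12)\le\tfrac{m}{8}\right)\le \exp\!\left(-2m\Big(\tfrac12-\tfrac18\Big)^2\right)=\exp\!\left(-\tfrac{9m}{32}\right).
\]
A union bound over the at most $(N+1)^2/2$ pairs gives $\bP\big(\exists\, i<j:\ \rho(\omega_i,\omega_j)<m/8\big)\le \tfrac{(N+1)^2}{2}\exp(-9m/32)$, which is strictly less than $1$ as soon as $N+1\le \sqrt2\,\exp(9m/64)$. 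Hence there is a realization in which every pairwise Hamming distance is $\ge m/8$ (in particular all the $\omega_i$ are distinct), and, after keeping $\omega_0=0$ as it was fixed throughout, this realization is exactly the desired configuration with $M=N\ge 2^{m/8}$.

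The only step needing genuine care — and the one that uses the hypothesis $m\ge 8$ — is checking that this $N$ really meets the union-bound threshold, i.e.\ $\lceil 2^{m/8}\rceil+1\le \sqrt2\,\exp(9m/64)$; since $\tfrac{\ln 2}{8}\approx 0.087$ is comfortably below $\tfrac{9}{64}\approx 0.141$, this holds for all $m\ge 8$, which is precisely the stated range. So I do not expect a real obstacle: once one notices that each pairwise distance is $\mathrm{Binomial}(m,1/2)$, all that remains is to balance the constants $9/32$ (Hoeffding, with margin $3m/8$) against $\tfrac{\ln 2}{4}$ (squaring the $2^{m/8}$ target). As a probability-free alternative I would use a greedy packing argument: repeatedly adjoin any point of $\Omega$ at Hamming distance $\ge m/8$ from every point already chosen; this can only stop once the Hamming balls of radius $\lceil m/8\rceil-1$ around the chosen points cover $\Omega$, and since each such ball has at most $2^{mH(1/8)}$ points with binary entropy $H(1/8)<7/8$, the process must terminate with at least $2^{m(1-H(1/8))}\ge 2^{m/8}$ points, one of which may be taken to be the origin by translating the code.
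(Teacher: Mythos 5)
The paper does not prove this lemma at all; it simply cites Tsybakov (2009), so there is no internal proof to match against. Your argument is correct and self-contained. The random-coding route checks out: each pairwise distance (including those to the fixed $\omega_0=0$) is indeed $\mathrm{Binomial}(m,1/2)$, Hoeffding gives $\exp(-9m/32)$ for the event $\{\rho\le m/8\}$, and the union bound over $\binom{N+1}{2}$ pairs succeeds because $\lceil 2^{m/8}\rceil+1<\sqrt{2}\,e^{9m/64}$ for $m\ge 8$ (at $m=8$ the failure probability bound is about $0.85$); the only cosmetic slip is that your threshold condition ``$N+1\le\sqrt{2}\exp(9m/64)$'' should be strict, but your numerical verification already gives strict inequality, so nothing breaks. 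Your second, greedy/packing variant is essentially the proof in the cited reference: Tsybakov's argument also prunes Hamming balls of radius just under $m/8$ and bounds their cardinality by a binomial tail estimate, so that route reproduces the textbook proof (and in fact yields the stronger rate $2^{m(1-H(1/8))}$), while the random-selection route is a slightly different but equally standard argument whose appeal is that it needs nothing beyond Hoeffding and a union bound. Either version would serve as a valid substitute for the citation.
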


\begin{lemma}\label{thm2:varshamovsparse}
Let $\Omega=\{\omega\in\{0,1\}^m:\|\omega\|_0=s\}$ for integers $m>s\geq1$.
For any $\alpha,\beta\in(0,1)$ such that $s\leq\alpha\beta m$, there exists $\omega_0,...,\omega_M\in\Omega$ such that for all $0\leq i<j\leq M$,
$$
  \rho(\omega_i,\omega_j)> 2(1-\alpha) s
$$
  and
$$
  \log (M+1)\geq c s \log\left(\frac{m}{s}\right)
$$
where
$$
c=\frac{\alpha}{-\log(\alpha\beta)}(-\log\beta+\beta-1).
$$
In particular, setting $\alpha=3/4$ and $\beta=1/3$, we have that $\rho(\omega_i,\omega_j)>s/2$, $\log(M+1)\geq \frac{s}{5}  \log\left(\frac{m}{s}\right)$, as long as $s\leq m/4$ (\cite{massart2007}, Lemma 4.10).
\end{lemma}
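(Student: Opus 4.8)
The plan is to construct the packing greedily and bound its size by a covering argument combined with a hypergeometric tail bound; the constant $c$ turns out to be exactly what is needed to make a single scalar inequality tight.

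First I would reduce the distance condition to an overlap condition: for $\omega,\omega'\in\Omega$ with $k:=|\{i:\omega_i=\omega'_i=1\}|$ one has $\rho(\omega,\omega')=2(s-k)$, so the requirement $\rho(\omega_i,\omega_j)>2(1-\alpha)s$ is equivalent to overlap $k<\alpha s$. Let $\{\omega_0,\dots,\omega_M\}\subseteq\Omega$ be a \emph{maximal} family with pairwise overlaps $<\alpha s$. By maximality, every $\omega\in\Omega$ has overlap $\geq\alpha s$ with some $\omega_i$ (this includes the $\omega_i$ themselves), so the sets $B_i=\{\omega\in\Omega:\,|\{j:\omega_j=(\omega_i)_j=1\}|\geq\alpha s\}$ cover $\Omega$. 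By permutation symmetry of $\Omega$ all the $B_i$ have the same cardinality $N$, and $N/\binom{m}{s}=\bP(X\geq\alpha s)$, where $X$ is the overlap of a fixed $s$-subset of $[m]$ with a uniformly random $s$-subset, i.e.\ $X$ is hypergeometric with mean $s^2/m\leq\alpha\beta\,s$ (by the hypothesis $s\le\alpha\beta m$). Hence $M+1\geq\binom{m}{s}/N=1/\bP(X\geq\alpha s)$.

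Next I would bound the tail. Since the moment generating function of the hypergeometric $X$ is dominated by that of $\mathrm{Bin}(s,s/m)$ (Hoeffding's sampling-without-replacement comparison), the usual Chernoff bound gives, using $s/m\le\alpha\beta<\alpha$, that $\bP(X\geq\alpha s)\leq \exp\{-s\,\KL(\alpha\|s/m)\}$, where $\KL(\alpha\|p)=\alpha\log\frac{\alpha}{p}+(1-\alpha)\log\frac{1-\alpha}{1-p}$ is the binary relative entropy. Thus $\log(M+1)\ge s\,\KL(\alpha\|s/m)$ (and $\KL(\alpha\|s/m)>0$ forces $M\geq1$, so the construction is nonvacuous). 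It remains to show $\KL(\alpha\|s/m)\ge c\log(m/s)$. Writing $p=s/m\in(0,\alpha\beta]$ and using $\log t\ge 1-1/t$ with $t=\tfrac{1-\alpha}{1-p}$ gives $(1-\alpha)\log\tfrac{1-\alpha}{1-p}\ge (1-\alpha)-(1-p)=p-\alpha$, hence $\KL(\alpha\|p)\ge\alpha\log\tfrac{\alpha}{p}+p-\alpha$, so it suffices that
$$
R(p):=(\alpha-c)\log\tfrac1p+\alpha\log\alpha-\alpha+p\ \ge\ 0 \quad\text{for all } p\in(0,\alpha\beta].
$$
A direct computation from the definition of $c$ gives $(\alpha-c)\log\tfrac1{\alpha\beta}=\alpha(1-\beta-\log\alpha)$, whence $R(\alpha\beta)=0$; and the elementary facts $0<c<\alpha$ and $c\le\alpha(1-\beta)$ (the latter reducing to $\beta(1-\log\beta)\le1$) give $\alpha-c\ge\alpha\beta$, so $R'(p)=1-\tfrac{\alpha-c}{p}\le0$ on $(0,\alpha\beta]$; hence $R$ is minimized on this interval at $p=\alpha\beta$, where it vanishes. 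This yields $\log(M+1)\ge cs\log(m/s)$, and the "in particular" clause follows by inserting $\alpha=3/4$, $\beta=1/3$ (which give $s\le m/4$, $2(1-\alpha)s=s/2$, and $c>1/5$).

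The steps are all short. The only routine bookkeeping is that $\alpha s$ need not be an integer, but the overlap is integer-valued and $\KL(\,\cdot\,\|p)$ is increasing above its mean, so replacing $\alpha s$ by $\lceil\alpha s\rceil$ only improves the bound; likewise one must check the two scalar inequalities $0<c<\alpha$ and $c\le\alpha(1-\beta)$. I do not anticipate a genuine obstacle — if anything, the one place deserving care is the calibration argument showing $R(\alpha\beta)=0$, which is precisely the reason $c$ has the stated form.
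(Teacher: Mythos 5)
Your argument is correct. Note that the paper does not prove this lemma at all --- it is quoted verbatim from Massart (2007), Lemma 4.10 --- and your reconstruction is essentially the standard proof of that result: a maximal packing whose ``overlap balls'' cover $\Omega$, the identification of the ball mass with a hypergeometric tail $\bP(X\geq\alpha s)$, Hoeffding's without-replacement comparison to reduce to the binomial Chernoff bound $\exp\{-s\,\KL(\alpha\|s/m)\}$, and the calibration $\KL(\alpha\|p)\geq c\log(1/p)$ on $(0,\alpha\beta]$ via $R(\alpha\beta)=0$ and $R'\leq 0$ (which I checked: $(\alpha-c)\log\frac{1}{\alpha\beta}=\alpha(1-\beta-\log\alpha)$ and $c\leq\alpha(1-\beta)$ both hold). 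No gaps.
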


\subsection{A reduction to hypothesis testing without a general triangle inequality}

\begin{prop}\label{thm2:lowerbnd-gen}
Let $\theta_0,...,\theta_M\in\Theta_{\lambda}$ (or $\Theta_{\lambda,s}$), $M\geq 2$, $0<\alpha<1/8$, and $\gamma>0$.
If
$$
\max_{i\in[M]} \KL(P_{\theta_i},P_{\theta_0})\leq \frac{\alpha \log M}{n}
$$
and for all $0\leq i\neq j \leq M$ and clusterings $\widehat{F}$,
$$
L_{\theta_i}(\widehat{F})<\gamma \mbox{ implies } L_{\theta_j}(\widehat{F}) \geq \gamma,
$$
then
\begin{align*}
\inf_{\widehat{F}_n}\max_{i\in[0..M]} \bE_{\theta_i} L_{\theta_i}(\widehat{F}_n) &\geq 0.07\gamma.
\end{align*}
\end{prop}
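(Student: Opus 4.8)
The plan is to reduce this to the standard Fano-type bound of Lemma~\ref{thm2:fanobasic} by constructing, from any candidate clustering rule $\widehat{F}_n$, an induced hypothesis test $\psi$ that correctly identifies the index $i\in[0..M]$ whenever $\widehat{F}_n$ achieves small loss under $P_{\theta_i}$. First I would define $\psi$ as follows: run $\widehat{F}_n$ on the data to produce a clustering $\widehat{F}$, then set $\psi$ to be any index $i\in[0..M]$ for which $L_{\theta_i}(\widehat{F})<\gamma$, and (say) $0$ if no such index exists. The separation hypothesis — that $L_{\theta_i}(\widehat{F})<\gamma$ implies $L_{\theta_j}(\widehat{F})\geq\gamma$ for every $j\neq i$ — guarantees that at most one index can satisfy the defining inequality, so $\psi$ is well-defined. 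Crucially, this means that on the event $\{L_{\theta_i}(\widehat{F}_n)<\gamma\}$ we have $\psi=i$, hence $\{\psi\neq i\}\subseteq\{L_{\theta_i}(\widehat{F}_n)\geq\gamma\}$.

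Next I would invoke Lemma~\ref{thm2:fanobasic}. Its hypothesis requires $\frac{1}{M}\sum_{i=1}^M \KL(P_i,P_0)\leq \alpha\log M$ with $0<\alpha<1/8$ and $M\geq2$; here the measures in question are the $n$-fold product measures $P_{\theta_i}^{\otimes n}$ (since $\widehat{F}_n$ is a function of $X_1,\dots,X_n$), and by tensorization of KL divergence, $\KL(P_{\theta_i}^{\otimes n},P_{\theta_0}^{\otimes n})=n\,\KL(P_{\theta_i},P_{\theta_0})\leq n\cdot\frac{\alpha\log M}{n}=\alpha\log M$, so the average is also at most $\alpha\log M$ and the lemma applies. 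It yields $\inf_\psi\max_{i\in[0..M]} P_{\theta_i}^{\otimes n}(\psi\neq i)\geq 0.07$ over all tests $\psi$; in particular, for the specific $\psi$ induced by any given $\widehat{F}_n$, there exists $i\in[0..M]$ with $P_{\theta_i}(\psi\neq i)\geq 0.07$.

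Finally I would combine these two ingredients with Markov's inequality. For the index $i$ furnished above,
\begin{align*}
\bE_{\theta_i} L_{\theta_i}(\widehat{F}_n)
\geq \gamma\, P_{\theta_i}\big(L_{\theta_i}(\widehat{F}_n)\geq\gamma\big)
\geq \gamma\, P_{\theta_i}(\psi\neq i)
\geq 0.07\gamma,
\end{align*}
using $\{\psi\neq i\}\subseteq\{L_{\theta_i}(\widehat{F}_n)\geq\gamma\}$ for the middle step. Taking the max over $i\in[0..M]$ and then the infimum over $\widehat{F}_n$ gives the claimed bound. The only real subtlety — and the one step I would be careful to state precisely — is the well-definedness of $\psi$ and the implication $\{L_{\theta_i}(\widehat{F}_n)<\gamma\}\subseteq\{\psi=i\}$, which rests entirely on the ``no two indices can both be good'' consequence of the separation hypothesis; everything else is the textbook Fano argument plus KL tensorization, so no serious obstacle is anticipated.
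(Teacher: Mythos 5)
Your proposal is correct and follows essentially the same route as the paper: reduce to the Fano bound of Lemma~\ref{thm2:fanobasic} by building a test from $\widehat{F}_n$ (the paper uses $\psi^*=\argmin_i L_{\theta_i}(\widehat{F}_n)$, you use the unique index with loss below $\gamma$; the separation hypothesis makes these interchangeable), then conclude via Markov's inequality. Your explicit KL tensorization step is the same fact the paper uses implicitly when applying the lemma to the product measures $P_{\theta_i}^n$, so there is no substantive difference.
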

\iftoggle{detailed2}{
\begin{proof}
Using Markov's inequality,
\begin{align*}
\inf_{\widehat{F}_n}\max_{i\in[0..M]} \bE_{\theta_i} L_{\theta_i}(\widehat{F}_n) &\geq \gamma \inf_{\widehat{F}_n}\max_{i\in[0..M]} P_{\theta_i}^n \left(L_{\theta_i}(\widehat{F}_n)\geq \gamma\right).
\end{align*}
Define $\psi^*(\widehat{F}_n)=\argmin\limits_{i\in[0..M]} L_{\theta_i}(\widehat{F}_n)$.
By assumption, $L_{\theta_i}(\widehat{F}_n)<\gamma$ implies $L_{\theta_j}(\widehat{F}_n)\geq \gamma$ for any $j\neq i$, so $L_{\theta_i}(\widehat{F}_n)<\gamma$ only when $\psi^*(\widehat{F}_n)=i$.
Hence,
$$
P_{\theta_i}^n\left(\psi^*(\widehat{F}_n)=i\right)\geq P_{\theta_i}^n \left( L_{\theta_i}(\widehat{F}_n) < \gamma \right)
$$
and
\begin{align*}
\inf_{\widehat{F}_n}\max_{i\in[0..M]} P_{\theta_i}^n \left(L_{\theta_i}(\widehat{F}_n)\geq \gamma\right)
&\geq \max_{i\in[0..M]} P_{\theta_i}^n\left(\psi^*(\widehat{F}_n)\neq i\right) \\
&\geq \inf_{\widehat{\psi}_n}\max_{i\in[0..M]} P_{\theta_i}^n\left(\widehat{\psi}_n\neq i\right) \\
&\geq 0.07
\end{align*}
where the last step is by Lemma \ref{thm2:fanobasic}.
\end{proof}
}{}

\subsection{Properties of the clustering error}

\begin{prop}\label{thm2:clusterloss-triangle}
For any $\theta,\theta'\in\Theta_{\lambda}$, and any clustering $\widehat{F}$, if
$$
L_\theta(F_{\theta'})+  L_\theta(\widehat{F})+\sqrt{\frac{\KL(P_\theta,P_{\theta'})}{2}}\leq \frac{1}{2},
$$
then
\begin{align*}
L_\theta(F_{\theta'})-L_\theta(\widehat{F}) -\sqrt{\frac{\KL(P_\theta,P_{\theta'})}{2}}
\leq L_{\theta'}(\widehat{F})
\leq L_\theta(F_{\theta'})+  L_\theta(\widehat{F})+\sqrt{\frac{\KL(P_\theta,P_{\theta'})}{2}}.
\end{align*}
\end{prop}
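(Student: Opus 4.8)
The plan is to recognize $L_\theta$ as essentially a pseudometric on clusterings modulo relabeling, prove a triangle inequality for it at a \emph{fixed} $\theta$, and then absorb the change of the base measure from $P_{\theta'}$ to $P_\theta$ through Pinsker's inequality. Concretely, for a fixed $\theta$ and any clusterings $F,G:\bR^d\to\{1,2\}$ write $d_\theta(F,G)=\min_\pi P_\theta(\{x:F(x)\neq\pi(G(x))\})$, so that $L_\theta(F)=d_\theta(F_\theta,F)$ for every $F$; in particular $L_\theta(F_{\theta'})=d_\theta(F_\theta,F_{\theta'})$ and $L_{\theta'}(\widehat F)=d_{\theta'}(F_{\theta'},\widehat F)$, while $L_\theta(\widehat F)=d_\theta(F_\theta,\widehat F)$.

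First I would establish that $d_\theta$ is symmetric (reindex $\pi\mapsto\pi^{-1}$ in the minimum) and satisfies the triangle inequality $d_\theta(F,H)\le d_\theta(F,G)+d_\theta(G,H)$. For the latter, fix permutations $\pi_1,\pi_2$ and observe that if $F(x)=\pi_1(G(x))$ and $G(x)=\pi_2(H(x))$ then $F(x)=\pi_1(\pi_2(H(x)))$, so $\{x:F(x)\neq\pi_1(\pi_2(H(x)))\}\subseteq\{x:F(x)\neq\pi_1(G(x))\}\cup\{x:G(x)\neq\pi_2(H(x))\}$; taking $P_\theta$-measure, using that $\pi_1\pi_2$ is a permutation so the left side is at least $d_\theta(F,H)$, and minimizing over $\pi_1$ and $\pi_2$ separately on the right, gives the claim.

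Next I would record the change-of-measure estimate: for any fixed permutation $\pi$ and event $A$, $|P_\theta(A)-P_{\theta'}(A)|\le\|P_\theta-P_{\theta'}\|_{\mathrm{TV}}\le\sqrt{\KL(P_\theta,P_{\theta'})/2}$ by Pinsker's inequality, and since this holds uniformly in $\pi$, passing it through the minimum yields $|d_\theta(F,G)-d_{\theta'}(F,G)|\le\sqrt{\KL(P_\theta,P_{\theta'})/2}$ for every fixed pair $F,G$. The upper bound then follows by chaining: $L_{\theta'}(\widehat F)=d_{\theta'}(F_{\theta'},\widehat F)\le d_\theta(F_{\theta'},\widehat F)+\sqrt{\KL(P_\theta,P_{\theta'})/2}\le d_\theta(F_\theta,F_{\theta'})+d_\theta(F_\theta,\widehat F)+\sqrt{\KL(P_\theta,P_{\theta'})/2}=L_\theta(F_{\theta'})+L_\theta(\widehat F)+\sqrt{\KL(P_\theta,P_{\theta'})/2}$, using symmetry and the triangle inequality of the previous step. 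For the lower bound, the same estimate gives $L_{\theta'}(\widehat F)=d_{\theta'}(F_{\theta'},\widehat F)\ge d_\theta(F_{\theta'},\widehat F)-\sqrt{\KL(P_\theta,P_{\theta'})/2}$, and the reverse triangle inequality $d_\theta(F_{\theta'},\widehat F)\ge d_\theta(F_\theta,F_{\theta'})-d_\theta(F_\theta,\widehat F)=L_\theta(F_{\theta'})-L_\theta(\widehat F)$ completes the argument.

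I expect no serious obstacle; the two points requiring care are (i) the triangle inequality for $d_\theta$, where with two clusters the set $\{F\neq\pi G\}$ and its relabeled version partition $\bR^d$, so the argument should go through the containment above rather than a naive symmetric-difference bound; and (ii) keeping straight the directions of the two change-of-measure inequalities — bounding $d_{\theta'}$ above by $d_\theta$ for one conclusion and below by $d_\theta$ for the other. The hypothesis that $L_\theta(F_{\theta'})+L_\theta(\widehat F)+\sqrt{\KL(P_\theta,P_{\theta'})/2}\le\tfrac12$ is not actually needed for either inequality to hold (each of $L_\theta(F_{\theta'})$, $L_\theta(\widehat F)$, $L_{\theta'}(\widehat F)$ is automatically at most $\tfrac12$, so both bounds are trivially valid when it fails); I would mention it only as the regime in which the conclusion is informative, which is also how it will be used downstream.
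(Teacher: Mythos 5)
Your proof is correct, and it takes a genuinely different (and slightly stronger) route than the paper's. The paper first fixes labelings WLOG so that the identity permutation is optimal under $P_\theta$ for both pairs $(F_\theta,F_{\theta'})$ and $(F_\theta,\widehat F)$, decomposes $\{F_{\theta'}\neq\widehat F\}$ according to whether $F_\theta=F_{\theta'}$ to get the two-sided bound on $P_\theta(F_{\theta'}\neq\widehat F)$, passes to $P_{\theta'}$ via $\TV\le\sqrt{\KL/2}$, and only then invokes the hypothesis: the condition that the sum is at most $\tfrac12$ is needed there precisely to ensure $P_{\theta'}(F_{\theta'}\neq\widehat F)\le\tfrac12$, so that this fixed-labeling error coincides with the permutation-minimized loss $L_{\theta'}(\widehat F)$. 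You instead carry the minimum over permutations throughout: symmetry and the triangle inequality for $d_\theta$ (by composing the two optimal permutations), plus the uniform-in-$\pi$ change of measure $|d_\theta-d_{\theta'}|\le\TV$, give both inequalities directly, with Pinsker used identically at the end. What your approach buys is that the $\le\tfrac12$ hypothesis is never used, so you actually prove an unconditional version of the proposition; what the paper's approach buys is only that it avoids setting up the pseudometric machinery. One small blemish: your closing parenthetical asserting that when the hypothesis fails ``both bounds are trivially valid'' because all losses are at most $\tfrac12$ is not a valid justification for the lower bound, since $L_\theta(F_{\theta'})-L_\theta(\widehat F)-\sqrt{\KL(P_\theta,P_{\theta'})/2}$ can be close to $\tfrac12$ while the hypothesis fails, so that bound is not trivial from $L_{\theta'}(\widehat F)\le\tfrac12$ alone; the correct reason the hypothesis is dispensable is simply that your main argument never uses it. Since the proposition assumes the hypothesis anyway, this remark does not affect the correctness of your proof.
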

\iftoggle{detailed2}{
\begin{proof}
WLOG assume $F_\theta$, $F_{\theta'}$, and $\widehat{F}$ are such that, using simplified notation,
$$
L_\theta(F_{\theta'}) = P_\theta(F_\theta\neq F_{\theta'})
$$
and
$$
L_\theta(\widehat{F}) = P_\theta(F_\theta\neq \widehat{F}).
$$
Then
\begin{align*}
P_\theta(F_{\theta'}\neq\widehat{F})
&= P_\theta\left((F_\theta=F_{\theta'})\cap(F_\theta\neq\widehat{F})\cup(F_\theta\neq F_{\theta'})\cap(F_\theta=\widehat{F})\right) \\
&= P_\theta\left((F_\theta=F_{\theta'})\cap(F_\theta\neq\widehat{F})\right)+P_\theta\left((F_\theta\neq F_{\theta'})\cap(F_\theta=\widehat{F})\right).
\end{align*}
Since
\begin{align*}
0\leq P_\theta\left((F_\theta=F_{\theta'})\cap(F_\theta\neq\widehat{F})\right) \leq P_\theta\left(F_\theta\neq\widehat{F}\right)=L_\theta(\widehat{F}),
\end{align*}
\begin{align*}
P_\theta\left((F_\theta\neq F_{\theta'})\cap(F_\theta=\widehat{F})\right) \leq P_\theta\left(F_\theta\neq F_{\theta'}\right) = L_\theta(F_{\theta'}),
\end{align*}
and
\begin{align*}
L_\theta(F_{\theta'})-L_\theta(\widehat{F}) = P_\theta\left(F_\theta\neq F_{\theta'}\right)-P_\theta(F_\theta\neq\widehat{F})\leq P_\theta\left((F_\theta\neq F_{\theta'})\cap(F_\theta=\widehat{F})\right),
\end{align*}
we have that 
\begin{align*}
L_\theta(F_{\theta'})-L_\theta(\widehat{F})
\leq P_\theta(F_{\theta'}\neq\widehat{F})
\leq L_\theta(F_{\theta'})+  L_\theta(\widehat{F})
\end{align*}
and
\begin{align*}
L_\theta(F_{\theta'})-L_\theta(\widehat{F}) -\TV(P_\theta,P_{\theta'})
\leq P_{\theta'}(F_{\theta'}\neq\widehat{F})
\leq L_\theta(F_{\theta'})+  L_\theta(\widehat{F})+\TV(P_\theta,P_{\theta'}).
\end{align*}
It is easy to see that if $L_\theta(F_{\theta'})+  L_\theta(\widehat{F})+\TV(P_\theta,P_{\theta'})\leq \frac{1}{2}$, then the above bound implies
\begin{align*}
L_\theta(F_{\theta'})-L_\theta(\widehat{F}) -\TV(P_\theta,P_{\theta'})
\leq L_{\theta'}(\widehat{F})
\leq L_\theta(F_{\theta'})+  L_\theta(\widehat{F})+\TV(P_\theta,P_{\theta'}).
\end{align*}
The final step is to use the fact that $\TV(P_\theta,P_{\theta'})\leq\sqrt{\frac{\KL(P_\theta,P_{\theta'})}{2}}$.
\end{proof}}{}

\begin{prop}\label{thm2:clusterloss-bnd-new}
For some $\mu_0,\mu,\mu'\in\bR^d$ such that $\|\mu\|=\|\mu'\|$, let
$$
\theta=\left(\mu_0-\frac{\mu}{2},\mu_0+\frac{\mu}{2}\right)
$$
and
$$
\theta'=\left(\mu_0-\frac{\mu'}{2},\mu_0+\frac{\mu'}{2}\right).
$$
Then
\begin{align*}
2g\left(\frac{\|\mu\|}{2\sigma}\right) \sin\beta \cos\beta \leq
L_\theta(F_{\theta'})
\leq \frac{1}{\pi}\tan\beta
\end{align*}
where $\cos\beta=\frac{|\mu^T\mu'|}{\|\mu\|^2}$ and $g(x)=\phi(x)(\phi(x)-x\Phi(-x))$.
\end{prop}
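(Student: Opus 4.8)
The plan is to reduce everything to a two-dimensional Gaussian computation. Expanding the two Gaussian densities shows both Bayes rules are linear: $F_\theta(x)=1\iff (x-\mu_0)^T\mu\le 0$ and $F_{\theta'}(x)=1\iff (x-\mu_0)^T\mu'\le 0$. Replacing $\mu'$ by $-\mu'$ merely swaps the two labels of $F_{\theta'}$, changing neither $\cos\beta$ nor $L_\theta(F_{\theta'})$, so I may assume $\mu^T\mu'\ge 0$, hence $\beta\in[0,\pi/2]$; the cases $\beta=0$ (where $F_{\theta'}=F_\theta$ up to relabeling, so $L_\theta(F_{\theta'})=0$ and both bounds vanish) and $\beta=\pi/2$ (where the upper bound is vacuous) I would dispose of directly, so assume $\beta\in(0,\pi/2)$. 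The two relabelings in the definition of the loss give
$$
L_\theta(F_{\theta'})=\min(p,\,1-p),\qquad p:=P_\theta\bigl(\mathrm{sign}((X-\mu_0)^T\mu)\ne\mathrm{sign}((X-\mu_0)^T\mu')\bigr).
$$
Writing $X-\mu_0=Y\mu/2+\sigma Z$ with $Z\sim\cN(0,I_d)$ and $\bP(Y=\pm1)=1/2$, project onto $\breve\mu=\mu/\|\mu\|$ and the unit vector $\breve\mu_\perp\in\mathrm{span}(\mu,\mu')$ with $\breve\mu_\perp\perp\mu$ and $\mu'=\|\mu\|(\cos\beta\,\breve\mu+\sin\beta\,\breve\mu_\perp)$, $\sin\beta\ge0$. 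With $U=(X-\mu_0)^T\breve\mu$, $V=(X-\mu_0)^T\breve\mu_\perp$, $m=\|\mu\|/2$ and $a:=m/\sigma$, we get $U=Ym+\sigma Z_1$, $V=\sigma Z_2$ for independent $Z_1,Z_2\sim\cN(0,1)$ independent of $Y$, and $(X-\mu_0)^T\mu=\|\mu\|U$, $(X-\mu_0)^T\mu'=\|\mu\|W$ with $W:=\cos\beta\,U+\sin\beta\,V$. Thus $p=\bP(UW<0)$; since $(U,V)\mapsto(-U,-V)$ fixes $\{UW<0\}$ and carries the law of $(U,V)$ given $Y=1$ onto the law given $Y=-1$, we obtain $p=\bP(UW<0\mid Y=1)$, under which $(U,V)\sim\cN((m,0),\sigma^2I_2)$.

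For the upper bound, use $L_\theta(F_{\theta'})\le p$ and condition on $V$. Since $\cos\beta>0$, on the event $\{UW<0\}$ the coordinate $U$ must lie strictly between $0$ and $-V\tan\beta$, i.e.\ in an interval of length $|V|\tan\beta$ anchored at the origin; hence $\bP(UW<0\mid V)\le|V|\tan\beta\cdot\|p_U\|_\infty$, where $p_U$ is the $\cN(m,\sigma^2)$ density. Therefore
$$
p\le\tan\beta\cdot\|p_U\|_\infty\cdot\bE|V|=\tan\beta\cdot\frac{1}{\sigma\sqrt{2\pi}}\cdot\sigma\sqrt{\tfrac{2}{\pi}}=\frac{1}{\pi}\tan\beta .
$$

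For the lower bound, condition on $U$ instead: integrating $V$ out gives $p=\int_0^\infty\Phi(-t\cot\beta)\bigl(\phi(t-a)+\phi(t+a)\bigr)\,dt$. Since $\phi(t-a)+\phi(t+a)=2\phi(t)e^{-a^2/2}\cosh(at)\ge 2\phi(t)e^{-a^2/2}$, and $\int_0^\infty\Phi(-t\cot\beta)\phi(t)\,dt=\beta/(2\pi)$ (the mass an isotropic planar Gaussian puts on the wedge $\{z_1>0,\ z_2+z_1\cot\beta<0\}$, of opening angle $\beta$), we get $p\ge\tfrac{\beta}{\pi}e^{-a^2/2}$. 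As $0\le g(a)=\phi(a)(\phi(a)-a\Phi(-a))\le\phi(a)^2=\tfrac{1}{2\pi}e^{-a^2}$ and $\sin\beta\cos\beta\le\sin\beta\le\beta$,
$$
p\ \ge\ \frac{\beta}{\pi}e^{-a^2/2}\ \ge\ \frac{e^{-a^2}}{\pi}\sin\beta\cos\beta\ =\ 2\phi(a)^2\sin\beta\cos\beta\ \ge\ 2g(a)\sin\beta\cos\beta .
$$
For the other branch of the minimum, $\{U>0,V>0\}\subseteq\{UW>0\}$ (as $\cos\beta,\sin\beta\ge0$), so $1-p\ge\bP(U>0)\bP(V>0)=\tfrac12\Phi(a)\ge\tfrac14$, which already exceeds $2g(a)\sin\beta\cos\beta\le2\phi(0)^2\cdot\tfrac12=\tfrac{1}{2\pi}$. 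Hence $L_\theta(F_{\theta'})=\min(p,1-p)\ge 2g(a)\sin\beta\cos\beta$ with $a=\|\mu\|/(2\sigma)$, as claimed.

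The step I expect to be most delicate is the constant in the upper bound: the naive inclusion $\{UW<0\}\subseteq\{|U|<|V|\tan\beta\}$ overcounts the relevant angular region by a factor $2$ and would only give $\tfrac{2}{\pi}\tan\beta$, so one must condition on $V$ and use that the bad set for $U$ is a single interval with an endpoint at $0$. A secondary subtlety is that, because $L_\theta$ is a minimum over relabelings, the lower bound must separately bound $1-p$ away from $0$; here the crude bound $1-p\ge\tfrac14$ is more than enough.
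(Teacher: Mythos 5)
Your proposal is correct, and the skeleton coincides with the paper's: both reduce to the two-dimensional disagreement probability of the two halfspace rules through $\mu_0$ (your $p$ is exactly the integral $\int_0^\infty\phi(x)\bigl(\Phi(\xi+x\tan\beta)-\Phi(\xi-x\tan\beta)\bigr)dx$ that the paper writes down as ``easy to see''), and your upper bound is the same argument in disguise, an interval of length $|V|\tan\beta$ times the sup of the $\cN(m,\sigma^2)$ density, giving the same constant $1/\pi$. The lower bound, however, is a genuinely different and more elementary route: the paper bounds the integrand below by $2x\tan\beta\,\phi(\xi+x\tan\beta)$ and then evaluates the resulting Gaussian integral in closed form (completing the square) to get $2\sin\beta\cos\beta\,\phi(\xi)\bigl[\phi(\xi\sin\beta)-\xi\sin\beta\,\Phi(-\xi\sin\beta)\bigr]$, finishing with the monotonicity of $u\mapsto\phi(u)-u\Phi(-u)$; you instead integrate in the other order, use $\phi(t-a)+\phi(t+a)=2\phi(t)e^{-a^2/2}\cosh(at)\ge 2\phi(t)e^{-a^2/2}$ together with the exact wedge probability $\int_0^\infty\Phi(-t\cot\beta)\phi(t)\,dt=\beta/(2\pi)$ to get $p\ge\frac{\beta}{\pi}e^{-a^2/2}$, and then pass to the stated bound via $g\le\phi^2$ and $\sin\beta\cos\beta\le\beta$. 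Your version avoids the closed-form computation at the price of discarding the sharper intermediate expression, but it lands on exactly the claimed inequality, and all the steps check out (the wedge identity, the comparison $\frac{\beta}{\pi}e^{-a^2/2}\ge 2\phi(a)^2\sin\beta\cos\beta\ge 2g(a)\sin\beta\cos\beta$, and the constant $1/\pi$ in the upper bound). You are also more explicit than the paper on two points it treats as immediate: the reduction of the permutation minimum to $\min(p,1-p)$ with the sign convention $\mu^T\mu'\ge 0$, and the need to check the other branch, which your crude bound $1-p\ge\tfrac12\Phi(a)\ge\tfrac14>\tfrac{1}{2\pi}\ge 2g(a)\sin\beta\cos\beta$ settles.
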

\iftoggle{detailed2}{
\begin{proof}
It is easy to see that
$$
L_\theta(F_{\theta'})= \frac{1}{2} \int\limits_\bR \frac{1}{\sigma}\phi\left(\frac{x}{\sigma}\right) \left(\Phi\left(\frac{\|\mu\|}{2\sigma}+\frac{|x|\tan\beta}{\sigma}\right)-\Phi\left(\frac{\|\mu\|}{2\sigma}-\frac{|x|\tan\beta}{\sigma}\right)\right) dx.
$$
Define $\xi=\frac{\|\mu\|}{2\sigma}$.
With a change of variables, we have
\begin{align*}
L_\theta(F_{\theta'})
&= \frac{1}{2} \int\limits_\bR \phi(x) \left(\Phi\left(\xi+|x|\tan\beta\right)-\Phi\left(\xi-|x|\tan\beta\right)\right) dx \\
&= \int\limits_{0}^{\infty} \phi(x) (\Phi(\xi+x\tan\beta)-\Phi(\xi-x\tan\beta)) dx.
\end{align*}
For any $a\leq b$, $\Phi(b)-\Phi(a)\leq\frac{b-a}{\sqrt{2\pi}}$, so
\begin{align*}
L_\theta(F_{\theta'})
&= \int\limits_{0}^{\infty} \phi(x) (\Phi(\xi+x\tan\beta)-\Phi(\xi-x\tan\beta)) dx \\
&\leq \int\limits_{0}^{\infty} \phi(x) (\Phi(x\tan\beta)-\Phi(-x\tan\beta)) dx \\
&\leq \tan\beta\sqrt{\frac{2}{\pi}}\int\limits_{0}^{\infty} x\phi(x) dx \\
&= \frac{1}{\pi}\tan\beta.
\end{align*}
Also,
\begin{align*}
L_\theta(F_{\theta'})
&= \int\limits_{0}^{\infty} \phi(x) (\Phi(\xi+x\tan\beta)-\Phi(\xi-x\tan\beta)) dx \\
&\geq 2\tan\beta \int\limits_{0}^{\infty} x\phi(x) \phi(\xi+x\tan\beta) dx \\
&= 2\tan\beta\frac{1}{\sqrt{2\pi}} \int\limits_{0}^{\infty} x \frac{1}{\sqrt{2\pi}} \exp\left\{-\frac{x^2+(\xi+x\tan\beta)^2}{2}\right\} dx \\
&= 2\tan\beta\frac{1}{\sqrt{2\pi}} \exp\left\{-\frac{\xi^2}{2}\left(1-\frac{\tan^2\beta}{1+\tan^2\beta}\right)\right\} \int\limits_{0}^{\infty} x \frac{1}{\sqrt{2\pi}} \exp\left\{-\frac{\left(x+\frac{\xi\tan\beta}{1+\tan^2\beta}\right)^2}{2\left(\frac{1}{\sqrt{1+\tan^2\beta}}\right)^2}\right\} dx \\
&\geq 2\tan\beta\frac{1}{\sqrt{2\pi}} \exp\left\{-\frac{\xi^2}{2}\right\} \int\limits_{0}^{\infty} x \frac{1}{\sqrt{2\pi}} \exp\left\{-\frac{\left(x+\xi\sin\beta\cos\beta\right)^2}{2\cos^2\beta}\right\} dx \\
&= 2\tan\beta \phi(\xi) \left[\frac{\cos^2\beta}{\sqrt{2\pi}}\exp\left\{-\frac{\xi^2\sin^2\beta}{2}\right\}-\xi\sin\beta\cos^2\beta \Phi(-\xi\sin\beta)\right] \\
&= 2\sin\beta \cos\beta \phi(\xi)\left[\phi(\xi\sin\beta)-\xi\sin\beta \Phi(-\xi\sin\beta)\right] \\
&\geq 2\sin\beta \cos\beta \phi(\xi) \left[\phi(\xi)-\xi \Phi(-\xi)\right].
\end{align*}
\end{proof}}{}

\subsection{A KL divergence bound of the necessary order}

\begin{prop}\label{thm2:KL-symmetric}
For some $\mu_0,\mu,\mu'\in\bR^d$ such that $\|\mu\|=\|\mu'\|$, let
$$
\theta=\left(\mu_0-\frac{\mu}{2},\mu_0+\frac{\mu}{2}\right)
$$
and
$$
\theta'=\left(\mu_0-\frac{\mu'}{2},\mu_0+\frac{\mu'}{2}\right).
$$
Then
\begin{align*}
\KL(P_\theta,P_{\theta'}) &\leq \xi^4 (1-\cos\beta)
\end{align*}
where $\xi=\frac{\|\mu\|}{2\sigma}$ and $\cos\beta=\frac{|\mu^T\mu'|}{\|\mu\|\|\mu'\|}$.
\end{prop}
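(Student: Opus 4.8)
The plan is to reduce the whole thing to a one‑dimensional Gaussian computation and then invoke the elementary bound $1/\cosh^2\le 1$. Since $\KL(\cdot,\cdot)$ is unchanged when the same invertible affine map is applied to both measures, the map $x\mapsto(x-\mu_0)/\sigma$ lets us assume $\mu_0=0$ and $\sigma=1$; write $m,m'$ for the resulting half‑separations, so the two components of $P_\theta$ are $\cN(-m,I_d),\cN(m,I_d)$ (similarly for $P_{\theta'}$) and $\|m\|=\|m'\|=\xi$. Replacing $\mu'$ (hence $m'$) by its negative does not change $P_{\theta'}$, so we may also assume $m^Tm'=\xi^2\cos\beta\ge 0$. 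The key identity is the factorization
\[
p_\theta(x)=f(x;0,I_d)\,e^{-\xi^2/2}\cosh(x^Tm),\qquad p_{\theta'}(x)=f(x;0,I_d)\,e^{-\xi^2/2}\cosh(x^Tm'),
\]
which gives $\log\frac{p_\theta(x)}{p_{\theta'}(x)}=\log\cosh(x^Tm)-\log\cosh(x^Tm')$ and hence
\[
\KL(P_\theta,P_{\theta'})=\bE_{X\sim P_\theta}\!\left[\log\cosh(X^Tm)\right]-\bE_{X\sim P_\theta}\!\left[\log\cosh(X^Tm')\right].
\]

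Next I would evaluate the two expectations. Write $X=\varepsilon m+Z$ with $\varepsilon\in\{-1,+1\}$ uniform and $Z\sim\cN(0,I_d)$ independent; then $X^Tm=\varepsilon\xi^2+Z^Tm$ and $X^Tm'=\varepsilon\xi^2\cos\beta+Z^Tm'$, where $Z^Tm,Z^Tm'\sim\cN(0,\xi^2)$. Since $\log\cosh$ is even and the laws of $\varepsilon$ and of $Z^Tm,Z^Tm'$ are symmetric, each expectation collapses to a single‑Gaussian one: with $H(c):=\bE\!\left[\log\cosh(c\xi^2+\xi G)\right]$ for $G\sim\cN(0,1)$, one gets $\bE_{X\sim P_\theta}[\log\cosh(X^Tm)]=H(1)$ and $\bE_{X\sim P_\theta}[\log\cosh(X^Tm')]=H(\cos\beta)$. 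Therefore
\[
\KL(P_\theta,P_{\theta'})=H(1)-H(\cos\beta)=\int_{\cos\beta}^{1}H'(c)\,dc,\qquad H'(c)=\xi^2\,\bE\!\left[\tanh(c\xi^2+\xi G)\right],
\]
the differentiation under the expectation being justified by $|\tanh|\le 1$.

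Finally, since $\cos\beta\in[0,1]$, it suffices to show $H'(c)\le\xi^4$ for $c\in[0,1]$, as this yields $\KL(P_\theta,P_{\theta'})\le\xi^4(1-\cos\beta)$. For fixed $c$, set $F(a):=\bE[\tanh(a+\xi G)]$; then $F(0)=0$ (as $\tanh$ is odd and $G$ is symmetric) and $F'(a)=\bE\!\left[\tfrac{1}{\cosh^2(a+\xi G)}\right]\le 1$ because $\cosh\ge 1$. Hence for $c\ge 0$,
\[
\bE[\tanh(c\xi^2+\xi G)]=F(c\xi^2)=\int_0^{c\xi^2}F'(a)\,da\le c\xi^2\le\xi^2,
\]
so $H'(c)=\xi^2F(c\xi^2)\le\xi^4$, completing the proof. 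The only mildly delicate point is the symmetrization reducing $\bE_{P_\theta}[\log\cosh(X^Tm')]$ to $H(\cos\beta)$ — it relies on $\log\cosh$ being even together with $Z^Tm'$ being a symmetric Gaussian, so the $\varepsilon=+1$ and $\varepsilon=-1$ contributions coincide — but this, like everything else here, is routine.
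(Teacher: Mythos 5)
Your proof is correct. It rests on the same engine as the paper's argument --- the factorization $p_\theta(x)=f(x;0,I_d)e^{-\xi^2/2}\cosh(x^Tm)$, so that the log-likelihood ratio is a difference of $\log\cosh$ terms, followed by exploiting that $\tanh$ is odd and $0\le \tanh'\le 1$ --- but the reduction is organized differently. The paper first rotates into a two-dimensional half-angle coordinate system ($\xi_x=\xi\cos\frac{\beta}{2}$, $\xi_y=\xi\sin\frac{\beta}{2}$), collapses the mixture expectation by symmetry to a one-dimensional Gaussian integral of $\log\frac{\cosh(\xi z+\xi_x^2+\xi_y^2)}{\cosh(\xi z+\xi_x^2-\xi_y^2)}$, and then applies the mean value theorem twice (once in the argument shift $2\xi_y^2$, once to $\tanh(\xi z+\xi^2)-\tanh(\xi z)$) to get $2\xi^2\xi_y^2=\xi^4(1-\cos\beta)$. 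You instead keep the $d$-dimensional representation, symmetrize directly to write $\KL(P_\theta,P_{\theta'})=H(1)-H(\cos\beta)$ with $H(c)=\bE[\log\cosh(c\xi^2+\xi G)]$, and interpolate in the correlation parameter $c$, bounding $H'(c)=\xi^2\bE[\tanh(c\xi^2+\xi G)]\le\xi^4$ via $F(a)=\bE[\tanh(a+\xi G)]$, $F(0)=0$, $F'\le1$. The two routes are computationally equivalent (your $H'$ bound plays the role of the paper's two mean-value steps), but your version avoids the explicit two-dimensional rotation and the half-angle bookkeeping, and it makes the dependence on $\cos\beta$ transparent as an integral of a derivative; you also correctly record the one point the paper handles implicitly, namely that flipping the sign of $\mu'$ lets you take $m^Tm'=\xi^2\cos\beta\ge 0$ so that the interpolation interval $[\cos\beta,1]$ lies in $[0,1]$. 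The domination arguments for differentiating under the expectation ($|\tanh|\le1$, $1/\cosh^2\le1$) are adequate, so there is no gap.
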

\iftoggle{detailed2}{
\begin{proof}
Since the KL divergence is invariant to affine transformations, it is easy to see that 
$$
\KL(P_\theta,P_{\theta'}) = \int\limits_\bR \int\limits_\bR p_1(x,y)\log\frac{p_1(x,y)}{p_2(x,y)} dx dy
$$
where
$$
p_1(x,y)= \frac{1}{2}\phi(x+\xi_x)\phi(y+\xi_y) + \frac{1}{2}\phi(x-\xi_x)\phi(y-\xi_y),
$$
$$
p_2(x,y)= \frac{1}{2}\phi(x+\xi_x)\phi(y-\xi_y) + \frac{1}{2}\phi(x-\xi_x)\phi(y+\xi_y),
$$
$$
\xi_x=\xi\cos\frac{\beta}{2},\;\;\; \xi_y=\xi\sin\frac{\beta}{2}.
$$
Since
\begin{align*}
\frac{p_1(x,y)}{p_2(x,y)}
&= \frac{\phi(x+\xi_x)\phi(y+\xi_y) + \phi(x-\xi_x)\phi(y-\xi_y)}{\phi(x+\xi_x)\phi(y-\xi_y) + \phi(x-\xi_x)\phi(y+\xi_y)} \\
&= \frac{\exp(-x\xi_x -y\xi_y) + \exp(x\xi_x +y\xi_y)}
        {\exp(-x\xi_x +y\xi_y) + \exp(x\xi_x -y\xi_y)}
\end{align*}
we have
\begin{align*}
\log\frac{p_1(x,y)}{p_2(x,y)} 
&= \log \frac{\cosh (x\xi_x +y\xi_y)}{ \cosh (x\xi_x -y\xi_y)}.
\end{align*}
Furthermore,
\begin{align*}
&\int\limits_\bR \int\limits_\bR \frac{1}{2} \phi(x+\xi_x)\phi(y+\xi_y)  \log \frac{\cosh (x\xi_x +y\xi_y)}{ \cosh (x\xi_x -y\xi_y)} dx dy  \\
&= \int\limits_\bR \int\limits_\bR \frac{1}{2} \phi(-x+\xi_x)\phi(-y+\xi_y)  \log \frac{\cosh (-x\xi_x -y\xi_y)}{ \cosh (-x\xi_x +y\xi_y)} dx dy \\
&= \int\limits_\bR \int\limits_\bR \frac{1}{2} \phi(x-\xi_x)\phi(y-\xi_y)  \log \frac{\cosh (x\xi_x +y\xi_y)}{ \cosh (x\xi_x -y\xi_y)} dx dy 
\end{align*}
so
\begin{align*}
\KL(P_\theta,P_{\theta'}) 
&= \int\limits_\bR \int\limits_\bR \phi(x-\xi_x)\phi(y-\xi_y)  \log \frac{\cosh (x\xi_x +y\xi_y)}{ \cosh (x\xi_x -y\xi_y)} dx dy \\
&= \int\limits_\bR \int\limits_\bR \phi(x)\phi(y)  \log \frac{\cosh (x\xi_x+\xi_x^2 +y\xi_y+\xi_y^2)}{ \cosh (x\xi_x + \xi_x^2 -y\xi_y-\xi_y^2)} dx dy .
\end{align*}
But for any $x$
\begin{align*}
&- \int\limits_\bR \phi(x)\phi(y)  \log  \cosh (x\xi_x + \xi_x^2 -y\xi_y-\xi_y^2) dy  \\
&= - \int\limits_\bR \phi(x)\phi(-y)  \log  \cosh (x\xi_x + \xi_x^2 +y\xi_y-\xi_y^2) dy  \\
&= - \int\limits_\bR \phi(x)\phi(y)  \log  \cosh (x\xi_x + \xi_x^2 +y\xi_y-\xi_y^2) dy  ,
\end{align*}
thus,
\begin{align*}
\KL(P_\theta,P_{\theta'}) 
&= \int\limits_\bR \int\limits_\bR \phi(x)\phi(y)  \log \frac{\cosh (x\xi_x+\xi_x^2 +y\xi_y+\xi_y^2)}{ \cosh (x\xi_x + \xi_x^2 +y\xi_y-\xi_y^2)} dx dy \\
&= \int\limits_\bR \phi(z)  \log \frac{\cosh (z\sqrt{\xi_x^2+\xi_y^2}+\xi_x^2 +\xi_y^2)}{ \cosh (z\sqrt{\xi_x^2+\xi_y^2} + \xi_x^2 -\xi_y^2)} dz\\
&= \int\limits_\bR \phi(z)  \log \frac{\cosh (\xi z+\xi_x^2 +\xi_y^2)}{ \cosh (\xi z + \xi_x^2 -\xi_y^2)} dz
\end{align*}
since $\xi_x^2+\xi_y^2=\xi^2$.
By the mean value theorem and the fact that $\tanh$ is monotonically increasing,
\begin{align*}
\log \frac{\cosh (\xi z+\xi_x^2 +\xi_y^2)}{ \cosh (\xi z + \xi_x^2 -\xi_y^2)} 
&\leq 2\xi_y^2\tanh (\xi z+\xi_x^2 +\xi_y^2) \\
&= 2\xi_y^2\tanh (\xi z+\xi^2)
\end{align*}
for all $z$.
Since $\tanh$ is an odd function,
\begin{align*}
\KL(P_\theta,P_{\theta'}) 
&\leq 2\xi_y^2 \int\limits_\bR \phi(z) \tanh(\xi z+\xi^2) dz \\
&=2 \xi_y^2 \int\limits_\bR \phi(z) (\tanh(\xi z+\xi^2) - \tanh(\xi z)) dz.
\end{align*}
Using the mean value theorem again,
\begin{align*}
\tanh(\xi z+\xi^2) - \tanh(\xi z)
&\leq \xi^2 \max_{x\in[\xi z,\xi z+\xi^2]}(1-\tanh^2(x)) \\
&\leq \xi^2  
\end{align*}
for all $z$, so 
\begin{align*}
\KL(P_\theta,P_{\theta'}) 
&\leq 2 \xi^2 \xi_y^2 \\
&= 2 \xi^4 \sin^2\frac{\beta}{2} \\
&= \xi^4 (1-\cos\beta).
\end{align*}
%
\end{proof}}{}

\subsection{Non-sparse lower bound}

\begin{theorem}\label{thm2:lb-nonsparse}
Assume that $d\geq 9$ and $\frac{\lambda}{\sigma}\leq 0.2$.
Then
\begin{align*}
\inf_{\widehat{F}_n}\sup_{\theta\in\Theta_\lambda} \bE_\theta L_\theta(\widehat{F}_n) 
&\geq \frac{1}{500}   \min\left\{\frac{\sqrt{\log 2}}{3}\frac{\sigma^2}{\lambda^2}\sqrt{\frac{d-1}{n}}, \frac{1}{4}\right\}.
\end{align*}
\end{theorem}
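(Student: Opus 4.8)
The plan is to apply the Fano-type reduction of Proposition~\ref{thm2:lowerbnd-gen}. I would construct parameters $\theta_0,\dots,\theta_M\in\Theta_\lambda$ of the symmetric form $\theta_i=\bigl(-\tfrac12\mu^{(i)},\,\tfrac12\mu^{(i)}\bigr)$ with $\|\mu^{(i)}\|=\lambda$ for every $i$ (so each $\theta_i\in\Theta_\lambda$ and $\xi:=\lambda/(2\sigma)\le 0.1$), differing only in the \emph{direction} of $\mu^{(i)}$. For such parameters Proposition~\ref{thm2:KL-symmetric} bounds the KL divergences needed for the first hypothesis of Proposition~\ref{thm2:lowerbnd-gen}, while Propositions~\ref{thm2:clusterloss-bnd-new} and~\ref{thm2:clusterloss-triangle} together yield the ``no clustering is simultaneously good for two distinct $\theta_i$'' separation required by its second hypothesis; the $\min\{\cdot,\tfrac14\}$ in the statement reflects that once the natural perturbation size would exceed a constant we simply cap it.

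For the directions I would use the Varshamov--Gilbert bound (Lemma~\ref{thm2:varshamov}) on $\{0,1\}^{d-1}$, which applies precisely because $d\ge9$ forces $d-1\ge8$: it yields $\omega_0=(0,\dots,0),\omega_1,\dots,\omega_M$ with $M\ge 2^{(d-1)/8}\ge 2$ and pairwise Hamming distance at least $(d-1)/8$. Re-encoding into $\pm1$ and normalizing, set $v_i\in\bR^{d-1}$ with $(v_i)_k=(2\omega_i(k)-1)/\sqrt{d-1}$; these are unit vectors with $\langle v_i,v_j\rangle\le\tfrac34$ for all $i\ne j$ (including $j=0$). Identifying $\bR^{d-1}$ with $\mathrm{span}(e_2,\dots,e_d)\subset\bR^d$, put $\mu^{(i)}=\lambda\bigl(\sqrt{1-r^2}\,e_1+r\,v_i\bigr)$ for a parameter $r\in(0,\tfrac14]$ to be chosen; then $\|\mu^{(i)}\|=\lambda$, and writing $\cos\beta_{ij}:=|\langle\mu^{(i)},\mu^{(j)}\rangle|/\lambda^2$ a short computation gives, for all $i\ne j$, the bounds $\tfrac{r^2}{4}\le 1-\cos\beta_{ij}\le 2r^2$, hence $\sin\beta_{ij}\ge\tfrac r2$, $\sin\beta_{ij}\le 2r$, and $\cos\beta_{ij}\ge\tfrac78$.

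It remains to choose $r$ and $\gamma$. By Proposition~\ref{thm2:KL-symmetric}, $\KL(P_{\theta_i},P_{\theta_0})\le\xi^4(1-\cos\beta_{i0})\le 2\xi^4r^2$, so taking
\[
r=\min\Bigl\{\xi^{-2}\sqrt{\tfrac{\alpha(d-1)\log 2}{16n}},\ \tfrac14\Bigr\}
\]
with $\alpha=\tfrac1{10}<\tfrac18$ makes $n\,\KL(P_{\theta_i},P_{\theta_0})\le\tfrac{\alpha(d-1)\log2}{8}\le\alpha\log M$ in both branches. For the separation hypothesis, Proposition~\ref{thm2:clusterloss-bnd-new} gives $L_{\theta_i}(F_{\theta_j})\ge 2g(\xi)\sin\beta_{ij}\cos\beta_{ij}\ge\tfrac78 g(\xi)\,r$, and since $\xi\le0.1$ we have $g(\xi)\ge\phi(0.1)\bigl(\phi(0.1)-0.05\bigr)>0.13$; meanwhile $\sqrt{\KL(P_{\theta_i},P_{\theta_j})/2}\le\xi^2r\le0.01r$ and $L_{\theta_i}(F_{\theta_j})\le\tfrac1\pi\tan\beta_{ij}<\tfrac15$. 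Feeding these into Proposition~\ref{thm2:clusterloss-triangle}, whose precondition holds since the terms involved sum to less than $\tfrac12$ whenever $L_{\theta_i}(\widehat F)<\gamma$: such an $\widehat F$ satisfies $L_{\theta_j}(\widehat F)\ge L_{\theta_i}(F_{\theta_j})-\gamma-\sqrt{\KL/2}\ge 0.10\,r-\gamma$, so choosing $\gamma=\tfrac r{20}$ gives $L_{\theta_j}(\widehat F)\ge\gamma$ for all $i\ne j$. Proposition~\ref{thm2:lowerbnd-gen} then yields $\inf_{\widehat F_n}\sup_{\theta\in\Theta_\lambda}\bE_\theta L_\theta(\widehat F_n)\ge 0.07\gamma=\tfrac{0.07}{20}r$; substituting $\xi^{-2}=4\sigma^2/\lambda^2$ and $\log M\ge(d-1)(\log2)/8$ and simplifying the two branches of $r$ produces a bound of the form $c\min\{\tfrac{\sigma^2}{\lambda^2}\sqrt{(d-1)/n},\,1\}$ with an absolute constant $c$ comfortably exceeding $\tfrac1{500}\cdot\tfrac{\sqrt{\log2}}{3}$ and $\tfrac1{500}\cdot\tfrac14$ respectively, which is exactly the claimed inequality.

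The routine part is the final constant chase; the one genuinely load-bearing design choice is the packing $\{\mu^{(i)}\}$, which must keep every $\beta_{i0}$ small enough that the admissible KL budget $\xi^4r^2$ still permits $\log M\asymp d-1$ hypotheses (this is where the $\sqrt{(d-1)/n}$ rate and the $\sigma^2/\lambda^2=1/(4\xi^2)$ blow-up both enter) while \emph{simultaneously} keeping every pairwise $\beta_{ij}$ bounded below by a quantity of the same order (needed for separation); the $\pm1$ Varshamov--Gilbert construction on the unit sphere inside $e_1^\perp$ accomplishes exactly this. The hypothesis $\lambda/\sigma\le0.2$ is used only to keep $g(\xi)$ bounded away from $0$, so that $L_{\theta_i}(F_{\theta_j})$ is of order $r$ rather than exponentially small in $\xi$ (that regime being the one governed by the second bound of Theorem~\ref{thm2:u5-ns-upper}), and $d\ge9$ is exactly what Lemma~\ref{thm2:varshamov} requires.
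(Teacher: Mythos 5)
Your proposal is correct and follows essentially the same route as the paper's proof: a Varshamov--Gilbert family of $\pm$ perturbations in the $d-1$ coordinates orthogonal to a fixed direction, normalized so every $\|\mu^{(i)}\|=\lambda$, with Proposition~\ref{thm2:KL-symmetric} controlling the KL budget, Propositions~\ref{thm2:clusterloss-bnd-new} and~\ref{thm2:clusterloss-triangle} giving the pairwise separation, and Proposition~\ref{thm2:lowerbnd-gen} concluding; your parametrization by the total perturbation fraction $r$ (with cap $\tfrac14$) is just a reparametrization of the paper's per-coordinate $\epsilon$, and your constants (e.g.\ $\alpha=\tfrac1{10}$, $\gamma=\tfrac r{20}$, the numerical bound $g(\xi)>0.13$) check out and indeed dominate the claimed $\tfrac1{500}\min\{\cdot,\tfrac14\}$ bound.
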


\begin{proof}
Let $\xi=\frac{\lambda}{2\sigma}$, and define 
$$
\epsilon=\min\left\{\frac{\sqrt{\log 2}}{3}\frac{\sigma^2}{\lambda}\frac{1}{\sqrt{n}}, \frac{\lambda}{4\sqrt{d-1}}\right\}.
$$
Define $\lambda_0^2=\lambda^2-(d-1)\epsilon^2$.
Let $\Omega=\{0,1\}^{d-1}$.
For $\omega=(\omega(1),...,\omega(d-1))\in\Omega$, let $\mu_\omega=\lambda_0e_d+\sum_{i=1}^{d-1} (2\omega(i)-1)\epsilon e_i $ (where $\{e_i\}_{i=1}^d$ is the standard basis for $\bR^d$).
Let $\theta_\omega=\left(-\frac{\mu_\omega}{2},\frac{\mu_\omega}{2}\right)\in\Theta_\lambda$.

By Proposition \ref{thm2:KL-symmetric},
\begin{align*}
\KL(P_{\theta_\omega},P_{\theta_\nu}) &\leq \xi^4 (1-\cos\beta_{\omega,\nu})
\end{align*}
where 
$$
\cos\beta_{\omega,\nu}=\frac{|\mu_\omega^T\mu_\nu|}{\lambda^2}=1-\frac{2\rho(\omega,\nu)\epsilon^2}{\lambda^2}
$$
and $\rho$ is the Hamming distance, so
\begin{align*}
\KL(P_{\theta_\omega},P_{\theta_\nu}) &\leq \xi^4\frac{2\rho(\omega,\nu)\epsilon^2}{\lambda^2} \\ 
&\leq \xi^4\frac{2(d-1)\epsilon^2}{\lambda^2}. 
\end{align*}

By Proposition \ref{thm2:clusterloss-bnd-new}, since $\cos\beta_{\omega,\nu}\geq \frac{1}{2}$,
\begin{align*}
L_{\theta_\omega}(F_{\theta_\nu})
&\leq \frac{1}{\pi}\tan\beta_{\omega,\nu} \\
&\leq \frac{2}{\pi}\sin\beta_{\omega,\nu} \\
&\leq \frac{4}{\pi} \frac{\sqrt{d-1}\epsilon}{\lambda}
\end{align*}
and
\begin{align*}
L_{\theta_\omega}(F_{\theta_\nu}) &\geq 2g(\xi) \sin\beta_{\omega,\nu} \cos\beta_{\omega,\nu} \\
&\geq g(\xi) \sin\beta_{\omega,\nu} \\
&\geq \sqrt{2}g(\xi) \frac{\sqrt{\rho(\omega,\nu)}\epsilon}{\lambda} 
\end{align*}
where $g(x)=\phi(x)(\phi(x)-x\Phi(-x))$.
By Lemma \ref{thm2:varshamov}, there exist $\omega_0,...,\omega_M\in\Omega$ such that $M\geq 2^{(d-1)/8}$ and
$$
\rho(\omega_i,\omega_j)\geq \frac{d-1}{8}, \;\;\forall\; 0\leq i<j\leq M.
$$
For simplicity of notation, let $\theta_i=\theta_{\omega_i}$ for all $i\in[0..M]$.
Then, for $i\neq j \in[0..M]$,
\begin{align*}
\KL(P_{\theta_i},P_{\theta_j}) &\leq \xi^4\frac{2(d-1)\epsilon^2}{\lambda^2}, 
\end{align*}
and
\begin{align*}
L_{\theta_i}(F_{\theta_j})
&\leq \frac{4}{\pi} \frac{\sqrt{d-1}\epsilon}{\lambda}
\end{align*}
and
\begin{align*}
L_{\theta_i}(F_{\theta_j}) 
&\geq \frac{1}{2}g(\xi) \frac{\sqrt{d-1}\epsilon}{\lambda} .
\end{align*}

Define 
$$
\gamma=\frac{1}{4}(g(\xi)-2\xi^2) \frac{\sqrt{d-1}\epsilon}{\lambda}.
$$
Then for any $i\neq j\in[0..M]$, and any $\widehat{F}$ such that $L_{\theta_i}(\widehat{F})<\gamma$,
\begin{align*}
L_{\theta_i}(F_{\theta_j}) + L_{\theta_i}(\widehat{F}) + \sqrt{\frac{\KL(P_{\theta_i},P_{\theta_j})}{2}}
&<  \left(\frac{4}{\pi}  + \frac{1}{4}(g(\xi)-2\xi^2) + \xi^2 \right) \frac{\sqrt{d-1}\epsilon}{\lambda} \leq \frac{1}{2}
\end{align*}
because, for $\xi\leq 0.1$, by definition of $\epsilon$,
\begin{align*}
\left(\frac{4}{\pi}  + \frac{1}{4}(g(\xi)-2\xi^2) + \xi^2 \right) \frac{\sqrt{d-1}\epsilon}{\lambda} \leq 
2 \frac{\sqrt{d-1}\epsilon}{\lambda} \leq \frac{1}{2}.
\end{align*}
So, by Proposition \ref{thm2:clusterloss-triangle},
\begin{align*}
L_{\theta_j}(\widehat{F})
&\geq L_{\theta_i}(F_{\theta_j})-L_{\theta_i}(\widehat{F}) -\sqrt{\frac{\KL(P_{\theta_i},P_{\theta_j})}{2}} 
\geq \gamma.
\end{align*}
Also,
\begin{align*}
\max_{i\in[M]} \KL(P_{\theta_i},P_{\theta_0})
&\leq (d-1) \xi^4\frac{2\epsilon^2}{\lambda^2} \\
&\leq \frac{ \log M}{9 n}
\end{align*}
because, by definition of $\epsilon$,
\begin{align*}
\xi^4\frac{2\epsilon^2}{\lambda^2} 
&\leq \frac{ \log 2 }{72 n}.
\end{align*}

So by Proposition \ref{thm2:lowerbnd-gen} and the fact that $\xi\leq 0.1$,
\begin{align*}
\inf_{\widehat{F}_n}\max_{i\in[0..M]} \bE_{\theta_i} L_{\theta_i}(\widehat{F}_n) &\geq 0.07\gamma \\
&= 0.07 \frac{1}{4}(g(\xi)-2\xi^2) \frac{\sqrt{d-1}\epsilon}{\lambda} \\
&\geq \frac{1}{500}   \min\left\{\frac{\sqrt{\log 2}}{3}\frac{\sigma^2}{\lambda^2}\sqrt{\frac{d-1}{n}}, \frac{1}{4}\right\}
\end{align*}
and to complete the proof we use the fact that
\begin{align*}
\inf_{\widehat{F}_n}\sup_{\theta\in\Theta_\lambda} \bE_\theta L_\theta(\widehat{F}_n) \geq \inf_{\widehat{F}_n}\max_{i\in[0..M]} \bE_{\theta_i} L_{\theta_i}(\widehat{F}_n).
\end{align*}
\end{proof}

\subsection{Sparse lower bound}

\begin{theorem}\label{thm2:lb-sparse}
Assume that $\frac{\lambda}{\sigma}\leq 0.2$, $d\geq 17$, and
$$5 \leq s\leq \frac{d-1}{4}+1.$$
Then
\begin{align*}
\inf_{\widehat{F}_n}\sup_{\theta\in\Theta_{\lambda,s}} \bE_\theta L_\theta(\widehat{F}_n) 
&\geq  \frac{1}{600} 
\min\left\{ \sqrt{\frac{8}{45}}  \frac{\sigma^2}{\lambda^2} \sqrt{\frac{s-1}{n} \log \left(\frac{d-1}{s-1}\right) }, \frac{1}{2}  \right\} .
\end{align*}
\end{theorem}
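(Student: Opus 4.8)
The plan is to run the same Fano-type reduction as in Theorem~\ref{thm2:lb-nonsparse}, but with the full hypercube replaced by a sparse packing supplied by Lemma~\ref{thm2:varshamovsparse}. Write $\xi=\frac{\lambda}{2\sigma}$ (so $\xi\le 0.1$) and set
\[
\epsilon=\min\left\{\sqrt{\frac{8}{45}}\,\frac{\sigma^{2}}{\lambda}\sqrt{\frac{\log\frac{d-1}{s-1}}{n}},\ \frac{\lambda}{2\sqrt{s-1}}\right\},\qquad \lambda_{0}^{2}=\lambda^{2}-(s-1)\epsilon^{2}>0.
\]
For $\omega$ in $\Omega=\{\omega\in\{0,1\}^{d-1}:\|\omega\|_{0}=s-1\}$ let $\mu_{\omega}=\lambda_{0}e_{d}+\epsilon\sum_{i=1}^{d-1}\omega(i)e_{i}$ and $\theta_{\omega}=\left(-\frac{\mu_{\omega}}{2},\frac{\mu_{\omega}}{2}\right)$; then $\|\mu_{\omega}\|=\lambda$ and $\|\mu_{\omega}\|_{0}=s$, so $\theta_{\omega}\in\Theta_{\lambda,s}$. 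Because all $\mu_{\omega}$ have equal length, $\cos\beta_{\omega,\nu}:=\frac{|\mu_{\omega}^{T}\mu_{\nu}|}{\lambda^{2}}=1-\frac{\rho(\omega,\nu)\epsilon^{2}}{2\lambda^{2}}$ with $\rho$ the Hamming distance, and $0<\rho(\omega,\nu)\le 2(s-1)$ for $\omega\ne\nu$; the choice of $\epsilon$ forces $\cos\beta_{\omega,\nu}\ge\frac{3}{4}$, hence $\sin^{2}\beta_{\omega,\nu}\in[\,1-\cos\beta_{\omega,\nu},\,2(1-\cos\beta_{\omega,\nu})\,]$ and $\tan\beta_{\omega,\nu}\le 2\sin\beta_{\omega,\nu}$.

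Next I would apply Lemma~\ref{thm2:varshamovsparse} with $\alpha=3/4$, $\beta=1/3$ (legitimate since $1\le s-1\le\frac{d-1}{4}$) to obtain $\omega_{0},\dots,\omega_{M}\in\Omega$ with $\rho(\omega_{i},\omega_{j})>\frac{s-1}{2}$ and $\log(M+1)\ge\frac{s-1}{5}\log\frac{d-1}{s-1}$. Since $s\ge 5$ and $d\ge 17$ force $s-1\ge 4$ and $\frac{d-1}{s-1}\ge 4$, we get $M\ge 2$ and therefore $\log M\ge\frac12\log(M+1)\ge\frac{s-1}{10}\log\frac{d-1}{s-1}$. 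Writing $\theta_{i}=\theta_{\omega_{i}}$ and $g(x)=\phi(x)(\phi(x)-x\Phi(-x))$, Propositions~\ref{thm2:KL-symmetric} and~\ref{thm2:clusterloss-bnd-new}, combined with the above estimates on $\beta_{\omega,\nu}$ evaluated at the worst case $\rho\le 2(s-1)$ for the upper bounds and at $\rho>\frac{s-1}{2}$ for the lower bound, yield for all $i\ne j$
\[
\KL(P_{\theta_{i}},P_{\theta_{j}})\le\xi^{4}\frac{(s-1)\epsilon^{2}}{\lambda^{2}},\qquad L_{\theta_{i}}(F_{\theta_{j}})\le\frac{2\sqrt{2}}{\pi}\frac{\sqrt{s-1}\,\epsilon}{\lambda},\qquad L_{\theta_{i}}(F_{\theta_{j}})\ge\frac{g(\xi)}{2}\frac{\sqrt{s-1}\,\epsilon}{\lambda}.
\]

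Finally I would set $\gamma=\frac14\bigl(g(\xi)-2\xi^{2}\bigr)\frac{\sqrt{s-1}\,\epsilon}{\lambda}$ and check the two hypotheses of Proposition~\ref{thm2:lowerbnd-gen}. For separation: if $L_{\theta_{i}}(\widehat{F})<\gamma$, then $L_{\theta_{i}}(F_{\theta_{j}})+L_{\theta_{i}}(\widehat{F})+\sqrt{\KL(P_{\theta_{i}},P_{\theta_{j}})/2}\le\bigl(\frac{2\sqrt{2}}{\pi}+\frac14(g(\xi)-2\xi^{2})+\xi^{2}\bigr)\frac{\sqrt{s-1}\,\epsilon}{\lambda}\le\frac12$, because for $\xi\le0.1$ the bracket is below $1$ and $\frac{\sqrt{s-1}\,\epsilon}{\lambda}\le\frac12$ by construction; Proposition~\ref{thm2:clusterloss-triangle} then gives $L_{\theta_{j}}(\widehat{F})\ge L_{\theta_{i}}(F_{\theta_{j}})-L_{\theta_{i}}(\widehat{F})-\sqrt{\KL(P_{\theta_{i}},P_{\theta_{j}})/2}\ge\gamma$. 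For the Fano budget: $\KL(P_{\theta_{i}},P_{\theta_{0}})\le\xi^{4}\frac{(s-1)\epsilon^{2}}{\lambda^{2}}=\frac{\lambda^{2}(s-1)\epsilon^{2}}{16\sigma^{4}}\le\frac{s-1}{90\,n}\log\frac{d-1}{s-1}\le\frac{\log M}{9n}$, the middle step being precisely the defining bound on $\epsilon^{2}$. Proposition~\ref{thm2:lowerbnd-gen} with $\alpha=\frac19<\frac18$ then gives $\inf_{\widehat{F}_{n}}\max_{i}\bE_{\theta_{i}}L_{\theta_{i}}(\widehat{F}_{n})\ge 0.07\gamma$; substituting $\frac{\sqrt{s-1}\,\epsilon}{\lambda}=\min\{\sqrt{8/45}\,\frac{\sigma^{2}}{\lambda^{2}}\sqrt{\frac{(s-1)\log\frac{d-1}{s-1}}{n}},\,\frac12\}$ and bounding $g(\xi)-2\xi^{2}$ below on $\xi\le0.1$ gives the claimed inequality, and $\{\theta_{i}\}\subseteq\Theta_{\lambda,s}$ finishes.

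The argument is structurally identical to Theorem~\ref{thm2:lb-nonsparse}, so the work is bookkeeping rather than a new idea. The genuinely delicate point is that the pairwise packing radius $\rho$ is now only pinned to the wide interval $(\frac{s-1}{2},\,2(s-1)]$ instead of being concentrated near $\frac{d-1}{8}$: the clustering-loss lower bound (which must dominate $\gamma$), the clustering-loss upper bound (which must keep the precondition of Proposition~\ref{thm2:clusterloss-triangle} under $\frac12$), and the KL bound (which must stay under $\frac{\alpha\log M}{n}$) have to be evaluated at opposite ends of that interval, and $\epsilon$ must be tuned so that all three survive simultaneously while the packing entropy scales as $\frac{s-1}{5}\log\frac{d-1}{s-1}$ rather than $\frac{d-1}{8}\log2$. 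One also must confirm that the constraints $5\le s\le\frac{d-1}{4}+1$ and $d\ge 17$ are exactly what makes Lemma~\ref{thm2:varshamovsparse} applicable and guarantees $M\ge 2$.
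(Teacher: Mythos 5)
Your proposal is correct and follows essentially the same route as the paper's own proof: the paper simply states the argument for $\Theta_{\lambda,s+1}$ with sparsity parameter $s$ where you use $s-1$ directly, and your only other deviations are cosmetic (taking $\gamma=\frac14(g(\xi)-2\xi^2)\frac{\sqrt{s-1}\,\epsilon}{\lambda}$ instead of $\frac14(g(\xi)-\sqrt{2}\xi^2)\frac{\sqrt{s-1}\,\epsilon}{\lambda}$, and passing from $\log(M+1)$ to $\log M$ via $M\ge 2$ rather than the paper's equivalent manipulation), neither of which affects the constants in the stated bound.
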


\begin{proof}
For simplicity, we state this proof for $\Theta_{\lambda,s+1}$, assuming $4 \leq s\leq \frac{d-1}{4}$.
Let $\xi=\frac{\lambda}{2\sigma}$, and define 
$$
\epsilon =\min\left\{ \sqrt{\frac{8}{45}} \frac{\sigma^2}{\lambda} \sqrt{\frac{1}{n} \log \left(\frac{d-1}{s}\right) }, \frac{1}{2} \frac{\lambda}{\sqrt{s}} \right\}.
$$
Define $\lambda_0^2=\lambda^2-s\epsilon^2$.
Let $\Omega=\{\omega\in\{0,1\}^{d-1}:\; \|\omega\|_0=s\}$.
For $\omega=(\omega(1),...,\omega(d-1))\in\Omega$, let $\mu_\omega=\lambda_0e_d+\sum_{i=1}^{d-1} \omega(i)\epsilon e_i $ (where $\{e_i\}_{i=1}^d$ is the standard basis for $\bR^d$).
Let $\theta_\omega=\left(-\frac{\mu_\omega}{2},\frac{\mu_\omega}{2}\right)\in\Theta_{\lambda,s}$.

By Proposition \ref{thm2:KL-symmetric},
\begin{align*}
\KL(P_{\theta_\omega},P_{\theta_\nu}) &\leq \xi^4 (1-\cos\beta_{\omega,\nu})
\end{align*}
where 
$$
\cos\beta_{\omega,\nu}=\frac{|\mu_\omega^T\mu_\nu|}{\lambda^2}=1-\frac{\rho(\omega,\nu)\epsilon^2}{2\lambda^2}
$$
and $\rho$ is the Hamming distance, so
\begin{align*}
\KL(P_{\theta_\omega},P_{\theta_\nu}) &\leq \xi^4\frac{\rho(\omega,\nu)\epsilon^2}{2\lambda^2} \\ 
&\leq \xi^4\frac{s\epsilon^2}{\lambda^2}. 
\end{align*}

By Proposition \ref{thm2:clusterloss-bnd-new}, since $\cos\beta_{\omega,\nu}\geq \frac{1}{2}$,
\begin{align*}
L_{\theta_\omega}(F_{\theta_\nu})
&\leq \frac{1}{\pi}\tan\beta_{\omega,\nu} \\
&\leq \frac{2}{\pi}\sin\beta_{\omega,\nu} \\
&\leq \frac{2\sqrt{2}}{\pi}\frac{\sqrt{s} \epsilon}{\lambda}
\end{align*}
and
\begin{align*}
L_{\theta_\omega}(F_{\theta_\nu}) &\geq 2g(\xi) \sin\beta_{\omega,\nu} \cos\beta_{\omega,\nu} \\
&\geq g(\xi) \sin\beta_{\omega,\nu} \\
&\geq \frac{g(\xi)}{\sqrt{2}} \frac{\sqrt{\rho(\omega,\nu)}\epsilon}{\lambda} 
\end{align*}
where $g(x)=\phi(x)(\phi(x)-x\Phi(-x))$.
By Lemma \ref{thm2:varshamovsparse}, there exist $\omega_0,...,\omega_M\in\Omega$ such that $\log(M+1)\geq \frac{s}{5}  \log\left(\frac{d-1}{s}\right)$ and
$$
\rho(\omega_i,\omega_j)\geq \frac{s}{2}, \;\;\forall\; 0\leq i<j\leq M.
$$
For simplicity of notation, let $\theta_i=\theta_{\omega_i}$ for all $i\in[0..M]$.
Then, for $i\neq j \in[0..M]$,
\begin{align*}
\KL(P_{\theta_i},P_{\theta_j}) &\leq \xi^4\frac{s\epsilon^2}{\lambda^2},
\end{align*}
and
\begin{align*}
L_{\theta_i}(F_{\theta_j})
&\leq \frac{2\sqrt{2}}{\pi}\frac{\sqrt{s} \epsilon}{\lambda}
\end{align*}
and
\begin{align*}
L_{\theta_i}(F_{\theta_j}) 
&\geq \frac{g(\xi)}{2} \frac{\sqrt{s}\epsilon}{\lambda}  .
\end{align*}
Define 
$$
\gamma=\frac{1}{4}(g(\xi)-\sqrt{2}\xi^2) \frac{\sqrt{s}\epsilon}{\lambda}.
$$
Then for any $i\neq j\in[0..M]$, and any $\widehat{F}$ such that $L_{\theta_i}(\widehat{F})<\gamma$,
\begin{align*}
L_{\theta_i}(F_{\theta_j}) + L_{\theta_i}(\widehat{F}) + \sqrt{\frac{\KL(P_{\theta_i},P_{\theta_j})}{2}}
&< \left( \frac{2\sqrt{2}}{\pi} + \frac{1}{4}(g(\xi)-\sqrt{2}\xi^2) + \frac{\xi^2}{\sqrt{2}}  \right)  \frac{\sqrt{s} \epsilon}{\lambda} \leq \frac{1}{2}
\end{align*}
because, for $\xi\leq 0.1$, by definition of $\epsilon$,
\begin{align*}
\left( \frac{2\sqrt{2}}{\pi} + \frac{1}{4}(g(\xi)-\sqrt{2}\xi^2) + \frac{\xi^2}{\sqrt{2}}  \right)  \frac{\sqrt{s} \epsilon}{\lambda} \leq
\frac{\sqrt{s} \epsilon}{\lambda} \leq \frac{1}{2}.
\end{align*}
So, by Proposition \ref{thm2:clusterloss-triangle},
\begin{align*}
L_{\theta_j}(\widehat{F})
&\geq L_{\theta_i}(F_{\theta_j})-L_{\theta_i}(\widehat{F}) -\sqrt{\frac{\KL(P_{\theta_i},P_{\theta_j})}{2}} 
\geq \gamma.
\end{align*}
Also,
\begin{align*}
\max_{i\in[M]} \KL(P_{\theta_i},P_{\theta_0})
&\leq \xi^4\frac{s\epsilon^2}{\lambda^2} \\
&\leq \frac{1}{18n} \log \left(\frac{d-1}{s}\right)^\frac{s}{5} \\
&\leq \frac{1}{9n} \log \left(\left(\frac{d-1}{s}\right)^\frac{s}{5}-1\right) \\
&\leq \frac{ \log M}{9 n}
\end{align*}
because, by definition of $\epsilon$,
\begin{align*}
\xi^4\frac{s\epsilon^2}{\lambda^2} 
&\leq \frac{s}{90n} \log \left(\frac{d-1}{s}\right) .
\end{align*}
So by Proposition \ref{thm2:lowerbnd-gen} and the fact that $\xi\leq 0.1$,
\begin{align*}
\inf_{\widehat{F}_n}\max_{i\in[0..M]} \bE_{\theta_i} L_{\theta_i}(\widehat{F}_n) &\geq 0.07\gamma \\
&\geq 0.07 \frac{0.1}{4} \frac{\sqrt{s}\epsilon}{\lambda} \\
&\geq  \frac{1}{600} 
\min\left\{ \sqrt{\frac{8}{45}}  \frac{\sigma^2}{\lambda^2} \sqrt{\frac{s}{n} \log \left(\frac{d-1}{s}\right) }, \frac{1}{2}  \right\} 
\end{align*}
and to complete the proof we use the fact that
\begin{align*}
\inf_{\widehat{F}_n}\sup_{\theta\in\Theta_{\lambda,s}} \bE_\theta L_\theta(\widehat{F}_n) \geq \inf_{\widehat{F}_n}\max_{i\in[0..M]} \bE_{\theta_i} L_{\theta_i}(\widehat{F}_n).
\end{align*}
\end{proof}

%
%
%

\end{document}